\newcommand{\expect}{\mathbb{E}}
\newcommand{\reals}{\mathbb{R}}
\newcommand{\tuple}[1]{\langle{#1}\rangle}
\DeclareMathOperator*{\argmax}{arg\,max}
\newcommand{\mdp}{\tuple{S,A,r,p,\gamma}}
\newcommand{\qapprox}{\tilde{q}}
\newcommand{\decoration}{\ding{91}\xspace}
\newcommand{\true}{\ensuremath{\mathsf{true}}\xspace{}}
\newcommand{\false}{\ensuremath{\mathsf{false}}\xspace{}}
\DeclareRobustCommand\onedot{\futurelet\@let@token\@onedot}
\def\@onedot{\ifx\@let@token.\else.\null\fi\xspace}
\def\ie{{i.e}\onedot}
\pgfplotsset{compat=1.10}
\newcommand{\percentiles}[7]{
\pgfplotstableread{#1}\datatable
  \addplot [name path=pluserror,draw=none,no markers,forget plot]
    table [x={#2},y expr=\thisrow{#5}] {\datatable};

  \addplot [name path=minuserror,draw=none,no markers,forget plot]
    table [x={#2},y expr=\thisrow{#3}] {\datatable};

  \addplot [forget plot,fill=#6,opacity=#7]
    fill between[on layer={},of=pluserror and minuserror];

  \addplot [#6,thick,no markers,line width=1.3pt]
    table [x={#2},y={#4}] {\datatable};
}
\newcommand{\entry}[2]
{
    \raisebox{3pt}{\tikz{\draw[#1,line width=1.3pt] (0,0) -- (0.6,0);}} #2
}
\newcommand{\entrysmall}[2]
{
    \raisebox{3pt}{\tikz{\draw[#1,line width=1.5pt] (0,0) -- (0.5,0);}} #2
}
\newtheorem{theorem}{Theorem}[section]
\newtheorem{observation}{Observation}
\newtheorem{definition}[theorem]{Definition}
\newcommand{\coffee}[0]{{\color{black}\Coffeecup}\xspace}
\newcommand{\mail}[0]{\Letter}
\newcommand{\agent}{\resizebox{4mm}{!}{\begin{tikzpicture}\node[draw, thick, shape border rotate=90, isosceles triangle, isosceles triangle apex angle=60, fill=violet!70!white, fill opacity=1.0, node distance=1cm,minimum height=1.5em] at (0,0) {};\end{tikzpicture}}\xspace}
\begin{document}

\title{Reward Machines: Exploiting Reward Function\\Structure in Reinforcement Learning}

\author{\name Rodrigo Toro Icarte \email rntoro@uc.cl \\
       \addr Pontificia Universidad Católica de Chile, Santiago, Chile\\
       Vector Institute, Toronto, ON, % M5G 1M1,
       Canada
       \AND
       \name Toryn Q. Klassen \email toryn@cs.toronto.edu \\
       \addr University of Toronto, Toronto, ON, % M5S 1A1, 
       Canada
       \\
       Vector Institute, Toronto, ON, % M5G 1M1,
       Canada
       \AND
       \name Richard Valenzano \email rick.valenzano@ryerson.ca \\
       \addr Ryerson University, Toronto, ON, %M5V 1X2, 
       Canada
       \AND
       \name Sheila A. McIlraith \email sheila@cs.toronto.edu \\
       \addr University of Toronto, Toronto, ON, %M5S 1A1, 
       Canada
       \\
       Vector Institute, Toronto, ON,  %M5G 1M1, 
       Canada
       }

% For research notes, remove the comment character in the line below.
% \researchnote

\maketitle

\begin{abstract}
Reinforcement learning (RL) methods usually treat reward functions as black boxes. As such, these methods must extensively interact with the environment in order to discover rewards and optimal policies. In most RL applications,  however,  users have to program the reward function and, hence, there is the opportunity to make the reward function visible  –  to  show  the  reward  function’s  code  to  the  RL  agent  so  it  can  exploit the function’s  internal  structure  to  learn  optimal  policies  in a more sample efficient manner. In  this  paper,  we  show  how  to accomplish this idea in two steps. First, we propose reward machines, a type of finite state machine that supports the specification of reward functions while exposing reward function  structure.   We  then  describe  different  methodologies  to  exploit  this structure to support learning, including automated reward shaping, task decomposition, and counterfactual reasoning with off-policy learning.  Experiments on tabular and continuous domains, across different tasks and RL agents,  show the benefits of exploiting reward structure with respect to sample efficiency and the quality of resultant policies.  Finally, by virtue of being a form of finite state machine, reward machines have the expressive power of a regular language and as such support loops, sequences and conditionals, as well as the expression of temporally extended properties typical of linear temporal logic and non-Markovian reward specification. 
\end{abstract}

\section{Introduction}

A standard assumption in reinforcement learning (RL) is that the agent does not have access to the environment model \shortcite{sutton1998reinforcement}. This means that it does not know the environment's transition probabilities or reward function. To learn optimal behaviour, an RL agent must therefore interact with the environment and learn from its experience. While assuming that the transition probabilities are unknown seems reasonable, there is less reason to hide the reward function from the agent. Artificial agents cannot inherently perceive reward from the environment; someone must program those reward functions. This is true even if the agent is interacting with the real world. Typically, though, a programmed reward function is given as a black box to the agent. The agent can query the function for the reward in the current situation, but does not have access to whatever structures or high-level ideas the programmer may have used in defining it. However, an agent that had access to the specification of the reward function might be able to use such information to learn optimal policies faster. We consider different ways to do so in this work.

Previous work on giving an agent knowledge about the reward function focused on defining a task specification language -- usually based on sub-goal sequences \shortcite{singh1992reinforcement,singh1992transfer} or linear temporal logic \shortcite{li2016reinforcement,littman2017environment,aamas2018lpopl,hasanbeig2018logically,camamacho2019ijcai,de2019foundations,shah2020planning}  -- and then generate a reward function towards fulfilling that specification. In this work, we instead directly tackle the problem of defining reward functions that expose structure to the agent. As such, our approach is able to reward behaviours to varying degrees in manners that cannot be expressed by previous approaches.

There are two main contributions of this work. First, we introduce a type of finite state machine, called a \emph{reward machine}, which we use in defining rewards. A reward machine allows for composing different reward functions in flexible ways, including concatenation, 
loops, and conditional rules. As an agent acts in the environment, moving from state to state, it also moves from state to state within a reward machine (as determined by high-level events detected within the environment). After every transition, the reward machine outputs the reward function the agent should use at that time. For example, we might construct a reward machine for ``\emph{delivering coffee to an office}" using two states. In the first state, the agent does not receive any rewards, but it moves to the second state whenever it gets the coffee. In the second state, the agent gets rewards after delivering the coffee. The advantage of defining rewards this way is that the agent knows that the problem consists of two stages and might use this information to speed up learning.

Our second contribution is a collection of RL methods that can exploit a reward machine's internal structure to improve sample efficiency. These methods include using the reward machine for decomposing the problem, shaping the reward functions, and using counterfactual reasoning in conjunction with off-policy learning to learn policies in a more sample efficient manner. We also discuss conditions under which these approaches are guaranteed to converge to optimal policies and empirically demonstrate the value of exploiting reward structures in discrete and continuous domains.

Note that reward functions need not be specified as a reward machine natively to benefit from the learning methods presented in this paper. Rather, functions can be specified in a diversity of languages and automatically translated to a reward machine as argued by \shortciteA{camamacho2019ijcai} and realized to a degree by \shortciteA{middleton2020icaps}. Further, reward machines can be learned from data and from demonstrations as we discuss in Section~\ref{sec:related_work}.

This paper builds upon our previous work \shortcite{icml2018rms} -- where we originally proposed reward machines and an approach, called \emph{Q-learning for reward machines (QRM)}, to exploit the structure exposed by a reward machine. This paper also covers an approach for automated reward shaping from a given reward machine that we later introduced \shortcite{camamacho2019ijcai}. Since then, we have gathered additional practical experience and theoretical understanding about reward machines that are reflected in this paper. Concretely, we provide a cleaner definition of a reward machine and propose two novel approaches to exploit its structure, called \emph{counterfactual experiences for reward machines (CRM)} and \emph{hierarchical RL for reward machines (HRM)}. We expanded the related work discussion to include recent trends in reward machine research and included new empirical results in single task, multitask, and continuous control learning problems. Finally, we have released a new implementation of our code that is fully compatible with the OpenAI Gym API \shortcite{1606.01540}. We hope that this paper and code will facilitate future research on reward machines.

\section{Reinforcement Learning}
\label{sec:preliminaries}

The RL problem consists of an agent interacting with an unknown environment. Usually, the environment is modeled as a \emph{Markov decision process (MDP)}. An MDP is a tuple $\mathcal{M}=\mdp$ where $S$ is a finite set of \emph{states}, $A$ is a finite set of \emph{actions}, $r:S \times A\times S \rightarrow \reals$ is the \emph{reward function}, $p(s_{t+1}|s_t,a_t)$ is the \emph{transition probability distribution}, and $\gamma\in(0,1]$ is the \emph{discount factor}. In some cases, a subset of the states are labelled as \emph{terminal states}.

A \emph{policy} $\pi(a|s)$ is a probability distribution over the actions $a \in A$ given a state $s \in S$. At each time step $t$, the agent is in a particular state $s_t \in S$, selects an action $a_t$ according to $\pi(\cdot|s_t)$, and executes $a_t$. The agent then receives a new state $s_{t+1} \sim p(\cdot|s_t,a_t)$ and an immediate reward $r(s_t,a_t,s_{t+1})$ from the environment. The process then repeats from $s_{t+1}$ until potentially reaching a terminal state. The agent's goal is to find an \emph{optimal policy} $\pi^*$ that maximizes the expected discounted return $G_t = \expect_{\pi}{\left[\sum_{k=0}^{\infty}\gamma^kr_{t+k} \middle| S_t=s\right]}$ when starting from any state $s \in S$ and time step $t$.

The \emph{Q-function} $q^{\pi}(s,a)$ under a policy $\pi$ is defined as the expected discounted return of taking action $a$ in state $s$ and then following policy $\pi$. It is known that every optimal policy $\pi^*$ satisfies the \emph{Bellman optimality} equations (where $q^*=q^{\pi^*}$): 
\begin{align}
q^*(s,a) =  \sum_{s'\in S}{p(s'|s,a) \left( r(s,a,s') + \gamma \max_{a' \in A} q^*(s',a') \right)} \;\;, 
\end{align}
for every state $s \in S$ and action $a \in A$. Note that, if $q^{*}$ is known, then an optimal policy can be computed by always selecting the action $a$ with the highest value of $q^*(s,a)$. 

\subsection{Tabular Q-Learning}

Tabular Q-learning \cite{watkins1992q} is a well-known approach for RL. This algorithm works by using the agent's experience to estimate the optimal Q-function. We denote this Q-value estimate as $\qapprox(s,a)$. On every iteration, the agent observes the current state $s$ and chooses an action $a$ according to some exploratory policy. One common exploratory policy is the $\epsilon$-greedy policy, which selects a random action with probability $\epsilon$, and $\argmax_a{\qapprox(s,a)}$ with probability $1-\epsilon$. Given the resulting state $s'$ and immediate reward $r(s,a,s')$, this experience is used to update $\qapprox(s,a)$ as follows:
\begin{align}
\qapprox(s,a) \xleftarrow{\alpha} r(s,a,s') + \gamma \max_{a'}{\qapprox(s',a')} \;\;,
\end{align}
where $\alpha$ is an hyperparameter called the \emph{learning rate}, and we use $x \xleftarrow{\alpha} y$ as shorthand notation for $x \leftarrow x + \alpha \cdot (y - x)$. Note that $\qapprox(s,a) \xleftarrow{\alpha} r(s,a,s')$ when $s'$ is a terminal state.

Tabular Q-learning is guaranteed to converge to an optimal policy in the limit as long as each state-action pair is visited infinitely often. This algorithm is an \emph{off-policy} learning method since it can learn from the experience generated by any policy. Unfortunately, tabular Q-learning is impractical when solving problems with large state spaces. In such cases, \emph{function approximation} methods like DQN are often used. 

\subsection{Deep Q-Networks (DQN)}

\emph{Deep Q-Network (DQN)}, proposed by \shortciteA{mnih2015human}, is a method which approximates the Q-function with an estimate $\qapprox_\theta(s,a)$ using a deep neural network with parameters $\theta$.
To train the network, mini-batches of experiences $(s,a,r,s')$ are randomly sampled from an \emph{experience replay} buffer and used to minimize the square error between $\qapprox_\theta(s,a)$ and the Bellman estimate $r + \gamma \max_{a'}{\qapprox_{\theta'}(s',a')}$. The updates are made with respect to a \emph{target network} with parameters $\theta'$. The parameters $\theta'$ are held fixed when minimizing the square error, but updated to $\theta$ after a certain number of training updates. The role of the target network is to stabilize learning. DQN inherits the off-policy behaviour from tabular Q-learning, but is no longer guaranteed to converge to an optimal policy.

Since its original publication, several improvements have been proposed to DQN. We consider one of them in this paper: \emph{Double DQN} \shortcite{van2016deep}. Double DQN uses two neural networks, parameterized by $\theta$ and $\theta'$, to decouple action selection from value estimation, and thereby decrease the overestimation bias that DQN is known to suffer from. As such, double DQN usually outperforms DQN while preserving its off-policy nature.

\subsection{Deep Deterministic Policy Gradient (DDPG)}

DQN cannot solve continuous control problems because DQN's network has one output unit per possible action and the space of possible actions is infinite in continuous control problems. For those cases, actor-critic approaches such as \emph{Deep Deterministic Policy Gradient (DDPG)} \shortcite{lillicrap2015continuous} are preferred. DDPG is an off-policy actor-critic approach that also uses neural networks for approximating the Q-value $\qapprox_\theta(s,a)$. However, in this case 
the action can take continuous values. To decide which action to take in a given state $s$, an actor network $\pi_\mu(s)$ is learned. The actor network receives the current state and outputs a (possibly continuous) action to execute in the environment. 

Training $\qapprox_\theta(s,a)$ is done by minimizing the Bellman error and letting the actor policy select the next action. Given a set of experiences $(s,a,r,s')$ sampled from the experience replay buffer, $\theta$ is updated towards minimizing the square error between $\qapprox_\theta(s,a)$ and $r + \gamma \qapprox_{\theta'}(s',\pi_{\mu'}(s'))$, where $\theta'$ and $\mu'$ are the parameters of target networks for the Q-value estimate and the actor policy. Training the actor policy $\pi_\mu(s)$ is done by moving its output towards $\argmax_a \qapprox_\theta(s,a)$. To do so, $\pi_\mu(s)$ is updated using the expected gradient of $\qapprox_\theta(s,a)$ when $a=\pi_\mu(s)$: $\nabla_{\mu}[\qapprox_\theta(s,a)|s=s_t,a=\pi_\mu(s_t)]$. This gradient is approximated using sampled mini-batches from the experience replay buffer. Finally, the target network's parameters are periodically updated as follows: $\theta' \xleftarrow{\tau} \theta$ and $\mu' \xleftarrow{\tau} \mu$, where $\tau \in (0,1)$.

\section{Reward Machines}
\label{sec:reward-machines}

In this section, we introduce a novel type of finite state machine, called a \emph{reward machine (RM)}. An RM takes abstracted descriptions of the environment as input, and outputs reward functions. The intuition is that the agent will be rewarded by different reward functions at different times, depending on the state in the RM. Hence, an RM can be used to define temporally extended (and as such, non-Markovian) tasks and behaviours. We then show that an RM can be interpreted as specifying a single reward function over a larger state space, and consider types of reward functions that can be expressed using RMs.

As a running example, consider the \emph{office gridworld} presented in Figure~\ref{fig:officeleft}. In this environment, the agent can move in the four cardinal directions. It picks up coffee if at location \coffee, picks up the mail if at location \mail, and delivers the coffee and mail to an office if at location $o$. The building contains decorations \decoration, which the agent breaks if it steps on them. Finally, there are four marked locations: $A$, $B$, $C$, and $D$. In the rest of this section, we will show how to define tasks for an RL agent in this environment using RMs.

A reward machine is defined over a set of propositional symbols $\mathcal{P}$. Intuitively, $\mathcal{P}$ is a set of relevant high-level events from the environment that the agent can detect. In the office gridworld, we can define $\mathcal{P}=\{\text{\coffee},\text{\mail},o,\text{\decoration},A,B,C,D\}$, where event $e\in\mathcal{P}$ occurs when the agent is at location $e$. We can now formally define a reward machine as follows:
%
% RM
\begin{definition}[reward machine]Given a set of propositional symbols $\mathcal{P}$, a set of (environment) states $S$, and a set of actions $A$, a reward machine (RM) is a tuple $\mathcal{R}_{\mathcal{P}SA}=\tuple{U,u_0, F,\delta_u,\delta_r}$ where $U$ is a finite set of states, $u_0 \in U$ is an initial state, $F$ is a finite set of terminal states (where $U \cap F = \emptyset$), $\delta_u$ is the state-transition function, $\delta_u: U \times 2^\mathcal{P} \rightarrow U \cup F$, and $\delta_r$ is the state-reward function, $\delta_r: U \rightarrow [S \times A\times S \rightarrow \reals]$.
\end{definition}

% Describe how an RM works (transitions, one state, one out reward...)
A reward machine $\mathcal{R}_{\mathcal{P}SA}$ starts in state $u_0$, and at each subsequent time is in some state $u_t \in U \cup F$. At every step $t$, the machine receives as input a \emph{truth assignment} $\sigma_t$, which is a set that contains exactly those propositions in $\mathcal{P}$ that are currently true in the environment. For example, in the office gridworld, $\sigma_t=\{e\}$ if the agent is at a location marked as $e$. Then the machine moves to the next state $u_{t+1}=\delta_u(u_t,\sigma_t)$ according to the state-transition function, and outputs a reward function $r_t=\delta_r(u_t)$ according to the state-reward function. This process repeats until the machine reaches a terminal state. Note that reward machines can model never-ending tasks by defining $F=\emptyset$.

\begin{figure}
    \centering
    \begin{subfigure}[t]{.5\columnwidth}
    \centering
    \begin{tikzpicture}[scale=0.54]
    % grid
    \draw[step=1cm,gray] (0,0) grid (12, 9);
    % walls
    \draw[ultra thick] (3,0) -- (3,1);
    \draw[ultra thick] (3,2) -- (3,7);
    \draw[ultra thick] (3,8) -- (3,9);
    \draw[ultra thick] (6,0) -- (6,1);
    \draw[ultra thick] (6,2) -- (6,7);
    \draw[ultra thick] (6,8) -- (6,9);
    \draw[ultra thick] (9,0) -- (9,1);
    \draw[ultra thick] (9,2) -- (9,7);
    \draw[ultra thick] (9,8) -- (9,9);
    
    \draw[ultra thick] (0,3) -- (1,3);
    \draw[ultra thick] (2,3) -- (10,3);
    \draw[ultra thick] (11,3) -- (12,3);
    \draw[ultra thick] (0,6) -- (1,6);
    \draw[ultra thick] (2,6) -- (4,6);
    \draw[ultra thick] (5,6) -- (7,6);
    \draw[ultra thick] (8,6) -- (10,6);
    \draw[ultra thick] (11,6) -- (12,6);
    % Patrol positions
    \node at (1.5,1.5) {A};
    \node at (10.5,1.5) {D};
    \node at (10.5,7.5) {C};
    \node at (1.5,7.5) {B};
    % Coffee machines
    \node at (3.5,6.5) {\coffee};
    \node at (8.5,2.5) {\coffee};
    % Mail boxes
    \node at (7.5,4.5) {\Letter};
    % Plants
    \node at (1.5,4.5) {\decoration};
    \node at (4.5,1.5) {\decoration};
    \node at (7.5,1.5) {\decoration};
    \node at (4.5,7.5) {\decoration};
    \node at (7.5,7.5) {\decoration};
    \node at (10.5,4.5) {\decoration};

    % Main office
    \node at (4.5,4.5) {o};
    % agent
    \node at (2.5,1.5) {\agent};
    % local optimal
    \draw[ultra thick, ->, >=stealth, draw=red!70!white] (2.8,1.5) -- (3.5,1.5) -- (3.5,2.5) -- (5.5,2.5) -- (5.5,1.5) -- (6.5,1.5) -- (6.5,2.5) -- (8.2,2.5);
    \draw[ultra thick, ->, >=stealth, draw=red!70!white] (8.5,2.2) -- (8.5,1.5) -- (9.5,1.5) -- (9.5,2.5) -- (10.5,2.5) -- (10.5,3.5) -- (9.5,3.5) -- (9.5,5.5) -- (10.5,5.5) -- (10.5,6.5) -- (9.5,6.5) -- (9.5,7.5) -- (8.5,7.5) -- (8.5,6.5) -- (6.5,6.5) -- (6.5,7.5) -- (5.5,7.5)-- (5.5,6.5) -- (4.7,6.5) -- (4.7,4.8);
    % global optimal
    \draw[ultra thick, ->, >=stealth, draw=blue!70!white] (2.5,1.9) -- (2.5,2.5) -- (1.5,2.5) -- (1.5,3.5) -- (2.5,3.5) -- (2.5,5.5) -- (1.5,5.5) -- (1.5,6.5) -- (2.5,6.5) -- (2.5,7.5) -- (3.5,7.5) -- (3.5,6.8);
    \draw[ultra thick, ->, >=stealth, draw=blue!70!white] (3.8,6.5) -- (4.3,6.5) -- (4.3,4.8);
    
    % box
    \draw[ultra thick] (0,0) rectangle (12,9);
\end{tikzpicture}
    \subcaption{The office gridworld
    }
    \label{fig:officeleft}
    \end{subfigure}
    \begin{subfigure}[t]{.47\columnwidth}
    \centering
    \begin{tikzpicture}[node distance=2cm,on grid,every initial by arrow/.style={ultra thick,->, >=stealth}]
  \node[ultra thick,state,initial above] (q_0) at (0,0) {$u_0$};
  \node[ultra thick,state]         (q_1) at (0,-1.8)  {$u_1$};
  \node[circle,draw=black,minimum size=0.26cm,inner sep=0pt,fill=black] (t1) at (2,0)  {};
  \node[circle,draw=black,minimum size=0.26cm,inner sep=0pt,fill=black] (t2) at (2,-1.8)  {};
  \node[circle,draw=black,minimum size=0.26cm,inner sep=0pt,fill=black] (t3) at (0,-3.3)  {};
  \node[text width=1cm] at (.75,-3.3) {$t$};
  
  \path[ultra thick,->, >=stealth] (q_0) edge node [left] {$\tuple{\text{\coffee} \wedge \neg \text{\decoration},0}$} (q_1);
  \path[ultra thick,->, >=stealth] (q_0) edge [loop left] node {$\tuple{\neg \text{\coffee} \wedge \neg \text{\decoration},0}$} ();
  \path[ultra thick,->, >=stealth] (q_0) edge node [above] {$\tuple{\text{\decoration},0}$} (t1);
  \path[ultra thick,->, >=stealth] (q_1) edge [loop left] node {$\tuple{\neg o \wedge \neg \text{\decoration},0}$} ();
  \path[ultra thick,->, >=stealth] (q_1) edge node [above] {$\tuple{\text{\decoration},0}$} (t2);
  \path[ultra thick,->, >=stealth] (q_1) edge node [left]{$\tuple{o \wedge \neg \text{\decoration},1}$} (t3);
\end{tikzpicture}
    \subcaption{A simple reward machine}
    \label{fig:officeright}
    \end{subfigure}
    \caption{An example environment and one reward machine for it}
    \label{fig:office}
\end{figure}
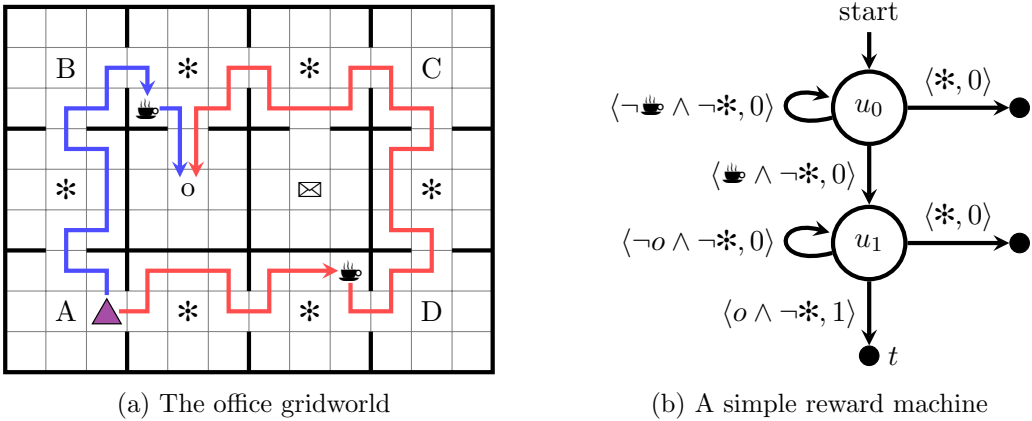

In our examples, we will be considering \emph{simple} reward machines (which we later prove are a particular case of reward machines), defined as follows:
\begin{definition}[simple reward machine]
Given a set of propositional symbols $\mathcal{P}$, a simple reward machine is a tuple $\mathcal{R}_{\mathcal{P}}=\tuple{U,u_0, F,\delta_u,\delta_r}$ where $U$, $u_0$, $F$, and $\delta_u$ are defined as in a standard reward machine,
but the state-reward function  $\delta_r: U \times 2^\mathcal{P} \rightarrow \reals$ depends on $2^\mathcal{P}$ 
and returns a number instead of a function.
\end{definition}

Figure~\ref{fig:officeright} shows a graphical representation of a simple reward machine for the office gridworld. Every node in the graph is a state of the machine, $u_0$ being the initial state. Terminal states are represented by black circles. Each edge is labelled by a tuple $\tuple{\varphi,c}$, where $\varphi$ is a propositional logic formula over $\mathcal{P}$ and $c$ is a real number. An edge between $u_i$ and $u_j$ labelled by $\tuple{\varphi,c}$ means that $\delta_u(u_i,\sigma)=u_j$ whenever $\sigma \models \varphi$ (\ie, the truth assignment $\sigma$ satisfies $\varphi$), and $\delta_r(u_i,\sigma)$ returns a reward of $c$. For instance, the edge between the state $u_1$ and the terminal state $t$ labelled by $\tuple{o \wedge \neg \text{\decoration},1}$ means that the machine will transition from $u_1$ to $t$ if the proposition $o$ becomes $\true$ and \decoration is $\false$, and output a reward of one. Intuitively, this machine outputs a reward of one if and only if the agent delivers coffee to the office while not breaking any of the decorations. The blue path in Figure~\ref{fig:officeleft} shows an optimal way to complete this task, and the red path shows a sub-optimal way.

% Connection with the MDP
Now that we have defined a reward machine, we can use it to reward an agent. 
To do so, we require a \emph{labelling function} $L:S \times A \times S \to 2^\mathcal{P}$. $L$ assigns truth values to symbols in $\mathcal{P}$ given an environment experience $e = (s,a,s')$, where $s'$ is the resulting state after executing action $a$ from state $s$. 
The labelling function plays the key role of producing the truth assignments that are input to the reward machine, as discussed below.

% MDP with RFSM
\begin{definition}
A Markov decision process with a reward machine (\nobreak{MDPRM}) is a tuple $\mathcal{T}=\tuple{S,A,p,\gamma,\mathcal{P},L,U,u_0,F,\delta_u,\delta_r}$, where $S,A,p,$ and $\gamma$ are defined as in an MDP, $\mathcal{P}$ is a set of propositional symbols, $L$ is a labelling function $L:S\times A \times S \to 2^\mathcal{P}$, and $U,u_0,F,\delta_u,$ and $\delta_r$ are defined as in a reward machine.
\end{definition}
% Formally define the non-markovian reward function produced by the RM
The RM in an MDPRM $\mathcal{T}$ is updated at every step of the agent in the environment. If the RM is in state $u$ and the agent performs action $a$ to move from state $s$ to $s'$ in the MDP, then the RM moves to state $u'=\delta_u(u,L(s,a,s'))$ and the agent receives a reward of $r(s,a,s')$, where $r=\delta_r(u)$. For a simple reward machine, the reward is $\delta_r(u,L(s,a,s'))$.

In the running example (Figure~\ref{fig:office}), for instance, the reward machine starts in $u_0$ and stays there until the agent reaches a location marked with \decoration or \coffee. If \decoration is reached (\ie, a decoration is broken), the machine moves to a terminal state, ending the episode and providing no reward to the agent. In contrast, if \coffee is reached, the machine moves to $u_1$. While the machine is in $u_1$, two outcomes might occur. The agent might reach a \decoration, moving the machine to a terminal state and returning no reward, or it might reach the office $o$, also moving the machine to a terminal state but giving the agent a reward of 1. 

% Cross-product MDP
Note that the rewards the agent gets may be non-Markovian relative to the environment (the states of $S$), though they are Markovian relative to the elements in $S\times U$. As such, when making decisions on what action to take in an MDPRM, the agent should consider not just the current environment state $s_t\in S$ but also the current RM state $u_t\in U$. 

A policy $\pi(a|\tuple{s,u})$ for an MDPRM is a probability distribution over actions $a\in A$ given a pair $\tuple{s,u}\in S\times U$.
We can think of an MDPRM as defining an MDP with state set $S\times U$, as described in the following observation.

\begin{observation}
    \label{obs:cross-mdp}
    Given an MDPRM $\mathcal{T}=\tuple{S, A, p, \gamma, \mathcal{P}, L, U, u_0, F, \delta_u, \delta_r}$, let $\mathcal{M}_\mathcal{T}$ 
    be the MDP $\tuple{S',A',r',p',\gamma'}$ defined such that $S' = S \times (U\cup F)$, $A' = A$, $\gamma'=\gamma$, 
    \begin{align*}
    p'(\tuple{s',u'}|\tuple{s,u},a)=\begin{cases} p(s'|s,a) &\text{ if } u \in F \text{ and } u' = u\\p(s'|s,a) &\text{ if } u \in U \text{ and } u' = \delta_u(u,L(s,a,s'))\\0&\text{ otherwise}\end{cases}
    \end{align*}
    and $r'(\tuple{s,u},a,\tuple{s',u'}) = \delta_r(u)(s,a,s')$ if $u \not\in F$ (zero otherwise). Then any policy for $\mathcal{M}_\mathcal{T}$ achieves the same expected discounted return in $\mathcal{T}$, and vice versa.
\end{observation}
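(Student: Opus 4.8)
The plan is to exhibit, for any fixed policy $\pi$ (which in both $\mathcal{T}$ and $\mathcal{M}_\mathcal{T}$ is a map from $S\times(U\cup F)$ to distributions over $A$, so a single object serves for both models), a measure-preserving correspondence between trajectories that also preserves per-step discounted rewards. First I would recall that in $\mathcal{T}$ the RM-state sequence is not random but a deterministic function of the environment trajectory: given $s_0,a_0,s_1,a_1,\dots$ and the initial RM state $u_0$, we have $u_{t+1}=\delta_u(u_t,L(s_t,a_t,s_{t+1}))$ as long as $u_t\in U$, and $u_{t+1}=u_t$ once $u_t\in F$ (the episode having ended). This defines a bijection $\Psi$ between environment trajectories of $\mathcal{T}$ and those product-MDP trajectories $\langle s_0,u_0\rangle,a_0,\langle s_1,u_1\rangle,\dots$ whose second coordinates obey this same recursion; the remaining (``inconsistent'') trajectories of $\mathcal{M}_\mathcal{T}$ receive probability zero, because the ``otherwise'' case in the definition of $p'$ assigns probability $0$ to any transition whose target RM-state differs from $\delta_u(u,L(s,a,s'))$.

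Next I would show that $\Psi$ pushes the trajectory distribution induced by $\pi$ in $\mathcal{T}$ (started from $\langle s_0,u_0\rangle$) forward to the one induced by $\pi$ in $\mathcal{M}_\mathcal{T}$ (started from $\langle s_0,u_0\rangle$). This is a routine induction on trajectory length: the one-step law in $\mathcal{M}_\mathcal{T}$ factors as $\pi(a\mid\langle s,u\rangle)\,p'(\langle s',u'\rangle\mid\langle s,u\rangle,a)$, and on consistent trajectories $p'(\langle s',u'\rangle\mid\langle s,u\rangle,a)=p(s'\mid s,a)$ by the first two cases of the definition of $p'$; this is exactly the one-step law in $\mathcal{T}$ together with the (deterministic) RM update. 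Multiplying one-step factors along any finite prefix gives equal probabilities for corresponding prefixes, hence equal trajectory distributions.

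Then I would match rewards step by step. At a step $t$ with $u_t\in U$, the reward collected in $\mathcal{T}$ is $\delta_r(u_t)(s_t,a_t,s_{t+1})$, which is precisely $r'(\langle s_t,u_t\rangle,a_t,\langle s_{t+1},u_{t+1}\rangle)$ by the definition of $r'$; once $u_t\in F$, the episode in $\mathcal{T}$ has terminated and contributes nothing further, while $r'$ is defined to be zero on $S\times F$, so again the contributions agree. Since $\gamma'=\gamma$, the discounted return $\sum_{t\ge0}\gamma^t r_t$ is the same function of corresponding trajectories. Combining this with the previous step, $\expect_\pi\!\left[\sum_t\gamma^t r_t\right]$ is the integral of the same (discounted-return) function against the same distribution in both models, so the two expected returns coincide; running the argument with the roles of the two models swapped (still using the same $\pi$) gives the ``vice versa'' direction, and quantifying over the starting pair $\langle s_0,u_0\rangle$ gives the statement from every state.

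The only delicate points are bookkeeping ones: treating the RM state as a deterministic latent variable of the environment trajectory (so the ``extra randomness'' in $\mathcal{M}_\mathcal{T}$ is illusory), and reconciling ``the episode ends'' in $\mathcal{T}$ with ``absorb in a zero-reward copy of the state'' in $\mathcal{M}_\mathcal{T}$; both are handled by observing that post-termination steps are reward-free and hence invisible to the discounted return. Everything else is the standard unrolling of an MDP's trajectory distribution into a product of one-step transition and policy factors.
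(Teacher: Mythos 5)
Your proof is correct, and it fills in an argument the paper itself leaves implicit: the statement is presented as an Observation with no accompanying proof. Your trajectory-level correspondence (deterministic RM-state unrolling, the ``otherwise'' case killing inconsistent trajectories, per-step reward matching, and the zero-reward absorption handling of $S\times F$) is exactly the standard argument the authors are relying on, so there is nothing to flag.
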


% Mealy machines are reward machines
We can now see why simple reward machines are a particular case of reward machines. Basically, for any labelling function, we can set the Markovian reward functions in a reward machine to mimic the reward given by a simple reward machine, as shown below.
\begin{observation}
    \label{prop:srm-are-rm}
    Given any labelling function $L:S \times A \times S \to 2^\mathcal{P}$, a simple reward machine $\mathcal{R}_{\mathcal{P}}=\tuple{U,u_0, F,\delta_u,\delta_r}$ is equivalent to a reward machine $\mathcal{R}_{\mathcal{P}SA}=\tuple{U,u_0, F,\delta_u,\delta_r'}$ where $\delta_r'(u)(s,a,s') = \delta_r(u,L(s,a,s'))$ for all $u \in U$, $s \in S$, $a \in A$, and $s' \in S$. That is, both $\mathcal{R}_{\mathcal{P}}$ and $\mathcal{R}_{\mathcal{P}SA}$ will be at the same RM state and output the same reward for every possible sequence of environment state-action pairs.
\end{observation}

% Small discussion about expressivity
Finally, we note that RMs can express any Markovian and some non-Markovian reward functions. In particular, given a set of states $S$ and actions $A$, the following properties hold: 
\begin{enumerate}
    \item Any Markovian reward function $R:S\times A \times S \rightarrow \reals$ can be expressed by a reward machine with one state.
    \item A non-Markovian reward function $R:(S \times A)^* \rightarrow \reals$  can be expressed using a reward machine if the reward depends on the state and action history $(S \times A)^*$ only to the extent of distinguishing among those histories that are described by different elements of a finite set of regular expressions over elements in $S \times A \times S$.
    \item Non-Markovian reward functions $R:(S \times A)^* \rightarrow \reals$ that distinguish between histories via properties not expressible as regular expressions over elements in $S \times A \times S$ (such as counting how many times a state has been reached) cannot be expressed using a reward machine.
\end{enumerate}

In other words, reward machines can return different rewards for the same transition $(s,a,s')$ in the environment, for different histories of states and actions seen by the agent, as long as the history can be represented by a regular language. This holds because regular languages are exactly those that are accepted by deterministic finite state automata \cite{hopcroftautomatatheory}. As such, RMs can specify structure in the reward function that includes loops, conditional statements, and sequence interleaving, as well as behavioral constraints, such as safety constraints. For example, the first task used in Section \ref{sec:res-cheetah} has loops, task 4 in Table \ref{tab:wwt} involves sequence interleaving, and task 1 in Table \ref{tab:owt} includes a safety constraint. To allow for structure beyond what is expressible by regular languages requires that the agent has access to an external memory, which we leave as future work.

\paragraph{Relationship to Mealy and Moore Machines}
A reader familiar with automata theory will recognize that, except for the terminal states, reward machines are Moore machines (with an output alphabet of reward functions) and simple reward machines are Mealy machines (with an output alphabet of numbers). As such, it seems reasonable to consider a more general form of Mealy reward machine where reward functions (and not just numbers) are output by each RM transition. That was actually our original definition of a reward machine \cite{icml2018rms,camamacho2019ijcai}. However, following the same argument from Observation~\ref{prop:srm-are-rm}, we can see that any such Mealy reward machine can be encoded by a (Moore) reward machine using fewer reward functions (one per node instead of per edge). %($|U|$ vs $|U|^2$). 
For reward machines that just output numbers, on the other hand, Mealy machines have the advantage of, in some cases, requiring fewer states to represent the same reward signal (also, their first output can depend on the input, unlike for a Moore machine).

\section{Exploiting the RM Structure in Reinforcement Learning}
\label{sec:rm4rl}

In this section, we describe a collection of RL approaches to learn policies for MDPRMs. We begin by describing a baseline that uses %pure 
standard RL. We then discuss three approaches that exploit the information in the reward machine to facilitate learning. In all these cases, we include pseudo-code for their tabular implementation, describe how to extend them to work with deep RL, and discuss their convergence guarantees.

\subsection{The Cross-Product Baseline}

\begin{algorithm}[tb]
   \caption{The cross-product baseline using tabular Q-learning.}
   \label{alg:ql}
    \begin{algorithmic}[1]
    \STATE {\bfseries Input:} $S$, $A$, $\gamma \in (0,1]$, $\alpha \in (0,1]$, $\epsilon \in (0,1]$, $\mathcal{P}$, $L$, $U$, $u_0$, $F$, $\delta_u$, $\delta_r$.
    \STATE For all $s \in S$, $u \in U$, and $a \in A$, initialize $\qapprox(s,u,a)$ arbitrarily
    \FOR{$l\leftarrow 0$ \TO num\_episodes} 
        \STATE Initialize $u \leftarrow u_0$ and $s \leftarrow$ EnvInitialState()
        \WHILE{$s$ is not terminal \AND $u \not\in F$} 
            \STATE Choose action $a$ from $(s,u)$ using policy derived from $\qapprox$ (e.g., $\epsilon$-greedy)
            \STATE Take action $a$ and observe the next state $s'$
            \STATE Compute the reward $r \leftarrow \delta_r(u)(s,a,s')$ and next RM state $u' \leftarrow \delta_u(u,L(s,a,s'))$
            \IF{$s'$ is terminal \OR $u' \in F$} 
                \STATE $\qapprox(s,u,a) \xleftarrow{\alpha} r$
            \ELSE
                \STATE $\qapprox(s,u,a) \xleftarrow{\alpha} r + \gamma \max_{a'\in A}{\qapprox(s',u',a')}$
            \ENDIF
            \STATE Update $s \leftarrow s'$ and $u \leftarrow u'$
        \ENDWHILE
    \ENDFOR
\end{algorithmic}
\end{algorithm}

As discussed in Observation~\ref{obs:cross-mdp}, MDPRMs are regular MDPs when considering the cross-product between the environment states $S$ and the reward machine states $U$. As such, any RL algorithm can be used to learn a policy $\pi(a|s,u)$ -- including tabular RL methods and deep RL methods. If the RL algorithm is guaranteed to converge to optimal policies, then it will also find optimal policies for the MDPRM.

As a concrete example, Algorithm~\ref{alg:ql} shows pseudo-code for solving MDPRMs using tabular Q-learning. The only difference with standard Q-learning is that it also keeps track of the current RM state $u$ and learns Q-values over the cross-product $\qapprox(s,u,a)$. This allows the agent to consider the current environment state $s$ and RM state $u$ when selecting the next action $a$. 
Given the current experience $\tuple{s,u,a,r,s',u'}$, where $\tuple{s',u'}$ is the cross-product state reached after executing action $a$ in state $\tuple{s,u}$ and receiving a reward $r$, the Q-value $\qapprox(s,u,a)$ is updated as follows: $\qapprox(s,u,a) \xleftarrow{\alpha} r + \gamma \max_{a'}{\qapprox(s',u',a')}$. 

While this method has the advantage of allowing for the use of any RL method to solve MDPRMs, it does not exploit the information exposed by the RM. Below, we discuss different approaches that make use of such information to learn policies for MDPRMs faster.

\subsection{Counterfactual Experiences for Reward Machines (CRM)}
\label{sec:crm}

% General description
Our first method to exploit the information from the reward machine is called \emph{counterfactual experiences for reward machines (CRM)}. This approach also learns policies over the cross-product $\pi(a|s,u)$, but
uses counterfactual reasoning to generate \textit{synthetic} experiences. These experiences can then be used by an off-policy learning method, such as tabular Q-learning, DQN, or DDPG, to learn a policy $\pi(a|s,u)$ faster.

\begin{algorithm}[tb]
   \caption{Tabular Q-learning with counterfactual experiences for RMs (CRM).}
   \label{alg:crm}
    \begin{algorithmic}[1]
    \STATE {\bfseries Input:} $S$, $A$, $\gamma \in (0,1]$, $\alpha \in (0,1]$, $\epsilon \in (0,1]$, $\mathcal{P}$, $L$, $U$, $u_0$, $F$, $\delta_u$, $\delta_r$.
    \STATE For all $s \in S$, $u \in U$, and $a \in A$, initialize $\qapprox(s,u,a)$ arbitrarily
    \FOR{$l\leftarrow 0$ \TO num\_episodes} 
        \STATE Initialize $u \leftarrow u_0$ and $s \leftarrow$ EnvInitialState()
        \WHILE{$s$ is not terminal \AND $u \not\in F$} 
            \STATE Choose action $a$ from $(s,u)$ using policy derived from $\qapprox$ (e.g., $\epsilon$-greedy)
            \STATE Take action $a$ and observe the next state $s'$
            \STATE Compute the reward $r \leftarrow \delta_r(u)(s,a,s')$ and next RM state $u' \leftarrow \delta_u(u,L(s,a,s'))$
            \STATE Set experience $\leftarrow \{\tuple{s,\bar{u},a,\delta_r(\bar{u})(s,a,s'),s',\delta_u(\bar{u},L(s,a,s'))} \ \ |\ \  \forall \bar{u} \in U\}$            \FOR{$\tuple{s,\bar{u},a,\bar{r},s',\bar{u}'} \in $ experience}
                \IF{$s'$ is terminal \OR $\bar{u}' \in F$} 
                    \STATE $\qapprox(s,\bar{u},a) \xleftarrow{\alpha} \bar{r}$
                \ELSE
                    \STATE $\qapprox(s,\bar{u},a) \xleftarrow{\alpha} \bar{r} + \gamma \max_{a'\in A}{\qapprox(s',\bar{u}',a')}$
                \ENDIF
            \ENDFOR
            \STATE Update $s \leftarrow s'$ and $u \leftarrow u'$
        \ENDWHILE
    \ENDFOR
\end{algorithmic}
\end{algorithm}

% V2
Suppose that the agent performed action $a$ when in the cross-product state $\tuple{s,u}$ and then reached state $\tuple{s',u'}$ while receiving a reward of $r$. For every RM state $\bar{u} \in U$, we know that if the agent had been at $\bar{u}$ when $a$ caused the transition from $s$ to $s'$, then the next RM state would have been $\bar{u}' = \delta_u(\bar{u},L(s,a,s'))$ and the agent would have received a reward of $\bar{r} = \delta_r(\bar{u})(s,a,s')$. This is the key idea behind CRM. What CRM does is that, after every action, instead of feeding only the actual experience $\tuple{s,u,a,r,s',u'}$ to the RL agent, it feeds one experience per RM state, i.e., the following set of experiences: 
\begin{align}
    \{\tuple{s,\bar{u},a,\delta_r(\bar{u})(s,a,s'),s',\delta_u(\bar{u},L(s,a,s'))} \ |\  \forall \bar{u} \in U\}.
\end{align} 

Notice that incorporating CRM into an off-policy learning method is trivial. For instance, Algorithm~\ref{alg:crm} shows that CRM can be included in tabular Q-learning by adding two lines of code (lines 9 and 10). CRM can also be easily adapted to other off-policy methods by adjusting how the generated experiences are then used for learning. For example, for both DQN and DDPG, the counterfactual experiences would simply be added to the experience replay buffer and then used for learning as is typically done with these algorithms.

% CRM intuition and good results
In Section \ref{sec:results}, we will show empirically that CRM can be very effective at learning policies for MDPRMs. The intuition behind its good performance is that CRM allows the agent to reuse experience to learn the right behaviour at different RM states. Consider, for instance, the RM from Figure~\ref{fig:officeright}, which rewards the agent for delivering a coffee to the office.
Suppose that the agent gets to the office before getting the coffee. The cross-product baseline would use that experience to learn that going to the office is not an effective way to get coffee. In contrast, CRM would also use that experience to learn how to get to the office. As such, CRM will already have made progress towards learning a policy that will finish the task as soon as it finds the coffee, since it will already have experience about how to get to the office. Importantly, CRM also converges to optimal policies when combined with Q-learning:

\begin{theorem}
    \label{the:convergence}
    Given an MDPRM $\mathcal{T}=\tuple{S, A, p, \gamma, \mathcal{P}, L, U, u_0, F, \delta_u, \delta_r}$, CRM with tabular Q-learning converges to an optimal policy for $\mathcal{T}$ in the limit (as long as every state-action pair is visited infinitely often).
\end{theorem}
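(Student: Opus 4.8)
The plan is to reduce the claim to the convergence of \emph{asynchronous} tabular q-learning on the cross-product MDP $\mathcal{M}_\mathcal{T}$ of Observation~\ref{obs:cross-mdp}. The key point is that CRM, run on the MDPRM $\mathcal{T}$, is \emph{exactly} asynchronous q-learning on $\mathcal{M}_\mathcal{T}$: at each real environment step it performs $|U|$ legitimate q-learning updates on $\mathcal{M}_\mathcal{T}$, one for every cross-product state $\tuple{s,\bar u}$ that shares the current environment component $s$. Once this identification is made, the result follows from the standard convergence guarantee for q-learning \shortcite{watkins1992q} in its asynchronous form.

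First I would make the identification precise. Consider a real transition in which, from environment state $s$, the agent takes action $a$ and observes $s' \sim p(\cdot\mid s,a)$. For each $\bar u \in U$, CRM forms the tuple $\tuple{s,\bar u,a,\bar r,s',\bar u'}$ with $\bar u' = \delta_u(\bar u, L(s,a,s'))$ and $\bar r = \delta_r(\bar u)(s,a,s')$. Since $L$ and $\delta_u$ are deterministic and $s'$ has marginal $p(\cdot\mid s,a)$, the pair $\tuple{s',\bar u'}$ is distributed exactly according to $p'(\cdot\mid \tuple{s,\bar u},a)$ as defined in Observation~\ref{obs:cross-mdp}, and $\bar r = r'(\tuple{s,\bar u},a,\tuple{s',\bar u'})$. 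Hence, identifying the table entry $\qapprox(s,u,a)$ with the q-value of $(\tuple{s,u},a)$ in $\mathcal{M}_\mathcal{T}$, lines 11--15 of Algorithm~\ref{alg:crm} are precisely the q-learning update for $\mathcal{M}_\mathcal{T}$ at $(\tuple{s,\bar u},a)$, with the bootstrap term correctly dropped exactly when $\tuple{s',\bar u'}$ is terminal in $\mathcal{M}_\mathcal{T}$ (\ie\ $s'$ terminal or $\bar u'\in F$).

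Next I would check the hypotheses of the asynchronous q-learning convergence theorem: (i) rewards are bounded ($S$, $A$, and $U$ are finite and each $\delta_r(u)$ is bounded); (ii) the step sizes satisfy the usual Robbins--Monro conditions; and (iii) every state--action pair of $\mathcal{M}_\mathcal{T}$ is updated infinitely often. Point (iii) is where the structure of CRM is used: on any visit to $(s,a)$, CRM updates $(\tuple{s,\bar u},a)$ for \emph{every} $\bar u\in U$, not merely for the RM state actually occupied, so the assumption that every environment pair $(s,a)$ is visited infinitely often immediately yields that every cross-product pair $(\tuple{s,u},a)$ with $u\in U$ is updated infinitely often (states in $S\times F$ are terminal and need no updates). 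The theorem then gives $\qapprox \to q^*_{\mathcal{M}_\mathcal{T}}$, so the induced greedy policy is optimal for $\mathcal{M}_\mathcal{T}$, and by Observation~\ref{obs:cross-mdp} the corresponding policy $\pi(a\mid s,u)$ is optimal for $\mathcal{T}$.

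The main delicate step is the identification above, specifically the claim that the batch of $|U|$ counterfactual experiences generated at a single step constitutes a valid set of asynchronous q-learning updates: they are all derived from the \emph{same} realized $s'$, hence correlated across the components updated simultaneously. What rescues the argument is that the relevant stochastic-approximation framework already permits correlated, simultaneous updates of many components, provided each individual target is conditionally unbiased; and indeed, conditioned on the history and on $(\tuple{s,\bar u},a)$, the target $\bar r + \gamma\max_{a'}\qapprox(s',\bar u',a')$ has expectation $(\mathcal{B}\qapprox)(\tuple{s,\bar u},a)$ for the Bellman optimality operator $\mathcal{B}$ of $\mathcal{M}_\mathcal{T}$, since the sole source of randomness is $s'\sim p(\cdot\mid s,a)$ and both $\bar r$ and $\bar u'$ are deterministic functions of it. Spelling out this conditional-unbiasedness is the only non-routine part; the remainder is a direct appeal to the cited convergence results together with Observation~\ref{obs:cross-mdp}.
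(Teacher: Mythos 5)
Your proposal is correct and follows the same route as the paper's (very terse) proof: both reduce CRM to q-learning on the cross-product MDP of Observation~\ref{obs:cross-mdp} and observe that every counterfactual experience is sampled according to the correct transition probability $p'(\tuple{s',\bar u'}\mid\tuple{s,\bar u},a)=p(s'\mid s,a)$, so Watkins' convergence guarantee applies. Your write-up simply fills in the details the paper leaves implicit --- the asynchronous-update framing, the verification that correlated simultaneous updates are admissible because each target is conditionally unbiased, and the coverage condition --- all of which are sound.
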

\begin{proof}
The convergence proof provided by \citeA{watkins1992q} for tabular Q-learning applies directly to the case of CRM when we consider that each experience produced by CRM is still sampled according its transition probability $p(\tuple{s',u'}|\tuple{s,u},a) = p(s'|s,a)$.
\end{proof}

% CRM vs QRM
\subsubsection{Q-Learning for Reward Machines (QRM)}

In the original paper on reward machines \shortcite{icml2018rms}, we proposed \emph{Q-learning for reward machines (QRM)} as a way to exploit reward machine structure. CRM and QRM are both based on the same fundamental idea: to reuse experience to simultaneously learn optimal behaviours for the different RM states. The key difference is that CRM learns a single Q-value function $\qapprox(s, u, a)$ that takes into account both the environment and RM state, while QRM learns a separate Q-value function $\qapprox_u$ for each RM state $u \in U$. Formally, QRM uses any experience $\tuple{s,a,s'}$ to update each $\qapprox_u$ as follows:
\begin{align}
    \qapprox_u(s,a) \xleftarrow{\alpha} \delta_r(u)(s,a,s') + \gamma \max_{a'\in A}{\qapprox_{\delta_u(u,L(s,a,s'))}(s',a')} 
\end{align}

Note that QRM will behave identically to Q-learning with CRM in the tabular case. Intuitively, this is because the Q-value function $\qapprox(s, u, a)$ of CRM can be partitioned by reward machine state, to yield reward machine state specific Q-value functions just as in QRM. The corresponding updates will thus be identical.

However, QRM and CRM can differ when using function approximation. Consider, for example, the case of using Deep Q-Networks for function approximation. While the definition of QRM suggests the use of a separate Q-network for each RM state, CRM will learn a single Q-network that covers all the RM states. We note that the need for separate Q-networks makes the implementation of deep QRM fairly complex. In contrast, combining CRM with DQN or DDPG is trivial: it only requires adding the reward machine state to the experiences when they are being added to the experience replay buffer. 
We will also see that CRM performs slightly better than QRM in our deep RL experiments.

\subsection{Hierarchical Reinforcement Learning for Reward Machines (HRM)}

% high-level description
Our second approach to exploit the structure of a reward machine is based on \emph{hierarchical reinforcement learning (HRL)}, in particular, the options framework \shortcite{sutton1999between}. The overall idea is to decompose the problem into subproblems, called \emph{options}, that are potentially simpler to solve. 
Formally, an option is a triple $\tuple{\mathcal{I},\pi,\beta}$, where $\mathcal{I}$ is the initiation set (the subset of the state space in which the option can be started), $\pi$ is the policy that chooses actions while the option is being followed, and $\beta$ gives the probability that the option will terminate in each state. 

In our case, the agent will learn a set of options for the cross-product MDP, that focus on learning how to move from one RM state to another RM state. Then, a higher-level policy will learn how to select among those options in order to collect reward.

% Example
As an example, consider the reward machine shown in Figure~\ref{fig:fsm2}. This machine rewards the agent when it delivers a coffee and the mail to the office. To do so, the agent might first get the coffee, then the mail, and go to the office. Alternatively, the agent might get the mail first, then the coffee, and then go to the office. For this reward machine, our hierarchical RL method will learn one option per edge for a total of nine options: five for the transitions between different RM states, and four corresponding to the self-loop transitions that remain in the same RM state. That is, the method will learn one policy to get a coffee before getting the mail (moving from $u_0$ to $u_1$), one policy to get the mail before getting a coffee (moving from $u_0$ to $u_2$), one policy to get a coffee (moving from $u_2$ to $u_3$), one policy to get the mail (moving from $u_1$ to $u_3$), one policy to go to the office (moving from $u_3$ to the terminal state), and one policy for each possible transition that remains in the same RM state (from $u_i$ to $u_i$ for all $i$). The role of the high-level policy is to decide which option to execute next among these available options. For instance, if the RM state is in $u_0$, the high-level policy will decide whether to get a coffee first (moving to $u_1$), the mail (moving to $u_2$), or neither (staying in $u_0$). To make this decision, it will consider the current environment state and, thus, it can learn to get the coffee or mail depending on which one is closer to the agent.

% Formal description of the macro-actions
More generally, we learn one option for each pair of RM states $\tuple{u,u_t}$ that are connected in the RM, including self-loop edges where $u=u_t$.
We will name the options with the pairs of RM states $\tuple{u,u_t}$ that they correspond to. This means that the set of options is $\mathcal{A} = \{\tuple{u,\delta_u(u,\sigma)} \ |\ u \in U, \sigma \in 2^\mathcal{P} \}$.
The option $\tuple{u,u_t}$ will have its initiation set defined to contain all the states in the cross-product MDP where the RM state is $u$: 
$\mathcal{I}_{\tuple{u,u_t}}=\{\tuple{s,u}:s\in S\}$. The termination condition is then defined as follows:
\begin{align}
\beta_{\tuple{u,u_t}}(s',u')=\begin{cases}
1&\text{ if }u'\ne u\text{ or }s'\text{ is terminal}\\
0&\text{ otherwise}
\end{cases}
\end{align}
That is, the option $\tuple{u,u_t}$ terminates (deterministically) when a new RM state is reached or a terminal environment state is reached. Since $\tuple{u,u_t}$ can only be executed when the RM state is $u$, its policy can be described in terms of the environment state $s$ only. As such, we refer to the option policy as $\pi_{u,u_t}(a|s)$.

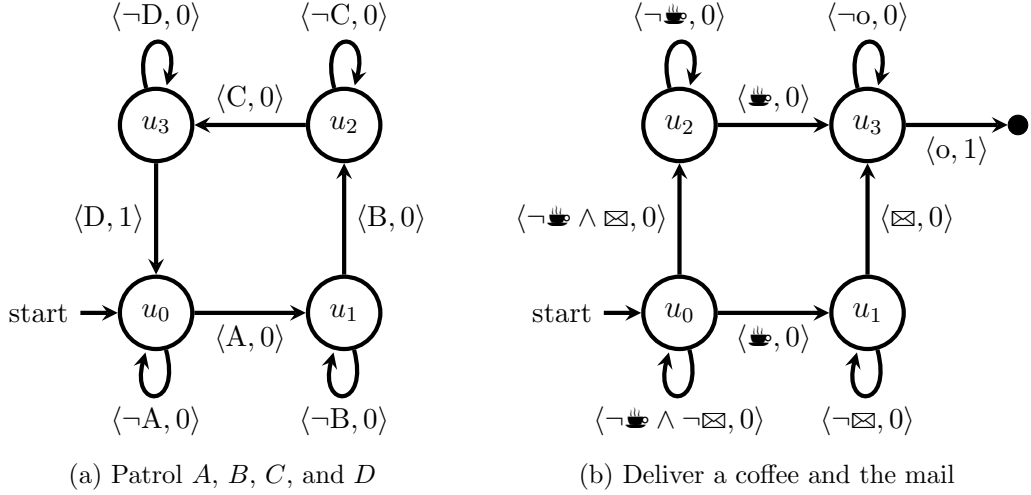
\begin{figure}
    \centering
    \begin{subfigure}{0.45\columnwidth}
        \centering
        \begin{tikzpicture}[node distance=2cm,on grid,every initial by arrow/.style={ultra thick,->, >=stealth}]
  \node[ultra thick,state,initial] (q_0) at (0,0)                {$u_0$};
  \node[ultra thick,state]         (q_1) at (2.5,0) {$u_1$};
  \node[ultra thick,state]         (q_2) at (2.5,2.5) {$u_2$};
  \node[ultra thick,state]         (q_3) at (0,2.5) {$u_3$};
  \path[ultra thick,->, >=stealth] (q_0) edge node [below] {$\tuple{\text{A},0}$} (q_1);
  \path[ultra thick,->, >=stealth] (q_0) edge [loop below] node {$\tuple{\neg \text{A},0}$} ();
  \path[ultra thick,->, >=stealth] (q_1) edge node [right] {$\tuple{\text{B},0}$} (q_2);
  \path[ultra thick,->, >=stealth] (q_1) edge [loop below] node {$\tuple{\neg \text{B},0}$} ();
  \path[ultra thick,->, >=stealth] (q_2) edge node [above] {$\tuple{\text{C},0}$} (q_3);
  \path[ultra thick,->, >=stealth] (q_2) edge [loop above] node {$\tuple{\neg \text{C},0}$} ();
  \path[ultra thick,->, >=stealth] (q_3) edge node [left] {$\tuple{\text{D},1}$} (q_0);
  \path[ultra thick,->, >=stealth] (q_3) edge [loop above] node {$\tuple{\neg \text{D},0}$} ();
\end{tikzpicture}
        \subcaption{Patrol $A$, $B$, $C$, and $D$}
        \label{fig:fsm1}
    \end{subfigure}\begin{subfigure}{0.5\columnwidth}
        \centering
        \begin{tikzpicture}[node distance=2cm,on grid,every initial by arrow/.style={ultra thick,->, >=stealth}]
  \node[ultra thick,state,initial] (q_0) at (0,0) {\large$u_0$};
  \node[ultra thick,state]         (q_1) at (2.5,0) {\large$u_1$};
  \node[ultra thick,state]         (q_2) at (0,2.5) {\large$u_2$};
  \node[ultra thick,state]         (q_3) at (2.5,2.5) {\large$u_3$};
  \node[circle,draw=black,minimum size=0.26cm,inner sep=0pt,fill=black] (t1) at (4.5,2.5)  {};
  \path[ultra thick,->, >=stealth] (q_0) edge node [below] {$\tuple{\text{\coffee},0}$} (q_1);
  \path[ultra thick,->, >=stealth] (q_0) edge [loop below] node {$\tuple{\neg \text{\coffee} \wedge \neg \text{\mail}, 0}$} ();
  \path[ultra thick,->, >=stealth](q_1) edge node [right] {$\tuple{\text{\mail},0}$} (q_3);
  \path[ultra thick,->, >=stealth] (q_1) edge [loop below] node {$\tuple{\neg \text{\mail},0}$} ();
  \path[ultra thick,->, >=stealth] (q_3) edge node [below] {$\tuple{\text{o},1}$} (t1);
  \path[ultra thick,->, >=stealth] (q_3) edge [loop above] node {$\tuple{\neg \text{o},0}$} ();
  \path[ultra thick,->, >=stealth] (q_0) edge node [left] {$\tuple{\neg \text{\coffee} \wedge \text{\mail},0}$} (q_2);
  \path[ultra thick,->, >=stealth] (q_2) edge node [above] {$\tuple{\text{\coffee},0}$} (q_3);
  \path[ultra thick,->, >=stealth] (q_2) edge [loop above] node {$\tuple{\neg \text{\coffee},0}$} ();
\end{tikzpicture}

% \begin{tikzpicture}[node distance=2cm,on grid,every initial by arrow/.style={ultra thick,->, >=stealth}]
%   \node[ultra thick,state,initial above] (q_0) at (0,0) {$u_0$};
%   \node[ultra thick,state]         (q_1) at (0,-1.8)  {$u_1$};
%   \node[circle,draw=black,minimum size=0.26cm,inner sep=0pt,fill=black] (t1) at (2,0)  {};
%   \node[circle,draw=black,minimum size=0.26cm,inner sep=0pt,fill=black] (t2) at (2,-1.8)  {};
%   \node[circle,draw=black,minimum size=0.26cm,inner sep=0pt,fill=black] (t3) at (0,-3.3)  {};
  
%   \path[ultra thick,->, >=stealth] (q_0) edge node [left] {$\tuple{\text{\coffee} \wedge \neg \text{\decoration},0}$} (q_1);
%   \path[ultra thick,->, >=stealth] (q_0) edge [loop left] node {$\tuple{\neg \text{\coffee} \wedge \neg \text{\decoration},0}$} ();
%   \path[ultra thick,->, >=stealth] (q_0) edge node [above] {$\tuple{\text{\decoration},0}$} (t1);
%   \path[ultra thick,->, >=stealth] (q_1) edge [loop left] node {$\tuple{\neg \text{o} \wedge \neg \text{\decoration},0}$} ();
%   \path[ultra thick,->, >=stealth] (q_1) edge node [above] {$\tuple{\text{\decoration},0}$} (t2);
%   \path[ultra thick,->, >=stealth] (q_1) edge node [left]{$\tuple{\text{o} \wedge \neg \text{\decoration},1}$} (t3);
% \end{tikzpicture}
        \subcaption{Deliver a coffee and the mail}
        \label{fig:fsm2}
    \end{subfigure}
    \caption{Two more reward machines for the office gridworld}
    \label{fig:fsm}
\end{figure}

Since the objective of option $\pi_{u,u_t}$ is to induce the reward machine to transition to $u_t$ as soon as possible, we train the option policy $\pi_{u,u_t}(a|s)$ using the following reward function:
\begin{align}
r_{u,u_t}(s,a,s')=
\begin{cases}
\delta_r(u)(s,a,s') + r^+ &\text{ if } u_t \neq u \text{ and } u_t = \delta_u(u,L(s,a,s'))\\
\delta_r(u)(s,a,s') + r^-& \text{ if } u_t \neq u \text{ and } u_t \neq \delta_u(u,L(s,a,s'))\\
\delta_r(u)(s,a,s') &\text{ otherwise}
\end{cases}
\end{align}
where $r^+$ and $r^-$ are hyperparameters. This reward function states that the policy $\pi_{u,u_t}$ gets the usual reward $\delta_r(u)(s,a,s')$ plus an additional reward of $r^+$ (a bonus) when the transition $(s,a,s')$ causes the RM state to move from $u$ to $u_t$ (unless $u=u_t)$ and an additional reward of $r^-$ (a penalization) when it causes the RM state to move from $u$ to some other state $\bar{u} \not\in \{u,u_t\}$.
Crucially, the policies for all the options will be learned simultaneously, using off-policy RL and counterfactual experience generation.

% Formal description of the high-level policy
The high-level policy decides which option to execute next from the set of available options. The policy $\pi(u_t|s,u)$ being learned will determine the probability of executing each option $\tuple{u,u_t}\in \mathcal{A}$ given the current environment state $s$ and RM state $u$. We note that this high-level policy can only choose among the options that start at the current RM state $u$. To train this policy, we use the reward coming from the reward machine.

% The training loop explained using tabular Q-learning
Algorithm~\ref{alg:hrm} shows pseudocode for this approach, which we call \emph{hierarchical reinforcement learning for reward machines (HRM)}, when using tabular Q-learning. Tabular Q-learning could be replaced by any other off-policy method such as DQN or DDPG. The algorithm begins by initializing one Q-value estimate $\qapprox(s,u,u_t)$ for the high-level policy and one Q-value estimate $\qapprox_{u,u_t}(s,a)$ for each option $\tuple{u,u_t} \in \mathcal{A}$. At every step, the agent first checks if a new option has to be selected and then does so using $\qapprox(s,u,u_t)$ (lines 8--10). This option takes the control of the agent until it reaches a terminal transition. The current option selects the next action $a \in A$, executes it, and reaches the next state $s'$ (lines 11--12). The experience $(s,a,s')$ is used to compute the next RM state $u'=\delta_u(u,L(s,a,s'))$ and reward $r=\delta_r(u)(s,a,s')$ (line 13), and also to update the option policies by giving a reward of $r_{\bar{u},\bar{u}_t}(s,a,s')$ to each option $\tuple{\bar{u},\bar{u}_t}\in \mathcal{A}$  (lines 14--18). Finally, the high-level policy is updated when the option ends (lines 19--24) and the loop starts over from $\tuple{s',u'}$.

\begin{algorithm}[tb]
   \caption{Tabular hierarchical RL for reward machines (HRM).}
   \label{alg:hrm}
    \begin{algorithmic}[1]
    \STATE {\bfseries Input:} $S$, $A$, $\gamma \in (0,1]$, $\alpha \in (0,1]$, $\epsilon \in (0,1]$, $\mathcal{P}$, $L$, $U$, $u_0$, $F$, $\delta_u$, $\delta_r$.
    \STATE $\mathcal{A}(u) \leftarrow \{u_t \ |\ u_t=\delta_u(u,\sigma) \text{ for some } u_t \in U \cup F, \sigma \in 2^\mathcal{P} \}$ for all $u \in U$
    \STATE For all $s \in S$, $u \in U$, and $u_t \in \mathcal{A}(u)$, initialize the high-level $\qapprox(s,u,u_t)$ arbitrarily
    \STATE For all $s \in S$, $u \in U$, $u_t \in \mathcal{A}(u)$, and $a\in A$, initialize option $\qapprox_{u,u_t}(s,a)$ arbitrarily
    \FOR{$l \leftarrow 0$ \TO num\_episodes} 
        \STATE Initialize $u \leftarrow u_0$, $s \leftarrow$ EnvInitialState(), and $u_t \leftarrow \emptyset$
        \WHILE{$s$ is not terminal \AND $u \not\in F$} 
            \IF{$u_t = \emptyset$}
                \STATE Choose option $u_t \in \mathcal{A}(u)$ using policy derived from $\qapprox$ (e.g., $\epsilon$-greedy)
                \STATE Set $r_t \leftarrow 0$ and $t \leftarrow 0$
            \ENDIF
            \STATE Choose action $a$ from $s$ using policy derived from $\qapprox_{u,u_t}$ (e.g., $\epsilon$-greedy)
            \STATE Take action $a$ and observe the next state $s'$
            \STATE Compute the reward $r \leftarrow \delta_r(u)(s,a,s')$ and next RM state $u' \leftarrow \delta_u(u,L(s,a,s'))$
            \FOR{$\bar{u} \in U, \bar{u}_t \in \mathcal{A}(\bar{u})$}
                \IF{$\delta_u(\bar{u},L(s,a,s')) \neq u$ or $s'$ is terminal} 
                    \STATE $\qapprox_{\bar{u},\bar{u}_t}(s,a) \xleftarrow{\alpha} r_{\bar{u},\bar{u}_t}(s,a,s')$
                \ELSE
                    \STATE $\qapprox_{\bar{u},\bar{u}_t}(s,a) \xleftarrow{\alpha} r_{\bar{u},\bar{u}_t}(s,a,s') + \gamma \max_{a' \in A}{\qapprox_{\bar{u},\bar{u}_t}(s',a')}$
                \ENDIF
            \ENDFOR
            %\IF{$s'$ is terminal \OR $\delta_u(u,L(s,a,s')) \neq u$}
            \IF{$s'$ is terminal \OR $u' \neq u$}
                \IF{$s'$ is terminal \OR $u'\in F$} 
                    \STATE $\qapprox(s,u,u_t) \xleftarrow{\alpha} r_t + \gamma^t r$
                \ELSE
                    \STATE $\qapprox(s,u,u_t) \xleftarrow{\alpha} r_t + \gamma^t r + \gamma^{t+1} \max_{u_t'\in \mathcal{A}(u')}{\qapprox(s',u',u_t')}$
                \ENDIF
                \STATE Set $u_t \leftarrow \emptyset$
            \ENDIF
            \STATE Update $s \leftarrow s'$ and $u \leftarrow u'$
            \STATE Update $r_t \leftarrow r_t + \gamma^t r$
            \STATE Update $t \leftarrow t + 1$
        \ENDWHILE
    \ENDFOR
\end{algorithmic}
\end{algorithm}

% Properties
HRM can be very effective at quickly learning good policies for MDPRMs. Its strength comes from its ability to learn policies for all of the options simultaneously through off-policy learning. However, it might converge to sub-optimal solutions, even in the tabular case. This is because the option-based approach is myopic: the learned option policies will always try to transition as quickly as possible without considering how that will affect performance after the transition occurs. An example of this behaviour is shown in Figure~\ref{fig:office}. The task consists of delivering a coffee to the office. As such, the optimal high-level policy will correctly learn to go for the coffee at state $u_0$ and then go to the office at state $u_1$. However, the optimal option policy for getting the coffee will move to the closest coffee station (following the sub-optimal red path in Figure~\ref{fig:officeleft}) because (i) that option gets a large reward when it reaches the coffee, and (ii) optimal policies will always prefer to collect such a reward \emph{as soon as} possible. As a result, HRM will converge to a sub-optimal policy.

% Improvements and Multi-task learning
We note that HRM can use prior knowledge about the environment to prune useless options. For example, in our experiments we do not learn options for the self-loops, since no optimal high-level policy would need to self-loop in our domains.
We also do not learn options that lead to ``bad" terminal states, such as breaking decorations in Figure~\ref{fig:officeright}. 

\subsection{Automated Reward Shaping (RS)}

Our last method for exploiting reward machines builds on the idea of \emph{potential-based reward shaping} \shortcite{Ng1999shaping}. The intuition behind reward shaping is that some reward functions are easier to learn policies for than others, even if those functions have the same optimal policy. Typically, this involves providing some intermediate rewards as the agent gets closer to completing the task. To that end, \citeA{Ng1999shaping} formally showed that given any MDP $\mathcal{M}=\mdp$ and function $\Phi: S \to \reals$, changing the reward function of $\mathcal{M}$ to 
\begin{equation}
\label{eq:rs}
r'(s,a,s') = r(s,a,s') + \gamma \Phi(s') - \Phi(s)    
\end{equation}
will not change the set of optimal policies. Thus, if we find a function $\Phi$ -- referred to as a \emph{potential function} -- that allows us to learn optimal policies more quickly, we are guaranteed that the found policies are still optimal with respect to the original reward function.

In this section, we consider the use of \emph{value iteration} over the RM states as a way to compute a potential function. Intuitively, the idea is to approximate the expected discounted return of being in any RM state by treating the RM itself as an MDP. As a result, a potential will be assigned to each RM state over which equation \eqref{eq:rs} will be used to define a shaped reward function that will encourage the agent to make progress towards solving the task. This method works only for simple reward machines since it does not use information from the environment states $S$ and actions $A$. 

Formally, given a simple RM $\tuple{U,u_0, F,\delta_u,\delta_r}$, we construct an MDP $\mathcal{M}=\mdp$, where $S = U \cup F$, $A = 2^\mathcal{P}$, $r(u,\sigma,u') = \delta_r(u,\sigma)$ if $u\in U$ (zero otherwise), $\gamma < 1$, and
\begin{align}
p(u'|u,\sigma)=
    \begin{cases}
    1 &\text{ if } u \in F \text{ and } u' = u\\
    1 &\text{ if } u \in U \text{ and } u' = \delta_u(u,\sigma)\\
    0 &\text{ otherwise}
    \end{cases}
\end{align}
Intuitively, this is an MDP where every transition in the RM corresponds to a deterministic action. We can then compute the value of each state $v^*(u) = \max_{\sigma}{q^*(u,\sigma)}$ in this MDP when using the optimal policy, using a method such as value iteration. This is shown for a given $U$, $F$, $\mathcal{P}$, $\delta_u$, $\delta_r$, and $\gamma$ in Algorithm~\ref{alg:rs}. Here, the computed state-value estimates $v$ are guaranteed to be equal to $v^*$ by the well-known convergence properties of value iteration.

\begin{algorithm}[tb]
    \caption{Value iteration for automated reward shaping}
    \label{alg:rs}
    \begin{algorithmic}[1]
    \STATE {\bfseries Input:} $U$, $F$, $\mathcal{P}$, $\delta_u$, $\delta_r$, $\gamma$
    \FOR{$u \in U \cup F$} 
        \STATE $v(u) \leftarrow 0$ \COMMENT{initializing v-values}
    \ENDFOR
    \STATE $e \leftarrow 1$ 
    \WHILE{$e > 0$} \label{alg:rs_terminate}
        \STATE $e \leftarrow 0$ 
        \FOR{$u \in U$} 
            \STATE $v' \leftarrow \max\{\delta_r(u,\sigma) + \gamma v(\delta_u(u,\sigma))\ |\ \forall \sigma \in 2^{\mathcal{P}}\}$
            \STATE $e = \max\{e, |v(u)-v'|\}$
            \STATE $v(u) \leftarrow v'$
        \ENDFOR
    \ENDWHILE
    \RETURN $v$
\end{algorithmic}
\end{algorithm}

Once we have computed $v^*$, we then define the potential function as $\Phi(s,u) = -v^*(u)$ for every environment state $s$ and RM state $u$. As we will see below, the use of negation encourages the agent to transition towards RM states that correspond to task completion. 

To make this approach clearer, consider the example task of \textit{delivering coffee to the office while avoiding decorations} from Figure~\ref{fig:officeright}. What makes this task difficult for an RL agent is the sparsity of the reward. The agent only gets a +1 reward by completing the whole task. Recall that one of the typical goals of reward shaping is to provide some intermediate rewards as the agent gets closer to completing the task. In this case, passing this simple reward machine through Algorithm~\ref{alg:rs} with $\gamma = 0.9$ results in the potential-based function shown in Figure~\ref{fig:rs}. In the figure, nodes represent RM states, and each state has been labelled with the computed potential in red (the potential of terminal states is always zero). Each transition has also been labelled by a pair $\tuple{c,r+\mathit{rs}}$, where $c$ is a logical condition to transition between the states, $r$ is the reward that the agent receives for the transition according to $\delta_r$, and $\mathit{rs}$ is the extra reward given by equation \eqref{eq:rs}. Note that, with reward shaping, the agent is given a reward of $0.09$ for self-looping before getting a coffee and a reward of $0.1$ for self-looping after getting the coffee. This gives an incentive for collecting coffee and, as such, making progress on the overall task. In addition, we know that the optimal policy is preserved since we are using equation \eqref{eq:rs} to shape the rewards.

\begin{figure}
    \centering
    \newcommand{\rs}[1]{{\bf\color{red}+#1}}\begin{tikzpicture}[node distance=2cm,on grid,every initial by arrow/.style={ultra thick,->, >=stealth}]
  \node[ultra thick,state,initial,minimum size=1.02cm] (u_0) at (0, 0) {\bf\color{red}-0.9};
  \node[ultra thick,state,minimum size=1.02cm]         (u_1) at (4.5,0) {\bf\color{red}-1.0};
  \node[circle,draw=black,minimum size=0.26cm,inner sep=0pt,fill=black]         (u_2) at (0,-1.5) {};
  \node[circle,draw=black,minimum size=0.26cm,inner sep=0pt,fill=black]         (u_3) at (9,0) {};
  \node[circle,draw=black,minimum size=0.26cm,inner sep=0pt,fill=black]         (u_4) at (4.5,-1.5) {};
  
  \path[ultra thick,->, >=stealth] (u_0) edge node [above] {$\tuple{\text{\coffee} \wedge \neg \text{\decoration},0\rs{0}}$} (u_1);
  \path[ultra thick,->, >=stealth] (u_0) edge [loop above] node [right] {\ $\tuple{\neg \text{\coffee} \wedge \neg \text{\decoration},0\rs{0.09}}$} ();
  \path[ultra thick,->, >=stealth] (u_0) edge node [right] {$\tuple{\text{\decoration},0\rs{0.9}}$} (u_2);
  \path[ultra thick,->, >=stealth] (u_1) edge node [above] {$\tuple{\text{o} \wedge \neg \text{\decoration},1\rs{1}}$} (u_3);
  \path[ultra thick,->, >=stealth] (u_1) edge [loop above] node [right] {$\tuple{\neg \text{o} \wedge \neg \text{\decoration},0\rs{0.1}}$} ();
  \path[ultra thick,->, >=stealth] (u_1) edge node [right] {$\tuple{ \text{\decoration},0\rs{1}}$} (u_4);
\end{tikzpicture}
    \caption{Reward shaping example with $\gamma = 0.9$.}
    \label{fig:rs}
\end{figure}

Finally, we would like to make two observations about this approach. First, the potentials of all terminal states are set to zero and, thus, moving to any terminal state will give a positive reward to the agent (as long as the RM has non-negative rewards only), even if that terminal state is a \textit{bad} terminal state. For instance, the agent will receive a reward of $0.9$ or $1.0$ when breaking a decoration in Figure~\ref{fig:rs}. Unfortunately, we cannot define different potentials for \textit{good} and \textit{bad} terminal states because potential-based reward shaping works under the assumption that the potentials of all terminal states are set to the same value \cite{Ng1999shaping}. The second observation is that we are computing the potentials by solving a \emph{deterministic} MDP using value iteration. However, there exist faster methods to solve deterministic MDPs \shortcite<e.g.,>{post2015simplex,bertram2018fast}.

\section{Experimental Evaluation}
\label{sec:results}

In this section, we provide an empirical evaluation of our methods in domains with a variety of characteristics: discrete states, continuous states, and continuous action spaces. Some domains include multitask learning and single task learning. Most of the tasks considered have been expressed using simple reward machines, though those used in Section \ref{sec:res-cheetah} require the full formulation as we describe below. As a brief summary, our results show the following:
\begin{enumerate}
    \itemsep0em 
    \item CRM and HRM outperform the cross-product baselines in all our experiments.
    \item CRM converges to the best policies in all but one experiment.
    \item HRM tends to initially learn faster than CRM but converges to suboptimal policies.
    \item The gap between CRM/HRM and the cross-product baseline increases when learning in a multitask setting.
    \item Reward shaping helps in discrete domains but it does not in continuous domains.
\end{enumerate}

\subsection{Results on Discrete Domains}

% Office World
We evaluated our methods on two gridworlds. Since these are tabular domains, we use tabular Q-learning as the core off-policy learning method for all the approaches. Specifically, we tested Q-learning alone over the cross-product (QL), Q-learning with reward shaping (QL+RS), Q-learning with counterfactual experiences (CRM), Q-learning with CRM and reward shaping (CRM+RS), and our hierarchical RL method with and without reward shaping (HRM and HRM+RS). We do not report results for QRM since it is equivalent to CRM in tabular domains. We use $\epsilon=0.1$ for exploration, $\gamma=0.9$, and $\alpha=0.5$. We also used optimistic initialization of the Q-values by setting the initial Q-value of any state-action pair to be $2$. For the hierarchical RL methods, we use $r^+ = 1$ and $r^-=0$.

% Office description
The first domain is the office world described in Section~\ref{sec:reward-machines}  and Figure~\ref{fig:officeleft}. This is a multitask domain, consisting of the four tasks described in Table~\ref{tab:owt}. 
We begin by evaluating how long it takes the agents to learn a policy that can solve those four tasks. The agent will iterate through the tasks, changing from one to the next at the completion of each episode. Note that CRM and HRM are good fits for this problem setup because they can use experience from solving one task to update the policies for solving the other tasks. 

\begin{table}
\caption{Tasks for the office world.}
\label{tab:owt}
\centering
\begin{tabular}{rl}
    \# & Description \\
    \hline
    1& deliver coffee to the office without breaking any decoration\\
    2& deliver mail to the office without breaking any decoration\\
    3& patrol locations $A$, $B$, $C$, and $D$, without breaking any decoration \\
    4& deliver a coffee and the mail to the office without breaking any decoration\\
\end{tabular}
\end{table}

\begin{figure}
    \centering
    \begin{subfigure}{0.49\columnwidth}
        \centering
        \begin{tikzpicture}%

  \begin{axis}[%
    scale only axis,
    xlabel={Training steps (in thousands)},
    ylabel={Avg. reward per step},
    title style={font=\large},
    title={Office World (multiple tasks)},
    every axis x label/.style={at={(current axis.south)},above=-11mm},
    every axis y label/.append style={at={(current axis.west)},above=0mm},
    yticklabel pos=right,
    scaled x ticks = false,
    enlarge x limits=false,
    scale= 0.7
  ]

    \percentiles{data_V4/office-ql.txt}{0}{1}{2}{3}{cyan!50!blue}{0.4}

    \percentiles{data_V4/office-ql-rs.txt}{0}{1}{2}{3}{cyan}{0.4}

    \percentiles{data_V4/office-crm-rs.txt}{0}{1}{2}{3}{yellow!80!red}{0.4}

    \percentiles{data_V4/office-crm.txt}{0}{1}{2}{3}{orange}{0.4}

    \percentiles{data_V4/office-hrm-rs.txt}{0}{1}{2}{3}{green!80!black}{0.4}

    \percentiles{data_V4/office-hrm.txt}{0}{1}{2}{3}{green!60!black}{0.4}

  \end{axis}
\end{tikzpicture}%
    \end{subfigure}
    \begin{subfigure}{0.49\columnwidth}
        \centering
        \begin{tikzpicture}%

  \begin{axis}[%
    scale only axis,
    xlabel={Training steps (in thousands)},
    ylabel={Avg. reward per step},
    title style={font=\large},
    title={Office World (single task)},
    every axis x label/.style={at={(current axis.south)},above=-11mm},
    every axis y label/.append style={at={(current axis.west)},above=0mm},
    yticklabel pos=right,
    scaled x ticks = false,
    enlarge x limits=false,
    xmax=60,
    scale= 0.7
  ]

    \percentiles{data_V4/office-single-ql.txt}{0}{1}{2}{3}{cyan!50!blue}{0.4}

    \percentiles{data_V4/office-single-ql-rs.txt}{0}{1}{2}{3}{cyan}{0.4}

    \percentiles{data_V4/office-single-crm-rs.txt}{0}{1}{2}{3}{yellow!80!red}{0.4}

    \percentiles{data_V4/office-single-crm.txt}{0}{1}{2}{3}{orange}{0.4}

    \percentiles{data_V4/office-single-hrm-rs.txt}{0}{1}{2}{3}{green!80!black}{0.4}

    \percentiles{data_V4/office-single-hrm.txt}{0}{1}{2}{3}{green!60!black}{0.4}

  \end{axis}
\end{tikzpicture}%
    \end{subfigure}
    
    \vspace{1mm}
    \begin{tikzpicture}%
\node[draw=black] {%
	\begin{tabular}{llllll}
	\entrysmall{cyan!50!blue}{QL}&
	\entrysmall{green!60!black}{HRM}&
	\entrysmall{orange}{CRM}&
	\entrysmall{cyan}{QL+RS}&
	\entrysmall{green!80!black}{HRM+RS}&
	\entrysmall{yellow!80!red}{CRM+RS}    	
	\end{tabular}};%
\end{tikzpicture}%
    \caption{Results on the office gridworld.}
    \label{fig:office-results}
\end{figure}
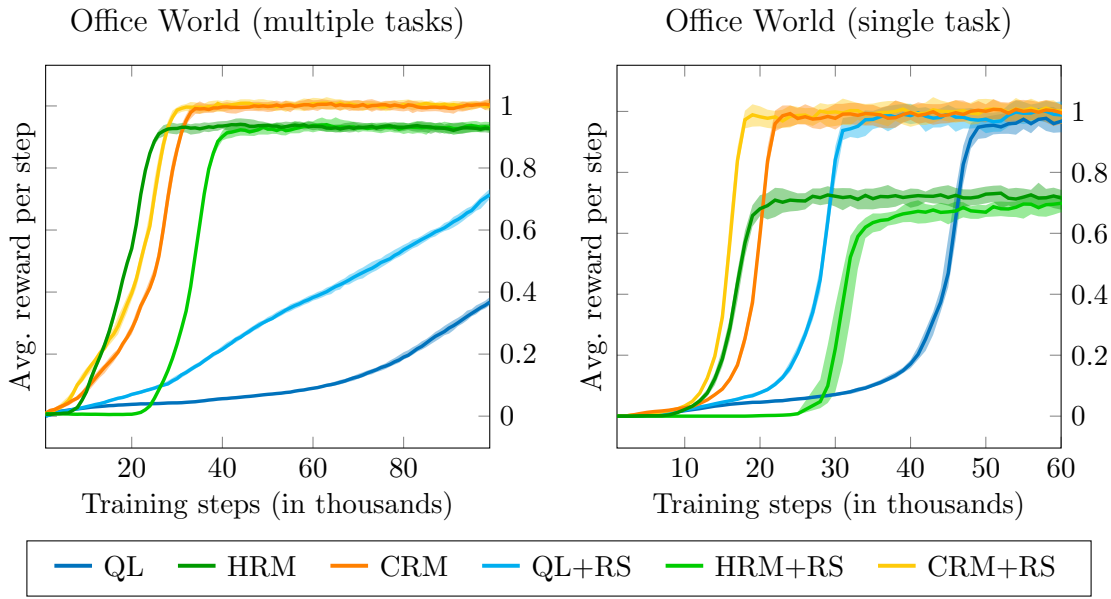

We ran 60 independent trials and report the average reward per step across the four tasks in Figure~\ref{fig:office-results}-left. We normalized the average reward per step to be 1 for an optimal policy (which we pre-computed using value iteration) and show the median performance over those 60 runs, as well as the 25th and 75th percentiles in the shadowed area. The results show that CRM and CRM+RS quickly learn to optimally solve all the tasks -- largely outperforming the cross-product baseline (QL). HRM also outperforms QL and initially learns faster than CRM, but converges to suboptimal policies. Finally, we note that adding reward shaping improves the performance of CRM and QL but decreases the performance of HRM. We are not certain about why reward shaping decreases the performance of HRM. Some further experiments showed that HRM+RS can slightly outperform HRM under certain combinations of values for $r^+$ and Q-value initialization. However, HRM outperforms HRM+RS under most hyperparameter selections, just as reported in Figure~\ref{fig:office-results}.

% Office single task
We also ran a single-task experiment in the office world. We took the hardest task available (task 4), and ran 60 new independent runs where the agent had to solve only that task. The results are shown in Figure~\ref{fig:office-results}-right. 
We note that CRM still outperforms the other methods, though the gap between CRM and QL decreased. 
HRM learns faster than CRM (without reward shaping), but is overtaken since it converges to a suboptimal policy.

\begin{table}
\caption{Tasks for the Minecraft domain. Each task is described as a sequence of events.}
\label{tab:minetasks}
\centering
\begin{tabular}{rll}
\# & Task name & Description\\\hline
1& make plank & get wood, use toolshed\\
2& make stick & get wood, use workbench\\
3& make cloth & get grass, use factory\\
4& make rope & get grass, use toolshed\\
5& make bridge & get iron, get wood, use factory\\
&& (the iron and wood can be gotten in any order)\\
6& make bed & get wood, use toolshed, get grass, use workbench\\
&& (the grass can be gotten at any time before using the workbench)\\
7& make axe & get wood, use workbench, get iron, use toolshed\\
&& (the iron can be gotten at any time before using the toolshed)\\
8& make shears & get wood, use workbench, get iron, use workbench\\
&& (the iron can be gotten at any time before using the workbench)\\
9& get gold & get iron, get wood, use factory, use bridge\\
&& (the iron and wood can be gotten in any order)\\
10& get gem & get wood, use workbench, get iron, use toolshed, use axe\\
&& (the iron can be gotten at any time before using the toolshed)
\end{tabular}
\end{table}

% Minecraft World
Our second tabular domain is the Minecraft-like gridworld introduced by \shortciteA{andreas2016sketches}. In this world, the grid contains raw materials that the agent can extract and use to make new objects. \citeauthor{andreas2016sketches} defined 10 tasks to solve in this world 
that consist of making an object by following a sequence of sub-goals (called a \emph{sketch}). For instance, the task \emph{make a bridge} consists of \emph{get iron},  \emph{get wood}, and \emph{use factory}. 
% V2
We note that sketches require a total ordering of the subgoals. For instance, we could define a sketch for the above example that would require the agent first \emph{get iron} and then \emph{get wood}, or we can define a sketch that requires the agent to \emph{get wood} and then \emph{get iron}. However, sketches do not allow for interleaving subtasks, which would allow an agent to decide to achieve different subgoals in whatever order they see fit. Reward machines do allow for such partially ordered tasks and, as such, we removed any unnecessary orderings when encoding these tasks as reward machines for our experiments. For example, the reward machine for the above task would allow for \emph{get wood} and \emph{get iron} to be done in any order. The original tasks and the partial orderings allowed in the new encodings are shown in Table~\ref{tab:minetasks}.

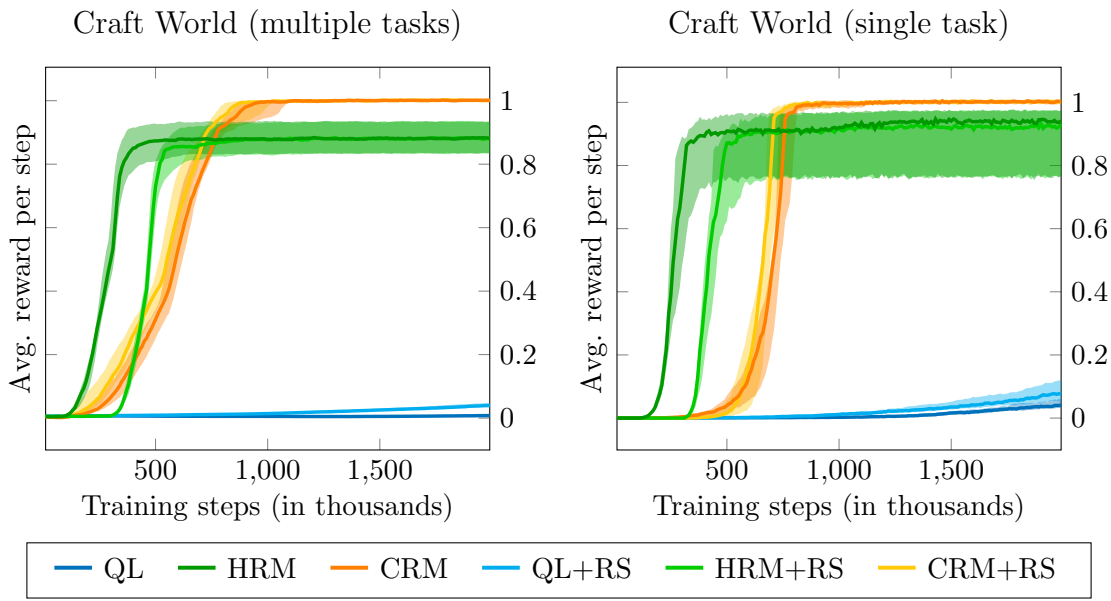
\begin{figure}
    \centering
    \begin{subfigure}{0.49\columnwidth}
        \centering
        \begin{tikzpicture}%

  \begin{axis}[%
    scale only axis,
    xlabel={Training steps (in thousands)},
    ylabel={Avg. reward per step},
    title style={font=\large},
    title={Craft World (multiple tasks)},
    every axis x label/.style={at={(current axis.south)},above=-11mm},
    every axis y label/.append style={at={(current axis.west)},above=0mm},
    yticklabel pos=right,
    scaled x ticks = false,
    enlarge x limits=false,
    scale= 0.7
  ]

    \percentiles{data_V4/craft-ql.txt}{0}{1}{2}{3}{cyan!50!blue}{0.4}

    \percentiles{data_V4/craft-ql-rs.txt}{0}{1}{2}{3}{cyan}{0.4}

    \percentiles{data_V4/craft-crm-rs.txt}{0}{1}{2}{3}{yellow!80!red}{0.4}

    \percentiles{data_V4/craft-crm.txt}{0}{1}{2}{3}{orange}{0.4}

    \percentiles{data_V4/craft-hrm-rs.txt}{0}{1}{2}{3}{green!80!black}{0.4}
    
    \percentiles{data_V4/craft-hrm.txt}{0}{1}{2}{3}{green!60!black}{0.4}

  \end{axis}
\end{tikzpicture}%
    \end{subfigure}
    \begin{subfigure}{0.49\columnwidth}
        \centering
        \begin{tikzpicture}%

  \begin{axis}[%
    scale only axis,
    xlabel={Training steps (in thousands)},
    ylabel={Avg. reward per step},
    title style={font=\large},
    title={Craft World (single task)},
    every axis x label/.style={at={(current axis.south)},above=-11mm},
    every axis y label/.append style={at={(current axis.west)},above=0mm},
    yticklabel pos=right,
    scaled x ticks = false,
    enlarge x limits=false,
    scale= 0.7
  ]

    \percentiles{data_V4/craft-single-ql.txt}{0}{1}{2}{3}{cyan!50!blue}{0.4}

    \percentiles{data_V4/craft-single-ql-rs.txt}{0}{1}{2}{3}{cyan}{0.4}

    \percentiles{data_V4/craft-single-crm-rs.txt}{0}{1}{2}{3}{yellow!80!red}{0.4}

    \percentiles{data_V4/craft-single-crm.txt}{0}{1}{2}{3}{orange}{0.4}

    \percentiles{data_V4/craft-single-hrm-rs.txt}{0}{1}{2}{3}{green!80!black}{0.4}

    \percentiles{data_V4/craft-single-hrm.txt}{0}{1}{2}{3}{green!60!black}{0.4}

  \end{axis}
\end{tikzpicture}%
    \end{subfigure}
        
    \vspace{1mm}
    \begin{tikzpicture}%
\node[draw=black] {%
	\begin{tabular}{llllll}
	\entrysmall{cyan!50!blue}{QL}&
	\entrysmall{green!60!black}{HRM}&
	\entrysmall{orange}{CRM}&
	\entrysmall{cyan}{QL+RS}&
	\entrysmall{green!80!black}{HRM+RS}&
	\entrysmall{yellow!80!red}{CRM+RS}    	
	\end{tabular}};%
\end{tikzpicture}%
    \caption{Results on the Minecraft-like gridworld.}
    \label{fig:discrete}
\end{figure}

% Results (multi and single task)
Figure~\ref{fig:discrete} shows performance over 10 randomly generated maps, with 6 trials per map. We report results for multitask and single task learning. From these results, we can draw similar conclusions as for the office world. However, notice that the gap between methods that exploit the structure of the reward machine (CRM and HRM) and methods that do not (QL) is much larger in the Minecraft domain. The reason is that the Minecraft domain is more complex and the reward sparser -- making the use of Q-learning alone hopeless.

% Conclusion and motivate the next section
Overall, the results on these two domains show that exploiting the RM structure can greatly  increase the performance in tabular domains for single task and multitask learning. CRM seems to be the best compromise between performance and convergence guarantees. HRM can find good policies quickly, but it often converges to suboptimal solutions. Finally, note that reward shaping helped CRM and Q-learning but did not help HRM.

\subsection{Results on Continuous State Domains}
\label{sec:res-water}

% Water world description
We tested our approaches in a continuous state space problem called the \emph{water world} \shortcite{sidor2016reinforcement,karpathy2015waterworld}. This environment consists of a two dimensional box with balls of different colors in it (see Figure~\ref{fig:water_dom} for an example). Each ball moves in one direction at a constant speed and bounces when it collides with the box's edges. The agent, represented by a white ball, can increase its velocity in any of the four cardinal directions. As the ball positions and velocities are real numbers, this domain cannot be tackled using tabular RL. 

\begin{figure}
    \centering
    \begin{subfigure}{0.34\columnwidth}
        \centering
        \resizebox{\textwidth}{!}{\newcommand{\ballradius}{0.5cm}

\begin{tikzpicture}
% box
\draw[ultra thick] (0,0) rectangle (10,9.5);
% balls
\filldraw[fill=red!40!white, draw=black, ultra thick] (6.13, 1.15) circle (\ballradius);
\filldraw[fill=red!40!white, draw=black, ultra thick] (4.34, 6.19) circle (\ballradius);
\filldraw[fill=blue!40!white, draw=black, ultra thick] (4.18, 3.43) circle (\ballradius);
\filldraw[fill=blue!40!white, draw=black, ultra thick] (2.78, 3.73) circle (\ballradius);
\filldraw[fill=green!40!white, draw=black, ultra thick] (2.09, 1.72) circle (\ballradius);
\filldraw[fill=green!40!white, draw=black, ultra thick] (7.02, 2.63) circle (\ballradius);
\filldraw[fill=cyan!40!white, draw=black, ultra thick] (1.87, 6.02) circle (\ballradius);
\filldraw[fill=cyan!40!white, draw=black, ultra thick] (7.69, 6.88) circle (\ballradius);
\filldraw[fill=magenta!40!white, draw=black, ultra thick] (7.86, 4.57) circle (\ballradius);
\filldraw[fill=magenta!40!white, draw=black, ultra thick] (2.40, 8.39) circle (\ballradius);
\filldraw[fill=yellow!40!white, draw=black, ultra thick] (6.01, 6.46) circle (\ballradius);
\filldraw[fill=yellow!40!white, draw=black, ultra thick] (5.50, 8.06) circle (\ballradius);
% agent
\filldraw[fill=white!40!white, draw=black, ultra thick] (1.08, 7.90) circle (\ballradius);
% arrows
\draw [->, ultra thick] (6.13, 1.15) -- (6.61, 0.27);
\draw [->, ultra thick] (4.34, 6.19) -- (3.34, 6.11);
\draw [->, ultra thick] (4.18, 3.43) -- (3.24, 3.76);
\draw [->, ultra thick] (2.78, 3.73) -- (2.35, 2.82);
\draw [->, ultra thick] (2.09, 1.72) -- (1.13, 2.01);
\draw [->, ultra thick] (7.02, 2.63) -- (7.53, 1.77);
\draw [->, ultra thick] (1.87, 6.02) -- (0.96, 5.62);
\draw [->, ultra thick] (7.69, 6.88) -- (8.06, 7.81);
\draw [->, ultra thick] (7.86, 4.57) -- (8.39, 3.73);
\draw [->, ultra thick] (2.40, 8.39) -- (3.25, 7.87);
\draw [->, ultra thick] (6.01, 6.46) -- (6.92, 6.05);
\draw [->, ultra thick] (5.50, 8.06) -- (4.54, 7.80);
\end{tikzpicture}}
    \end{subfigure}
    \begin{subfigure}{0.65\columnwidth}
        \centering
        \begin{tikzpicture}[node distance=2cm,on grid,every initial by arrow/.style={ultra thick,->, >=stealth}]
  \node[ultra thick,state,initial,initial text=] (q_0) at (0,0) {\large$u_0$};
  \node[ultra thick,state]         (q_1) at (3,0) {\large$u_1$};
  \node[ultra thick,state]         (q_2) at (6,0) {\large$u_2$};
  \node[circle,draw=black,minimum size=0.26cm,inner sep=0pt,fill=black] (t1) at (0,-3.1)  {};
  \node[circle,draw=black,minimum size=0.26cm,inner sep=0pt,fill=black] (t2) at (3,-3.1)  {};
  \node[circle,draw=black,minimum size=0.26cm,inner sep=0pt,fill=black] (t3) at (6,-3.1)  {};
  \node[circle,draw=black,minimum size=0.26cm,inner sep=0pt,fill=black] (t4) at (8,0)  {};
  \path[ultra thick,->, >=stealth] (q_0) edge node [above] {$\tuple{\text{R},0}$} (q_1);
  \path[ultra thick,->, >=stealth] (q_1) edge node [above] {$\tuple{\text{G},0}$} (q_2);
  \path[ultra thick,->, >=stealth] (q_2) edge node [above] {$\tuple{\text{B},1}$} (t4);

  \path[ultra thick,->, >=stealth] (q_0) edge node [right] 
  {\begin{tabular}{c}
    $\tuple{G \wedge \neg R,0}$\\
    $\tuple{B \wedge \neg R,0}$\\
    $\tuple{C \wedge \neg R,0}$\\
    $\tuple{Y \wedge \neg R,0}$\\
    $\tuple{M \wedge \neg R,0}$
  \end{tabular}} (t1);
  \path[ultra thick,->, >=stealth] (q_1) edge node [right]  
  {\begin{tabular}{c}
    $\tuple{R \wedge \neg G,0}$\\
    $\tuple{B \wedge \neg G,0}$\\
    $\tuple{C \wedge \neg G,0}$\\
    $\tuple{Y \wedge \neg G,0}$\\
    $\tuple{M \wedge \neg G,0}$
  \end{tabular}} (t2);
  \path[ultra thick,->, >=stealth] (q_2) edge node [right]  
  {\begin{tabular}{c}
    $\tuple{R \wedge \neg B,0}$\\
    $\tuple{G \wedge \neg B,0}$\\
    $\tuple{C \wedge \neg B,0}$\\
    $\tuple{Y \wedge \neg B,0}$\\
    $\tuple{M \wedge \neg B,0}$
  \end{tabular}} (t3);

  \path[ultra thick,->, >=stealth] (q_0) edge [loop above] node {$\tuple{\text{o/w}, 0}$} ();
  \path[ultra thick,->, >=stealth] (q_1) edge [loop above] node {$\tuple{\text{o/w}, 0}$} ();
  \path[ultra thick,->, >=stealth] (q_2) edge [loop above] node {$\tuple{\text{o/w}, 0}$} ();
\end{tikzpicture}
    \end{subfigure}
    \caption{Water world domain and an examplary task. The events R, G, B, C, Y, and M represent touching a red ball, green ball, blue ball, cyan ball, yellow ball, and magenta ball, respectively. The label o/w stands for \textit{otherwise}. The RM encodes task 10 from Table~\ref{tab:wwt}.}
    \label{fig:water_dom}
\end{figure}
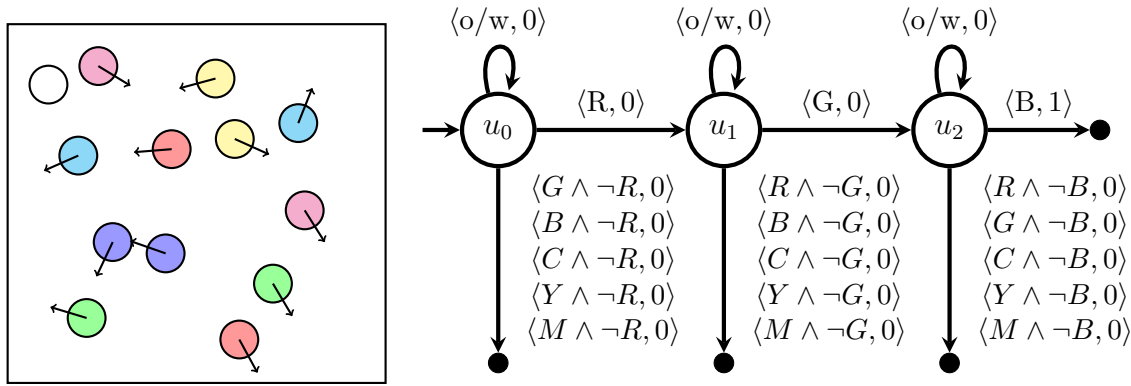

% The set of tasks [VERSION 2]
We defined a set of 10 tasks for the water world over the events of touching a ball of a certain color. For instance, one simple task consists of touching a \emph{cyan ball} after a \emph{blue ball}. Other more complicated tasks include touching a sequence of balls, such as \emph{red}, \emph{green}, and \emph{blue}, in a strict order, such that the agent fails if it touches a ball of a different color than the next one in the sequence. The complete list of tasks can be found in Table~\ref{tab:wwt}.

\begin{table}
\caption{Tasks for the water world. By ``(color1 then color2)'' we mean the task of touching a ball of color1 and then touching a ball of color2. A task like ``(color1 then color2 then color3)'' is similar but with three types of balls to touch. By ``(subtask1) and (subtask2)'' we mean that the agent must complete the tasks described by subtask1 and subtask2, but in any order. By ``(color1 strict-then color2)'' we mean the task which is like ``(color1 then color2)'' but where the agent is not allowed to touch balls of other colors during execution.}\label{tab:wwt}
\centering
\begin{tabular}{rl}
    \# & Description \\
    \hline
    1& (red then green)\\
    2& (blue then cyan)\\
    3& (magenta then yellow)\\
    4& (red then green) and (blue then cyan)\\
    5& (blue then cyan) and (magenta then yellow)\\
    6& (red then green) and (magenta then yellow)\\
    7& (red then green) and (blue then cyan) and (magenta then yellow)\\
    8& (red then green then blue) and (cyan then magenta then yellow)\\
    9& (cyan strict-then magenta strict-then yellow)\\
    10& (red strict-then green strict-then blue)
\end{tabular}
\end{table}

% Network and hyperparameters
In these experiments, we replaced Q-learning with Double DQN. Concretely, we evaluated double DQN over the cross-product (DDQN), DDQN with counterfactual experiences (CRM), our hierarchical RL method (HRM), and their variants using reward shaping (DDQN+RS, CRM+RS, and HRM+RS). For all approaches other than HRM, we used a feed-forward network with 3 hidden layers and 1024 relu units per layer. We trained the networks using a learning rate of $10^{-5}$. On every step, we updated the Q-functions using $32n$ sampled experiences from a replay buffer of size $50000n$, where $n=1$ for DDQN and $n=|U|$ for CRM. The target networks were updated every $100$ training steps and the discount factor $\gamma$ was $0.9$. For HRM, we use the same feed-forward network and hyperparameters to train the option's policies (although $n=|\mathcal{A}|$ in this case). The high-level policy was learned using DDQN but, since the high-level decision problem is simpler, we used a smaller network (2 layers with 256 relu units) and a larger learning rate ($10^{-3}$). Our DDQN implementation was based on the code from OpenAI Baselines \shortcite{baselines}. 

\begin{figure}
    \centering
    \begin{subfigure}{0.49\columnwidth}
        \centering
        \begin{tikzpicture}%

  \begin{axis}[%
    scale only axis,
    xlabel={Training steps (in thousands)},
    ylabel={Avg. reward per step},
    title style={font=\large},
    title={Water World (multiple tasks)},
    every axis x label/.style={at={(current axis.south)},above=-11mm},
    every axis y label/.append style={at={(current axis.west)},above=0mm},
    yticklabel pos=right,
    scaled x ticks = false,
    enlarge x limits=false,
    scale= 0.7
  ]

    \percentiles{data_V3/water-ql.txt}{0}{1}{2}{3}{cyan!50!blue}{0.4}

    \percentiles{data_V3/water-ql-rs.txt}{0}{1}{2}{3}{cyan}{0.4}

    \percentiles{data_V3/water-crm-rs.txt}{0}{1}{2}{3}{yellow!80!red}{0.4}

    \percentiles{data_V3/water-crm.txt}{0}{1}{2}{3}{orange}{0.4}

    \percentiles{data_V3/water-hrm-rs.txt}{0}{1}{2}{3}{green!80!black}{0.4}

    \percentiles{data_V3/water-hrm.txt}{0}{1}{2}{3}{green!60!black}{0.4}

  \end{axis}
\end{tikzpicture}%
    \end{subfigure}
    \begin{subfigure}{0.49\columnwidth}
        \centering
        \begin{tikzpicture}%

  \begin{axis}[%
    scale only axis,
    xlabel={Training steps (in thousands)},
    ylabel={Avg. reward per step},
    title style={font=\large},
    title={Water World (single task)},
    every axis x label/.style={at={(current axis.south)},above=-11mm},
    every axis y label/.append style={at={(current axis.west)},above=0mm},
    yticklabel pos=right,
    scaled x ticks = false,
    enlarge x limits=false,
    scale= 0.7
  ]

    \percentiles{data_V3/water-single-ql.txt}{0}{1}{2}{3}{cyan!50!blue}{0.4}

    \percentiles{data_V3/water-single-ql-rs.txt}{0}{1}{2}{3}{cyan}{0.4}

    \percentiles{data_V3/water-single-crm-rs.txt}{0}{1}{2}{3}{yellow!80!red}{0.4}

    \percentiles{data_V3/water-single-crm.txt}{0}{1}{2}{3}{orange}{0.4}

    \percentiles{data_V3/water-single-hrm-rs.txt}{0}{1}{2}{3}{green!80!black}{0.4}

    \percentiles{data_V3/water-single-hrm.txt}{0}{1}{2}{3}{green!60!black}{0.4}

  \end{axis}
\end{tikzpicture}%
    \end{subfigure}

    \vspace{1mm}
    % \begin{tikzpicture}%
% \node[draw=black] {%
% 	\begin{tabular}{clll}
% 	\textbf{Legend:} &
% 	\entry{cyan!50!blue}{DDQN}&
% 	\entry{green!60!black}{HRM}&
% 	\entry{orange}{CRM}\\
% 	&\entry{cyan}{DDQN+RS}&
% 	\entry{green!80!black}{HRM+RS}&
% 	\entry{yellow!80!red}{CRM+RS}    	
% 	\end{tabular}};%
% \end{tikzpicture}%
\begin{tikzpicture}%
\node[draw=black] {%
	\begin{tabular}{l@{\hspace{3mm}}l@{\hspace{3mm}}l@{\hspace{3mm}}l@{\hspace{3mm}}l@{\hspace{3mm}}l}
	\entrysmall{cyan!50!blue}{DDQN}&
	\entrysmall{green!60!black}{HRM}&
	\entrysmall{orange}{CRM}&
	\entrysmall{cyan}{DDQN+RS}&
	\entrysmall{green!80!black}{HRM+RS}&
	\entrysmall{yellow!80!red}{CRM+RS}    	
	\end{tabular}};%
\end{tikzpicture}%
    \caption{Results in the water world domain.}
    \label{fig:continuous}
\end{figure}

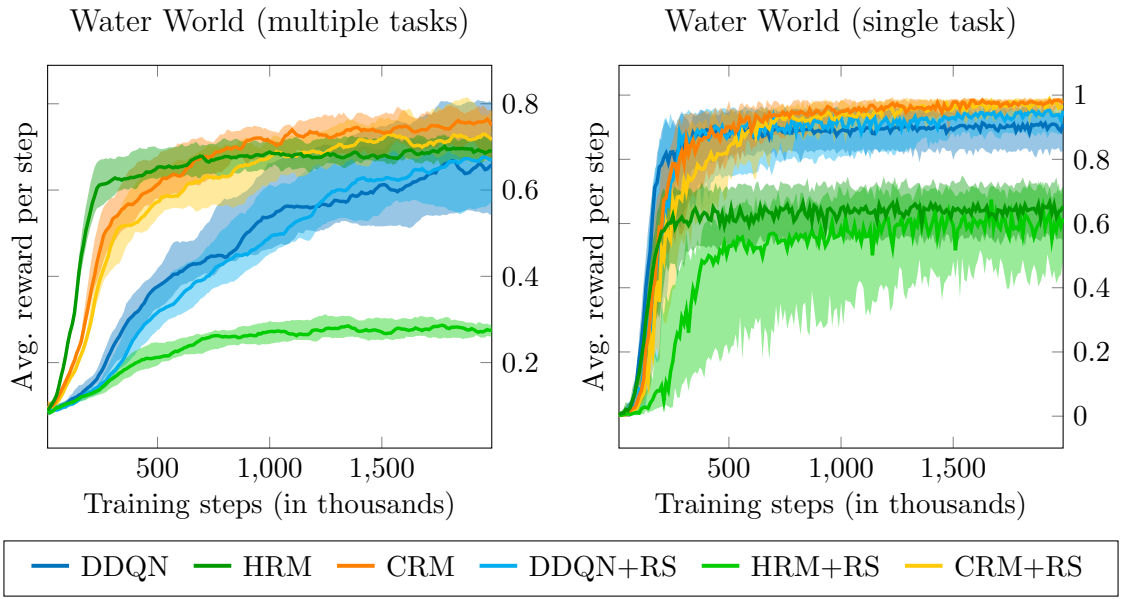
\begin{figure}
    \centering
    \begin{subfigure}{0.49\columnwidth}
        \centering
        \begin{tikzpicture}%

  \begin{axis}[%
    scale only axis,
    xlabel={Training steps (in thousands)},
    ylabel={Avg. reward per step},
    title style={font=\large},
    title={Water World (multiple tasks)},
    every axis x label/.style={at={(current axis.south)},above=-11mm},
    every axis y label/.append style={at={(current axis.west)},above=0mm},
    yticklabel pos=right,
    scaled x ticks = false,
    enlarge x limits=false,
    scale= 0.7
  ]

    \percentiles{data_V3/water-qrm.txt}{0}{1}{2}{3}{violet!70!white}{0.4}

    \percentiles{data_V3/water-crm.txt}{0}{1}{2}{3}{orange}{0.4}

    \percentiles{data_V3/water-crm1.txt}{0}{1}{2}{3}{teal}{0.4}

    \percentiles{data_V3/water-crm2.txt}{0}{1}{2}{3}{teal!40!lime}{0.4}

    \percentiles{data_V3/water-crm3.txt}{0}{1}{2}{3}{orange!40!cyan}{0.4}
  \end{axis}
\end{tikzpicture}%
    \end{subfigure}
    \begin{subfigure}{0.49\columnwidth}
        \centering
        \begin{tikzpicture}%
\node[draw=black] {%
	\begin{tabular}{l}
	\textbf{Legend:}\\
	\entry{violet!70!white}{QRM}\\
	\entry{orange}{CRM (3L/1024N)}\\
	\entry{teal}{CRM (3L/512N)}\\
	\entry{teal!40!lime}{CRM (3L/256N)}\\
	\entry{orange!40!cyan}{CRM (6L/64N)}
	\end{tabular}};%
\end{tikzpicture}%
    \end{subfigure}
    \caption{Results in the water world domain.}
    \label{fig:qrms}
\end{figure}

% Results
We ran experiments in multitask and single task settings. In the multitask setting, all the approaches learned one policy (i.e., network) to solve all the tasks. Figure~\ref{fig:continuous} shows the results on 10 randomly generated water world maps, with 2 trials per map. We normalized the average reward per step using the run that got the highest average reward across all the approaches. In the multitask experiments, CRM performs the best. HRM initially learns faster than CRM but converged to suboptimal policies and adding reward shaping decreased the performance of all the approaches. In the single task experiment, we evaluated the performance of all the approaches when trying to solve task 10 only (see Table~\ref{tab:wwt}). In this case, CRM also converged to better policies.

Finally, we compare the performance of CRM and QRM. As discussed in Section~\ref{sec:crm}, CRM and QRM are not equivalent when using function approximation. The most notable difference is that CRM uses a large network to learn a single policy $\pi(a|s,u)$ for all RM states whereas QRM uses a set of small networks, one to learn a policy $\pi_u(a|s)$ for each RM state $u\in U$. The results in Figure~\ref{fig:qrms} show that CRM and QRM have comparable performance on the water world. 
However, to do so CRM requires using a larger network. In this experiment, QRM is learning networks of 6 layers with 64 relu units (6L/64N) -- which are the same size as the networks used by \citeA{icml2018rms}. To get to the same performance as QRM, CRM has to use a network of 3 layers with 1024 relu units per layer. 
We leave it to future work to further investigate the trade-offs of using multiple small networks to solve a task (as in QRM) versus one network per task (as in CRM).
That said, CRM has the added advantage of being trivial to implement. In fact, we do not have results for QRM in the next section because it is unclear how to integrate QRM with DDPG.

\subsection{Results on Continuous Control Tasks}
\label{sec:res-cheetah}

\begin{figure}
    \centering
    \input{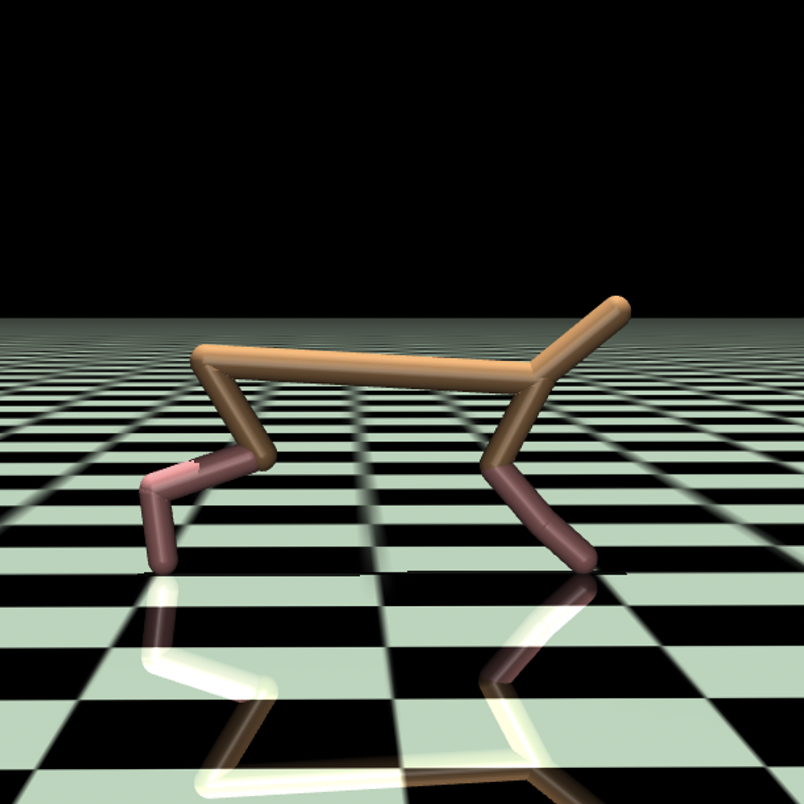}
    \caption{HalfCheetah-v3 domain. The first task consists of going from A to B and back as many times as possible. The second task consists of reaching F as soon as possible. }
    \label{fig:cheetah_dom}
\end{figure}

Our final set of experiments considers the case where the action space is continuous. We ran experiments on the HalfCheetah-v3 environment \shortcite{1606.01540}, shown in Figure~\ref{fig:cheetah_dom}. In this environment, the agent is a cheetah-like robot. This agent has 6 joints that it must learn to control in order to move forward or backwards. At each time step, the agent chooses how much force to apply to each joint, making the action space infinite. The state space is also continuous, including the location and velocities of each joint. 

We evaluated the performance of our approaches in two tasks (independently). All the approaches use DDPG as the underlying off-policy learning approach. In the case of HRL, the option policies are learned using DDPG but the high-level policy uses DQN. All the approaches use a feed-forward network with 2 layers and 256 relu units per layer. The batch size was $100n$ (where $n=1$ in DDPG, $n=|U|$ in CRM, and $n=|\mathcal{A}|$ in HRM) and the rest of the hyperparameters were set to their default values \shortcite{baselines}.

Figure~\ref{fig:cheetah}-left shows average results over 20 runs for the first task. This task consists of moving back and forth from point A to point B (shown in Figure~\ref{fig:cheetah_dom}) as many times as possible given a time limit of 1000 steps. The corresponding reward machine gives a reward of 1000 every time the agent completes a lap. The agent also receives a small \textit{control penalization} on every step -- which is a standard penalization that discourages the agent from applying large forces into its joints (it is a quadratic penalization over the agent's actions).

We note that this is a continuing task, and is also an example of a reward machine with loops and no terminal states. The inclusion of the control penalty means that the reward received depends on the current state, not just the reward machine transition taken. This task -- as well as the second used in this section -- is therefore not specified as a simple RM.

The results show that CRM largely outperforms the other approaches, completing 9 laps on average by the end of learning. HRM also performs well, completing around 6 laps per episode by the end of learning. As before, the problem with HRM is that it optimizes for reaching the next subgoal (A or B) without considering what has to be done next. While CRM learns to slow down before reaching A or B so it can quickly jump back afterwards, HRM learns to run full speed until reaching A or B and, as a result, keeps advancing a few steps before being able to slow down and start moving in the opposite direction.

\begin{figure}
    \centering
    \begin{subfigure}{0.49\columnwidth}
        \centering
        \begin{tikzpicture}%

  \begin{axis}[%
    scale only axis,
    xlabel={Training steps (in thousands)},
    ylabel={Avg. reward per step},
    title style={font=\large},
    title={Half-Cheetah (task 1)},
    every axis x label/.style={at={(current axis.south)},above=-11mm},
    every axis y label/.append style={at={(current axis.west)},above=0mm},
    yticklabel pos=right,
    scaled x ticks = false,
    enlarge x limits=false,
    scale= 0.7
  ]

    \percentiles{data_V3/cheetah-M1-crm-rs.txt}{0}{1}{2}{3}{yellow!80!red}{0.4}

    \percentiles{data_V3/cheetah-M1-crm.txt}{0}{1}{2}{3}{orange}{0.4}

    \percentiles{data_V3/cheetah-M1-hrm-rs.txt}{0}{1}{2}{3}{green!80!black}{0.4}

    \percentiles{data_V3/cheetah-M1-hrm.txt}{0}{1}{2}{3}{green!60!black}{0.4}

    \percentiles{data_V3/cheetah-M1-ql.txt}{0}{1}{2}{3}{cyan!50!blue}{0.4}

    \percentiles{data_V3/cheetah-M1-ql-rs.txt}{0}{1}{2}{3}{cyan}{0.4}

  \end{axis}
\end{tikzpicture}%
    \end{subfigure}
    \begin{subfigure}{0.49\columnwidth}
        \centering
        \begin{tikzpicture}%

  \begin{axis}[%
    scale only axis,
    xlabel={Training steps (in thousands)},
    ylabel={Avg. reward per step},
    title style={font=\large},
    title={Half-Cheetah (task 2)},
    every axis x label/.style={at={(current axis.south)},above=-11mm},
    every axis y label/.append style={at={(current axis.west)},above=0mm},
    yticklabel pos=right,
    scaled x ticks = false,
    enlarge x limits=false,
    scale= 0.7
  ]

    \percentiles{data_V3/cheetah-M2-crm-rs.txt}{0}{1}{2}{3}{yellow!80!red}{0.4}

    \percentiles{data_V3/cheetah-M2-crm.txt}{0}{1}{2}{3}{orange}{0.4}

    \percentiles{data_V3/cheetah-M2-hrm-rs.txt}{0}{1}{2}{3}{green!80!black}{0.4}

    \percentiles{data_V3/cheetah-M2-hrm.txt}{0}{1}{2}{3}{green!60!black}{0.4}

    \percentiles{data_V3/cheetah-M2-ql.txt}{0}{1}{2}{3}{cyan!50!blue}{0.4}

    \percentiles{data_V3/cheetah-M2-ql-rs.txt}{0}{1}{2}{3}{cyan}{0.4}

  \end{axis}
\end{tikzpicture}%
    \end{subfigure}
    
    \vspace{1mm}
    % \begin{tikzpicture}%
% \node[draw=black] {%
% 	\begin{tabular}{clll}
% 	\textbf{Legend:} &
% 	\entry{cyan!50!blue}{DDPG}&
% 	\entry{green!60!black}{HRM}&
% 	\entry{orange}{CRM}\\
% 	&\entry{cyan}{DDPG+RS}&
% 	\entry{green!80!black}{HRM+RS}&
% 	\entry{yellow!80!red}{CRM+RS}    	
% 	\end{tabular}};%
% \end{tikzpicture}%
\begin{tikzpicture}%
\node[draw=black] {%
	\begin{tabular}{llllll}
	\entrysmall{cyan!50!blue}{\small DDPG}&
	\entrysmall{green!60!black}{\small HRM}&
	\entrysmall{orange}{\small CRM}&
	\entrysmall{cyan}{\small DDPG+RS}&
	\entrysmall{green!80!black}{\small HRM+RS}&
	\entrysmall{yellow!80!red}{\small CRM+RS}    	
	\end{tabular}};%
\end{tikzpicture}%
    \caption{Results in the HalfCheetah-v3 domain.}
    \label{fig:cheetah}
\end{figure}
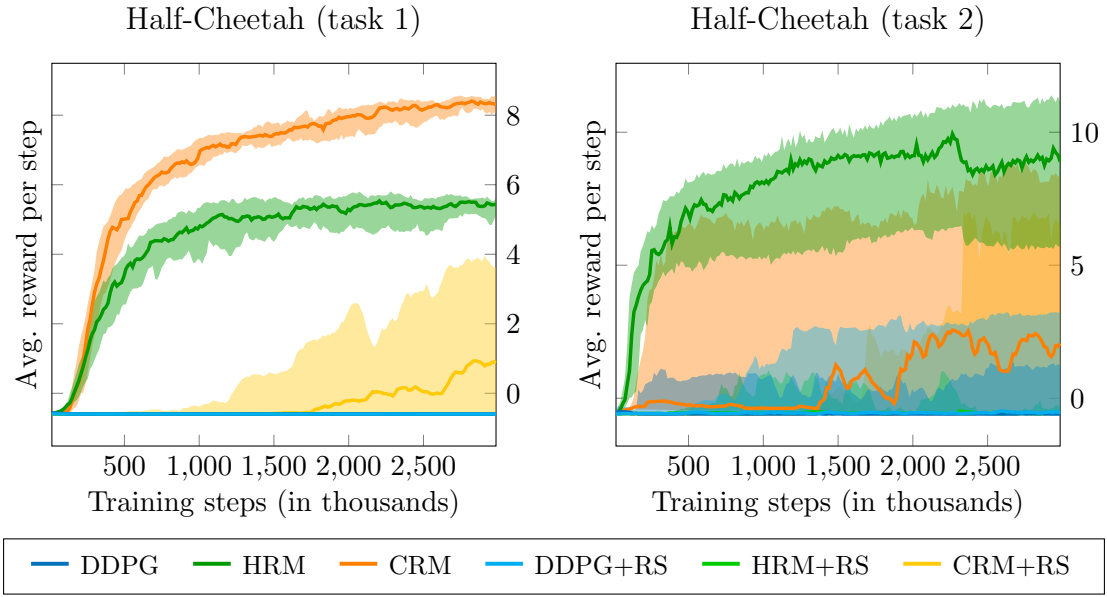

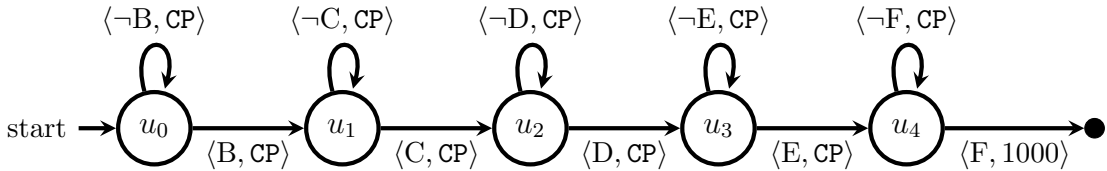
\begin{figure}[t]
    \centering
    \begin{tikzpicture}[node distance=2cm,on grid,every initial by arrow/.style={ultra thick,->, >=stealth}]
  \node[ultra thick,state,initial] (q_0) at (0,0) {\large$u_0$};
  \node[ultra thick,state]         (q_1) at (2.5,0) {\large$u_1$};
  \node[ultra thick,state]         (q_2) at (5,0) {\large$u_2$};
  \node[ultra thick,state]         (q_3) at (7.5,0) {\large$u_3$};
  \node[ultra thick,state]         (q_4) at (10,0) {\large$u_4$};
  \node[circle,draw=black,minimum size=0.26cm,inner sep=0pt,fill=black] (t1) at (12.5,0)  {};
  \path[ultra thick,->, >=stealth] (q_0) edge node [below] {$\tuple{\text{B},\texttt{CP}}$} (q_1);
  \path[ultra thick,->, >=stealth] (q_1) edge node [below] {$\tuple{\text{C},\texttt{CP}}$} (q_2);
  \path[ultra thick,->, >=stealth] (q_2) edge node [below] {$\tuple{\text{D},\texttt{CP}}$} (q_3);
  \path[ultra thick,->, >=stealth] (q_3) edge node [below] {$\tuple{\text{E},\texttt{CP}}$} (q_4);
  \path[ultra thick,->, >=stealth] (q_4) edge node [below] {$\tuple{\text{F},1000}$} (t1);

  \path[ultra thick,->, >=stealth] (q_0) edge [loop above] node {$\tuple{\neg \text{B}, \texttt{CP}}$} ();
  \path[ultra thick,->, >=stealth] (q_1) edge [loop above] node {$\tuple{\neg \text{C}, \texttt{CP}}$} ();
  \path[ultra thick,->, >=stealth] (q_2) edge [loop above] node {$\tuple{\neg \text{D}, \texttt{CP}}$} ();
  \path[ultra thick,->, >=stealth] (q_3) edge [loop above] node {$\tuple{\neg \text{E}, \texttt{CP}}$} ();
  \path[ultra thick,->, >=stealth] (q_4) edge [loop above] node {$\tuple{\neg \text{F}, \texttt{CP}}$} ();
\end{tikzpicture}
    \caption{Reward machine for the second task on the HalfCheetah-v3 domain. \texttt{CP} represents the control penalty usually used in this domain. }
    \label{fig:cheetah_task2}
\end{figure}

Note that the strength of CRM relies on being able to effectively share experience among different RM states. Being able to achieve one task while trying to achieve a different one (for example, reaching point A while trying to get to point B) allows CRM to learn efficiently. If that sort of behaviour cannot happen, then CRM would not help much since the shared experience would not provide new insights into how to solve the problem. In contrast, HRM performs the best in tasks where its problem decomposition preserves optimal policies. To provide a complete view of the strengths and weaknesses of these approaches, we designed a second task for HalfCheetah-v3 that is ideal for HRM and difficult for CRM.

This second task consists of reaching F (shown in Figure~\ref{fig:cheetah_dom}) and it is represented by the RM from Figure~\ref{fig:cheetah_task2}. This RM is a chain of 6 states that advances as the agent reaches B, C, D, E, and F. The agent gets a reward of 1000 when it reaches F and the control penalization (\texttt{CP}) otherwise. This is a challenging task for DDPG because the reward is very sparse. Adding CRM can make the reward less sparse if the probability that some RM state gets a positive reward increases by adding counterfactual experiences. This was the case in the first task, but it is not in this second task. On the other hand, HRM is a good fit for this problem for two reasons. First, the reward is less sparse for HRM since the agent gets some reward signal every time B, C, D, E, or F are reached. And second, the policy that results from composing optimal option policies is close to globally optimal.

The average results over 20 runs for the second task are shown in Figure~\ref{fig:cheetah}-right. As expected, HRM is the approach that performs best, as it is able to reach F in less than 90 steps. Interestingly, adding CRM to DDPG did help a bit, allowing the agent to reach F in 120 steps in some runs, but its performance was unreliable.

\subsection{Runtime Comparison}

The strong sample efficiency of CRM and HRM comes with a caveat, though. These two approaches are more computationally expensive than the cross-product baseline. In fact, Table~\ref{tab:runtimes} shows the average runtime for each approach in our domains. Generally speaking, the most computationally expensive approach is HRM followed by CRM (and there is almost no difference between adding or not reward shaping). The results on the Office and Craft domains were computed using one core on an Intel(R) Xeon(R) Gold 6148 processor. The results on the Water and Half-Cheetah domains were computed using one Tesla P100 GPU. 

These results are not surprising. In tabular RL, CRM performs one Q-update per RM state and HRM performs one Q-update per RM edge in every iteration. In contrast, the cross-product baseline performs only one Q-update per iteration. In the case of deep RL, CRM and HRM have the overhead of creating the counterfactual experiences, adding them to the replay buffer, and updating the network's weights using larger mini-batches. Note that a reason CRM and HRM have more overhead in the multitask setting is that counterfactual experiences are generated for every state in every task, not just the current task.

The good news is that the Q-updates performed by CRM and HRM can be done in parallel to reduce their runtimes. In fact, our implementations of CRM and HRM with deep RL exploit some degree of parallelism as the computation of gradients using larger mini-batches is parallelized by TensorFlow.  Therefore, the gap between the cross-product baseline, CRM, and HRM, can be reduced by using a stronger GPU. 

Finally, note that the advantages of using CRM or HRM go beyond improving sample efficiency. They decrease the sparsity of the reward signal and, as such, CRM and HRM might find good policies in problems where the cross-product baseline would have a hard time just finding any reward. This is the case for task 1 in the Half-Cheetah environment and its consequence is that the cross-product baseline is unlikely to solve such a problem regardless of how long it runs. In fact, we ran DDPG for 30 millions steps (over 3 days of computation) and was still unable to find a better-than-random policy for task 1.

\begin{table}
\caption{Runtime comparison. We use CP to denote the cross-product baseline (which is either Q-learning, DDQN, or DDPG, depending on the domain). The setup can be ST (single task), MT (multiple tasks), T1 (task 1), or T2 (task 2). We report average runtime and its standard deviation across all random seeds and maps per domain and approach.}\label{tab:runtimes}
\centering
{\small
\begin{tabular}{ccrrrrrr}
    \toprule
    Domain & Setup & \multicolumn{1}{c}{CP} & \multicolumn{1}{c}{CP+RS} & \multicolumn{1}{c}{HRM} & \multicolumn{1}{c}{HRM+RS} & \multicolumn{1}{c}{CRM} & \multicolumn{1}{c}{CRM+RS} \\\midrule
    Office World & ST & $2.5 \pm 0.1$ & $3.3 \pm 0.1$ & $16.1 \pm 0.5$ & $17.7 \pm 0.4$ & $11.9 \pm 0.3$ & $12.1 \pm 0.4$ \\
    (in seconds) & MT & $2.9 \pm 0.1$ & $3.7 \pm 0.1$ & $38.6 \pm 0.5$ & $40.6 \pm 0.7$ & $32.1 \pm 0.9$ & $32.3 \pm 0.9$ \\\midrule
    Craft World & ST & $0.9 \pm 0.0$ & $1.1 \pm 0.0$ & $8.8 \pm 0.2$ & $9.4 \pm 0.4$ & $6.3 \pm 0.1$ & $6.4 \pm 0.3$  \\
    (in minutes) & MT & $1.3 \pm 0.0$ & $1.6 \pm 0.0$ & $62.4 \pm 1.5$ & $64.1 \pm 2.4$ & $49.2 \pm 2.5$ & $50.3 \pm 2.2$  \\\midrule
    Water World & ST &$3.1 \pm 0.1$ & $3.0 \pm 0.1$ & $3.8 \pm 0.2$ & $3.7 \pm 0.2$ & $3.4 \pm 0.2$ & $3.5 \pm 0.2$  \\
    (in hours) & MT & $3.1 \pm 0.2$ & $3.1 \pm 0.1$ & $37.1 \pm 3.0$ & $36.4 \pm 2.8$ & $23.4 \pm 2.8$ & $21.1 \pm 0.4$ \\\midrule
    Half-Cheetah & T1 & $7.7 \pm 0.1$ & $6.9 \pm 0.5$ & $5.2 \pm 0.2$ & $4.8 \pm 0.5$ & $6.4 \pm 0.2$ & $6.2 \pm 0.8$ \\
    (in hours) & T2 & $7.1 \pm 0.7$ & $6.8 \pm 0.5$ & $5.7 \pm 0.2$ & $6.3 \pm 0.5$ & $7.4 \pm 0.6$ & $6.9 \pm 0.4$ \\
    \bottomrule
\end{tabular}}
\end{table}

\subsection{Code}
Our code is available at \url{github.com/RodrigoToroIcarte/reward_machines}, including our environments, raw results, and implementations of the cross-product baseline, automated reward shaping, CRM, and HRM using tabular Q-learning, DDQN, and DDPG. For the experiments with QRM, we use the following implementation: \url{bitbucket.org/RToroIcarte/qrm}. Our methods are fully integrated with the OpenAI Gym API \cite{1606.01540}.

\section{Related Work}
\label{sec:related_work}
% Reward specification and exploiting prior knowledge in RL
In this section, we discuss existing works on the topic of reward machines and how this paper fits within that body of literature. We then discuss how reward machines relate more generally with approaches for reward specifications and knowledge exploitation in RL.

\subsection{Reward Machine Research}

We originally proposed reward machines in an ICML publication \cite{icml2018rms}. At that time, there had also been work on using Linear Temporal Logic (LTL) or related languages to reward agents in MDPs \shortcite<e.g.,>{bacchus1996,lacerda2014optimal,lacerda2015optimal,cam-che-san-mci-socs17,brafman2017specifying} and RL \shortcite<e.g.,>{AksarayJKSB16,littman2017environment,li2016reinforcement,LiMB18,hasanbeig2018logically}. A popular approach was to translate the LTL specification into a finite state machine, reward the agent when the machine hits an accepting state, and learn policies using the cross-product baseline. In contrast, prior to our reward machine work, we had proposed a novel approach to RL with LTL rewards, called LPOPL, that did not exploit automata but rather exploited the structure of LTL natively to learn policies faster than existing methods \shortcite{aamas2018lpopl}. LPOPL is QRM's predecessor as it relies on the same learning principle: It decomposes the LTL tasks into many subtasks and learns policies for them in parallel via off-policy learning.

The main contributions of our ICML paper were to introduce RMs and QRM \cite{icml2018rms}. QRM generalizes LPOPL and also outperforms the cross-product baseline and Hierarchical RL. Since then, RMs have been used for solving problems in planning \shortcite{illanesYTM2019symbolic,IllanesYIM20}, robotics \shortcite{shah2020planning,shah2020interactive,defazio2021learning,camacho2020disentangled,camacho2021reward}, multi-agent systems \shortcite{neary2020reward}, lifelong RL \shortcite{zheng2021lifelong}, and partial observability \shortcite{tor-etal-neurips19}. \shortciteA{DeGiacomo2020transducers} also considered both Mealy and Moore versions of RMs, though theirs only output numbers (like our simple RMs) instead of reward functions. Finally, there has been prominent work on how to learn RMs from experience \shortcite<e.g.,>{tor-etal-neurips19,toro2019rldm,icarte2021learning,xu2020joint,xu2020active,furelos2020induction,furelos2020inductionAAAI,rens2020learning,hasanbeig2019deepsynth,velasquez2021learning}

Since our previous work, we have gained practical experience and new theoretical insights about reward machines -- which were reflected in this paper. In particular, we provided a cleaner definition of reward machines and QRM. On the RM side, we changed $\delta_r$ from returning a reward function on each transition to returning a reward function on each state (i.e., changed from a Mealy to Moore formulation). As discussed in Section \ref{sec:reward-machines}, this is as expressive as before but simpler. We also added terminal states to reward machines because terminal states naturally arise in most practical applications. On the QRM side, we proposed CRM as a novel view of QRM that is simpler to understand and implement. Another improvement was the addition of HRM. In our ICML paper, hierarchical RL was a baseline. We hand-picked the set of options and learned policies following recommendations from \citeA{sutton1999between} and \shortciteA{kulkarni2016hierarchical}. The key difference between HRM and our previous HRL baseline is that HRM automatically extracts the set of options from the reward machine.  Our new experiments have demonstrated the effectiveness of this approach, and thus we consider HRM itself to be a general HRL-based method for solving MDPRMs. 

The experimental evaluation in this paper also improves upon that first described in the original reward machine work. First, we have used a better performance metric: the average reward per step instead of the normalized discounted reward (see discussion below). Second, we have added experiments on a continuous control environment. These are the first known results on continuous control for reward machines. And third, we include single-task experiments. Our ICML paper only had multitask experiments, which created the misconception that QRM only worked for multitask learning. We addressed those concerns in this paper. Finally, we reimplemented our code and made it fully compatible with OpenAI Gym. We hope this will facilitate future research on reward machines.

\subsubsection{Average Reward Per Step vs Normalized Discounted Return}
The reason to prefer \textit{average reward per step (ARPS)} over \textit{normalized discounted reward (NDR)} is that NDR is a metric that penalizes suboptimal steps exponentially and depends on the chosen value of $\gamma$. For instance, let's consider problems that gives a reward of 1 for completing a task and zero otherwise (as in most of our experiments). Imagine that an optimal policy solves a certain task in 100 steps and that HRM, which converges to suboptimal solutions, solves it in 104 steps. Then, the NDR performance of HRM will be $\gamma^4$. Meanwhile, the performance of CRM will converge to $1.0$ since CRM converges to optimal policies in tabular domains. The problem with NDR is that $\gamma^4$ can go from zero to one depending on the value of $\gamma$. If $\gamma = 0.9$, the HRM performance will be $0.66$. If $\gamma = 0.8$, it will be $0.41$. In contrast, the ARPS performance of HRM will always be $0.96$ in this experiment, regardless of the value of $\gamma$. This makes ARPS results easier to interpret than the NDR in our setting.

\subsection{Reward Specification}
There has been significant interest in using formal languages to specify tasks, constraints, and advice in reinforcement learning \shortcite<e.g.,>{li2016reinforcement,littman2017environment,toro2017rldm,toroicarteKVM2018advice,aamas2018lpopl,hasanbeig2018logically,Hasanbeig2019Reinforcement,Hasanbeig2019Certified,HasanbeigKA19Towards,HasanbeigAK19Logically,HasanbeigAK20Cautious,HasanbeigKA20Deep,li2019temporal,li2019formal,ringstrom2019constraint,quint2019formal,jothimurugan2019composable,bozkurt2019control,koroglu2019reinforcement,gaon2019reinforcement,Yuan2019Modular,shah2020planning,shah2020interactive,ghasemi2020task,de2020restraining,li2020formal,leon2020systematic,jiang2020temporal,bozkurt2021learning,hammond2021multi,luo2021temporal,araki2021logical,cai2021modular,vaezipoor2021ltl2action}. We hope to see more research in this direction in the next years. 

The use of formal languages may 
facilitate specification of reward functions for RL in complex systems. Further, formal languages typically have compositional structure that RL agents could, in principle, exploit to learn policies more efficiently. That said, creating learning methods tailored to the myriad of languages people might wish to employ -- LTL, LDL (Linear Dynamic Logic), and so on -- is labour intensive. To mitigate this, \citeA{camamacho2019ijcai} proposed specifying reward functions in the developer's language of choice and using reward machines as a normal form representation for learning. This allows us to focus our efforts on two subproblems: (i) understanding how to translate particular languages into equivalent RMs and (ii) understanding how to exploit the RM structure to learn policies faster. We have made progress toward both problems. While this paper presents methods for exploiting RM structure in learning, \citeA{camamacho2019ijcai} present a way of translating a number of popular formal languages into RMs by leveraging well-understood relationships between languages and automata, and \citeA{middleton2020icaps} developed a tool that supports a subset of these translations.

%V2
That said, we note that designing reward functions that lead to the intended behavior can be difficult. The designer may neglect to penalize some undesirable changes in the environment, resulting in policies that are optimal with respect to the specification but that have negative side effects. More generally, the agent may find unintended ways to maximize reward \shortcite{amodei2016concrete}.
\shortciteA{hadfieldmenell2017ird} have asserted that manually designed reward functions should merely be viewed as evidence relating to what the designer actually intended.
Reward machines in general do not address this problem, though they may make it easier to specify temporally extended behaviors.

% V2
Some alternatives to manually designed reward functions are \emph{demonstrations} \shortcite<e.g.,>{ng2000irl,abbeel2004apprenticeship,ziebart2008maxent,fu2018robust}, \emph{positive and negative feedback} \shortcite<e.g.,>{thomaz2006reinforcement2,knox2008tamer,macglashan2017interactive}, and \emph{trajectory preferences} \shortcite<e.g.,>{akrour2012april,christiano2017preferences}. When using demonstrations, tasks are specified using a set of expert traces. Then, inverse reinforcement learning (IRL) is used to transform traces into reward functions. Positive and negative feedback which comes from an expert who observes the agent during training is also an alternative to specifying a reward function in advance. Trajectory preferences are another form of feedback that require the expert to just compare trajectories created by the agent and say which they prefer.
Demonstrations and feedback are a useful proxy for task specifications, but unlike reward machines they do not specify the task itself. 
Future work could look at ways to combine demonstrations and feedback with RMs. For example, the user could specify the graphical structure of an RM and then use IRL to learn the state-reward function of each RM state.

\subsection{Exploiting Prior Knowledge}

Our approaches for exploiting RM structure are inspired by methods for exploiting prior knowledge in RL. In particular, prior knowledge has been used for problem decomposition \shortcite{parr1998reinforcement,dietterich2000hierarchical,mann2015approximate}, data augmentation \shortcite{andrychowicz2017hindsight,pitis2020counterfactual}, and reward shaping \shortcite{Ng1999shaping}. 

Hierarchical reinforcement learning is the most successful methodology to exploit decompositions in RL. Some foundational HRL works include \emph{H-DYNA} \cite{singh1992reinforcement}, \emph{MAXQ} \cite{dietterich2000hierarchical}, \emph{HAMs} \cite{parr1998reinforcement}, and \emph{Options} \shortcite{sutton1999between}. The role of the hierarchy is to decompose the task into a set of sub-tasks that are reusable and easier to learn. However, these methods cannot guarantee convergence to optimal policies because hierarchies constrain the policy space and, hence, might prune optimal policies. An RM can be viewed as a form of hierarchy that also defines the reward function. This allows us to define methods that can speed up learning and still guarantee convergence to optimal policies (e.g., QRM, CRM, or RS). We also proposed HRM, which automatically extracts an option-based hierarchy from an RM to solve MDPRMs faster (although it inherits the possibility of convergence to suboptimal policies from the option framework).

\citeA{singh1992reinforcement,singh1992transfer} proposed an alternative to HRL which defines tasks as sequences of sub-goals. Independent policies are trained to achieve each sub-goal, and then a gating function learns to switch from one policy to the next. The same idea was exploited by \emph{policy sketches} \cite{andreas2016sketches} but without the need for an external signal when a sub-goal is reached. In contrast, reward machines are considerably more expressive than sub-goal sequences and sketches, as they allow for interleaving, loops, and compositions of entire reward functions. Indeed, regular expressions can be captured in finite state machines.

CRM exploits a similar learning principle as Hindsight Experience Replay (HER) and Counterfactual Data Augmentation (CoDA). HER was proposed by \shortciteA{andrychowicz2017hindsight} and, like CRM, relies on relabelling experiences to learn policies faster. However, CRM relabels experiences using the RM and HER uses goal states. One advantage of CRM is that RMs can encode temporally extended behaviours that cannot be encoded using HER, such as reaching a goal state while avoiding some objects (tasks 9 and 10 in Section~\ref{sec:res-water}) or loopy behaviours (task 1 in Section~\ref{sec:res-cheetah}). That said, an advantage of HER is that it can learn policies that generalize to unseen goal states. Currently, it is unclear how to achieve a similar behaviour using RMs (i.e., to learn a policy that generalizes to unseen RMs). 

CRM is also related to CoDA \cite{pitis2020counterfactual}. CoDA is a recently proposed technique for generating counterfactual experiences in RL. This approach consists of combining two experiences to generate new counterfactual experiences by exploiting locally independent causal factors. In contrast, CRM exploits the RM to generate multiple counterfactual experiences from one single environment experience. Further exploring synergies between RMs, HER, and CoDA is a promising direction for future work.

Our method for automated reward shaping was inspired by \shortciteA{cam-che-san-mci-goalsrl18}. In that work, \citeauthor{cam-che-san-mci-goalsrl18} proposed an approach for automated reward shaping over automata in service of finding a policy for a fully specified MDP with an LTL-specified reward function. To do so, they defined a potential-based function that considered the distance between each automata state and its closest accepting state. In this paper, we extended \citeauthor{cam-che-san-mci-goalsrl18}'s approach to work over simple reward machines.

\section{Concluding Remarks}

% Paper Summary
In this paper we introduced the notion of reward machines -- a form of finite state machine that can be used to specify the reward function of an RL agent. Reward machines support the specification of arbitrary rewards, including sparse rewards and rewards for temporally extended behaviors. Reward machines expose structure in the reward function and, in doing so, can %speed up
significantly improve the sample complexity of learning, as demonstrated in our experiments. Our methods enable us to find solutions faster and, more importantly, to solve problems that could not otherwise be solved given limited interaction with the environment. We proposed three different methodologies to exploit reward machine structure in learning: automated reward shaping, counterfactual reasoning, and decomposition methods. We discussed the convergence guarantees of these approaches in the tabular case and empirically evaluated their effectiveness in discrete and continuous domains.

Many questions remain open regarding reward machines. For instance, we know how to learn reward machines from experience \shortcite{tor-etal-neurips19,toro2019rldm,xu2020joint,xu2020active,furelos2020induction,furelos2020inductionAAAI,rens2020learning}, but all these methods assume access to a correct labelling function. How to learn RMs and a labelling function at the same time remains unknown. Similarly, it is unclear how to deal with noisy labelling functions. Here, we assume that the \emph{event detectors} that define the labelling function work perfectly, but this is unrealistic in many real world problems. If we know that a detector fails with certain probability, it is unclear what the next state of the reward machine should be. Recently, we proposed a method that learns a policy that generalize to unseen tasks defined in LTL \cite{vaezipoor2021ltl2action} but we do not know how to achieve a similar behaviour using RMs. Finally, our work focused on approaches to exploit the RM structure when using model-free RL methods but it seems likely that similar ideas could be exploited in model-based RL. We note that defining a differentiable version of RMs might be necessary to make progress in all these future work directions, since it will allow us to tackle these problems using gradient-based optimization techniques.

Finally, we see many opportunities for using formal languages to define correct reward specification via reward machines and defining novel RL methodologies to exploit the knowledge within reward machines -- resulting in agents that can understand humans' instructions and use them to solve problems faster. To keep pushing in this direction, it might be worth exploring the potential benefits of ascending the Chomsky hierarchy and studying combinations of reward machines with context-free and context-sensitive grammars.

\acks{We gratefully acknowledge funding from the Natural Sciences and Engineering Research Council of Canada (NSERC), the Canada CIFAR AI Chairs Program, and Microsoft Research. Resources used in preparing this research were provided, in part, by the Province of Ontario, the Government of Canada through CIFAR, and companies sponsoring the Vector Institute for Artificial Intelligence \url{www.vectorinstitute.ai/partners}.} 
We also acknowledge the rich multi-disciplinary research environment at the Schwartz Reisman Institute. 
%We thank the Schwartz Reisman Institute for Technology and Society for providing a rich multi-disciplinary research environment. 
This work was done while the first author was a Ph.D. student at the University of Toronto.

%\appendix
%\section{\added{Performance Metric}}
%\label{appendix:metric}
%\commentrt{TODO: Discuss why we use average reward per step. The alternatives are: average return per episode, success rate, normalized discounted reward, episode length.}
%[section ommitted]

\bibliographystyle{theapa}
\bibliography{references/related_work}

\providecommand{\Proceedings}{Proceedings\xspace}
  \providecommand{\International}{International\xspace}
  \providecommand{\Conference}{Conference\xspace}
  \providecommand{\Artificial}{Artificial\xspace}
  \providecommand{\Intelligence}{Intelligence\xspace}
  \providecommand{\AI}{Artificial Intelligence}
  \providecommand{\Scheduling}{Scheduling\xspace} \providecommand{\ofthe}{of
  the\xspace} \providecommand{\longshortnopar}[2]{#1}
  \providecommand{\longshort[2]}{#1 (#2)}
\begin{thebibliography}{}

\bibitem[\protect\BCAY{Abbeel\ \BBA\ Ng}{Abbeel\ \BBA\
  Ng}{2004}]{abbeel2004apprenticeship}
Abbeel, P.\BBACOMMA\  \BBA\ Ng, A.~Y. \BBOP2004\BBCP.
\newblock \BBOQ Apprenticeship learning via inverse reinforcement
  learning\BBCQ\
\newblock In {\Bem \longshort{\Proceedings \ofthe 21st \International
  \Conference on Machine Learning}{ICML}}.

\bibitem[\protect\BCAY{Akrour, Schoenauer,\ \BBA\ Sebag}{Akrour
  et~al.}{2012}]{akrour2012april}
Akrour, R., Schoenauer, M., \BBA\ Sebag, M. \BBOP2012\BBCP.
\newblock \BBOQ {APRIL}: Active preference learning-based reinforcement
  learning\BBCQ\
\newblock In {\Bem Machine Learning and Knowledge Discovery in Databases -
  European Conference, {ECML} {PKDD} 2012}, \lowercase{\BVOL}\ 7524 of {\Bem
  Lecture Notes in Computer Science}, \BPGS\ 116--131. Springer.

\bibitem[\protect\BCAY{Aksaray, Jones, Kong, Schwager,\ \BBA\ Belta}{Aksaray
  et~al.}{2016}]{AksarayJKSB16}
Aksaray, D., Jones, A., Kong, Z., Schwager, M., \BBA\ Belta, C. \BBOP2016\BBCP.
\newblock \BBOQ Q-learning for robust satisfaction of signal temporal logic
  specifications\BBCQ\
\newblock In {\Bem \longshort{\Proceedings \ofthe 55th IEEE \Conference on on
  Decision and Control}{CDC}}, \BPGS\ 6565--6570.

\bibitem[\protect\BCAY{Amodei, Olah, Steinhardt, Christiano, Schulman,\ \BBA\
  Man{\'{e}}}{Amodei et~al.}{2016}]{amodei2016concrete}
Amodei, D., Olah, C., Steinhardt, J., Christiano, P.~F., Schulman, J., \BBA\
  Man{\'{e}}, D. \BBOP2016\BBCP.
\newblock \BBOQ Concrete problems in {AI} safety\BBCQ\
\newblock {\Bem CoRR}, {\Bem abs/1606.06565}.

\bibitem[\protect\BCAY{Andreas, Klein,\ \BBA\ Levine}{Andreas
  et~al.}{2017}]{andreas2016sketches}
Andreas, J., Klein, D., \BBA\ Levine, S. \BBOP2017\BBCP.
\newblock \BBOQ Modular multitask reinforcement learning with policy
  sketches\BBCQ\
\newblock In {\Bem \longshort{\Proceedings \ofthe 34th \International
  \Conference on Machine Learning}{ICML}}, \BPGS\ 166--175.

\bibitem[\protect\BCAY{Andrychowicz, Wolski, Ray, Schneider, Fong, Welinder,
  McGrew, Tobin, Abbeel,\ \BBA\ Zaremba}{Andrychowicz
  et~al.}{2017}]{andrychowicz2017hindsight}
Andrychowicz, M., Wolski, F., Ray, A., Schneider, J., Fong, R., Welinder, P.,
  McGrew, B., Tobin, J., Abbeel, O.~P., \BBA\ Zaremba, W. \BBOP2017\BBCP.
\newblock \BBOQ Hindsight experience replay\BBCQ\
\newblock In {\Bem \longshort{\Proceedings \ofthe 30th \Conference on Advances
  in Neural Information Processing Systems}{NIPS}}, \BPGS\ 5048--5058.

\bibitem[\protect\BCAY{Araki, Li, Vodrahalli, Decastro, Fry,\ \BBA\ Rus}{Araki
  et~al.}{2021}]{araki2021logical}
Araki, B., Li, X., Vodrahalli, K., Decastro, J., Fry, M., \BBA\ Rus, D.
  \BBOP2021\BBCP.
\newblock \BBOQ The logical options framework\BBCQ\
\newblock In {\Bem \longshort{\Proceedings \ofthe 38th \International
  \Conference on Machine Learning}{ICML}}, \lowercase{\BVOL}\ 139, \BPGS\
  307--317.

\bibitem[\protect\BCAY{Bacchus, Boutilier,\ \BBA\ Grove}{Bacchus
  et~al.}{1996}]{bacchus1996}
Bacchus, F., Boutilier, C., \BBA\ Grove, A.~J. \BBOP1996\BBCP.
\newblock \BBOQ Rewarding behaviors\BBCQ\
\newblock In {\Bem \longshort{\Proceedings \ofthe 13th National \Conference on
  \AI{}}{AAAI}}, \BPGS\ 1160--1167.

\bibitem[\protect\BCAY{Bertram, Yang,\ \BBA\ Wei}{Bertram
  et~al.}{2018}]{bertram2018fast}
Bertram, J.~R., Yang, X., \BBA\ Wei, P. \BBOP2018\BBCP.
\newblock \BBOQ Fast online exact solutions for deterministic {MDP}s with
  sparse rewards\BBCQ\
\newblock {\Bem CoRR}, {\Bem abs/1805.02785}.

\bibitem[\protect\BCAY{Bozkurt, Wang,\ \BBA\ Pajic}{Bozkurt
  et~al.}{2021}]{bozkurt2021learning}
Bozkurt, A.~K., Wang, Y., \BBA\ Pajic, M. \BBOP2021\BBCP.
\newblock \BBOQ Learning optimal strategies for temporal tasks in stochastic
  games\BBCQ\
\newblock {\Bem CoRR}, {\Bem abs/2102.04307}.

\bibitem[\protect\BCAY{Bozkurt, Wang, Zavlanos,\ \BBA\ Pajic}{Bozkurt
  et~al.}{2020}]{bozkurt2019control}
Bozkurt, A.~K., Wang, Y., Zavlanos, M.~M., \BBA\ Pajic, M. \BBOP2020\BBCP.
\newblock \BBOQ Control synthesis from linear temporal logic specifications
  using model-free reinforcement learning\BBCQ\
\newblock In {\Bem \longshort{\Proceedings \ofthe 2020 IEEE \International
  \Conference on Robotics and Automation}{ICRA}}, \BPGS\ 10349--10355.

\bibitem[\protect\BCAY{Brafman, De~Giacomo,\ \BBA\ Patrizi}{Brafman
  et~al.}{2018}]{brafman2017specifying}
Brafman, R.~I., De~Giacomo, G., \BBA\ Patrizi, F. \BBOP2018\BBCP.
\newblock \BBOQ {LTL}f/{LDL}f non-{M}arkovian rewards\BBCQ\
\newblock In {\Bem \longshort{\Proceedings \ofthe 32nd AAAI \Conference on
  \AI{}}{AAAI}}, \BPGS\ 1771--1778.

\bibitem[\protect\BCAY{Brockman, Cheung, Pettersson, Schneider, Schulman,
  Tang,\ \BBA\ Zaremba}{Brockman et~al.}{2016}]{1606.01540}
Brockman, G., Cheung, V., Pettersson, L., Schneider, J., Schulman, J., Tang,
  J., \BBA\ Zaremba, W. \BBOP2016\BBCP.
\newblock \BBOQ Open{AI} gym\BBCQ\
\newblock {\Bem CoRR}, {\Bem abs/1606.01540}.

\bibitem[\protect\BCAY{Cai, Hasanbeig, Xiao, Abate,\ \BBA\ Kan}{Cai
  et~al.}{2021}]{cai2021modular}
Cai, M., Hasanbeig, M., Xiao, S., Abate, A., \BBA\ Kan, Z. \BBOP2021\BBCP.
\newblock \BBOQ Modular deep reinforcement learning for continuous motion
  planning with temporal logic\BBCQ\
\newblock {\Bem CoRR}, {\Bem abs/2102.12855}.

\bibitem[\protect\BCAY{Camacho, Chen, Sanner,\ \BBA\ McIlraith}{Camacho
  et~al.}{2017}]{cam-che-san-mci-socs17}
Camacho, A., Chen, O., Sanner, S., \BBA\ McIlraith, S.~A. \BBOP2017\BBCP.
\newblock \BBOQ Non-{M}arkovian rewards expressed in {LTL:} {G}uiding search
  via reward shaping\BBCQ\
\newblock In {\Bem \longshort{\Proceedings \ofthe 10th Symposium on
  Combinatorial Search}{SOCS}}, \BPGS\ 159--160.

\bibitem[\protect\BCAY{Camacho, Chen, Sanner,\ \BBA\ McIlraith}{Camacho
  et~al.}{2018}]{cam-che-san-mci-goalsrl18}
Camacho, A., Chen, O., Sanner, S., \BBA\ McIlraith, S.~A. \BBOP2018\BBCP.
\newblock \BBOQ Non-{M}arkovian rewards expressed in {LTL}: Guiding search via
  reward shaping (extended version)\BBCQ\
\newblock In {\Bem 1st Workshop on Goal Specifications for Reinforcement
  Learning}.
\newblock Workshop held jointly at ICML, IJCAI, and AAMAS 2018.

\bibitem[\protect\BCAY{Camacho, Toro~Icarte, Klassen, Valenzano,\ \BBA\
  McIlraith}{Camacho et~al.}{2019}]{camamacho2019ijcai}
Camacho, A., Toro~Icarte, R., Klassen, T.~Q., Valenzano, R., \BBA\ McIlraith,
  S.~A. \BBOP2019\BBCP.
\newblock \BBOQ {LTL} and beyond: Formal languages for reward function
  specification in reinforcement learning\BBCQ\
\newblock In {\Bem \longshort{\Proceedings \ofthe 28th \International Joint
  \Conference on \AI{}}{IJCAI}}, \BPGS\ 6065--6073.

\bibitem[\protect\BCAY{Camacho, Varley, Jain, Iscen,\ \BBA\
  Kalashnikov}{Camacho et~al.}{2020}]{camacho2020disentangled}
Camacho, A., Varley, J., Jain, D., Iscen, A., \BBA\ Kalashnikov, D.
  \BBOP2020\BBCP.
\newblock \BBOQ Disentangled planning and control in vision based robotics via
  reward machines\BBCQ\
\newblock {\Bem CoRR}, {\Bem abs/2012.14464}.

\bibitem[\protect\BCAY{Camacho, Varley, Zeng, Jain, Iscen,\ \BBA\
  Kalashnikov}{Camacho et~al.}{2021}]{camacho2021reward}
Camacho, A., Varley, J., Zeng, A., Jain, D., Iscen, A., \BBA\ Kalashnikov, D.
  \BBOP2021\BBCP.
\newblock \BBOQ Reward machines for vision-based robotic manipulation\BBCQ\
\newblock In {\Bem \longshort{\Proceedings \ofthe 2021 IEEE \International
  \Conference on Robotics and Automation}{ICRA}}, \BPGS\ 14284--14290.

\bibitem[\protect\BCAY{Christiano, Leike, Brown, Martic, Legg,\ \BBA\
  Amodei}{Christiano et~al.}{2017}]{christiano2017preferences}
Christiano, P.~F., Leike, J., Brown, T.~B., Martic, M., Legg, S., \BBA\ Amodei,
  D. \BBOP2017\BBCP.
\newblock \BBOQ Deep reinforcement learning from human preferences\BBCQ\
\newblock In {\Bem Advances in Neural Information Processing Systems 30: Annual
  Conference on Neural Information Processing Systems 2017}, \BPGS\ 4299--4307.

\bibitem[\protect\BCAY{{De Giacomo}, Favorito, Iocchi, Patrizi,\ \BBA\
  Ronca}{{De Giacomo} et~al.}{2020}]{DeGiacomo2020transducers}
{De Giacomo}, G., Favorito, M., Iocchi, L., Patrizi, F., \BBA\ Ronca, A.
  \BBOP2020\BBCP.
\newblock \BBOQ Temporal logic monitoring rewards via transducers\BBCQ\
\newblock In {\Bem \longshort{\Proceedings \ofthe 17th \International
  \Conference on Knowledge Representation and Reasoning}{KR}}, \BPGS\ 860--870.

\bibitem[\protect\BCAY{De~Giacomo, Iocchi, Favorito,\ \BBA\ Patrizi}{De~Giacomo
  et~al.}{2019}]{de2019foundations}
De~Giacomo, G., Iocchi, L., Favorito, M., \BBA\ Patrizi, F. \BBOP2019\BBCP.
\newblock \BBOQ Foundations for restraining bolts: Reinforcement learning with
  {LTL}f/{LDL}f restraining specifications\BBCQ\
\newblock In {\Bem \longshort{\Proceedings \ofthe 29th \International
  \Conference on Automated Planning and \Scheduling}{ICAPS}}, \BPGS\ 128--136.

\bibitem[\protect\BCAY{De~Giacomo, Iocchi, Favorito,\ \BBA\ Patrizi}{De~Giacomo
  et~al.}{2020}]{de2020restraining}
De~Giacomo, G., Iocchi, L., Favorito, M., \BBA\ Patrizi, F. \BBOP2020\BBCP.
\newblock \BBOQ Restraining bolts for reinforcement learning agents.\BBCQ\
\newblock In {\Bem \longshort{\Proceedings \ofthe 34th AAAI \Conference on
  \AI{}}{AAAI}}, \BPGS\ 13659--13662.

\bibitem[\protect\BCAY{DeFazio\ \BBA\ Zhang}{DeFazio\ \BBA\
  Zhang}{2021}]{defazio2021learning}
DeFazio, D.\BBACOMMA\  \BBA\ Zhang, S. \BBOP2021\BBCP.
\newblock \BBOQ Learning quadruped locomotion policies with reward
  machines\BBCQ\
\newblock {\Bem CoRR}, {\Bem abs/2107.10969}.

\bibitem[\protect\BCAY{Dietterich}{Dietterich}{2000}]{dietterich2000hierarchical}
Dietterich, T.~G. \BBOP2000\BBCP.
\newblock \BBOQ Hierarchical reinforcement learning with the {MAXQ} value
  function decomposition\BBCQ\
\newblock {\Bem \longshortnopar{Journal of \AI{} Research}{JAIR}}, {\Bem 13},
  227--303.

\bibitem[\protect\BCAY{Fu, Luo,\ \BBA\ Levine}{Fu et~al.}{2018}]{fu2018robust}
Fu, J., Luo, K., \BBA\ Levine, S. \BBOP2018\BBCP.
\newblock \BBOQ Learning robust rewards with adverserial inverse reinforcement
  learning\BBCQ\
\newblock In {\Bem 6th International Conference on Learning Representations,
  {ICLR} 2018}. OpenReview.net.

\bibitem[\protect\BCAY{Furelos-Blanco, Law, Jonsson, Broda,\ \BBA\
  Russo}{Furelos-Blanco et~al.}{2020a}]{furelos2020induction}
Furelos-Blanco, D., Law, M., Jonsson, A., Broda, K., \BBA\ Russo, A.
  \BBOP2020a\BBCP.
\newblock \BBOQ Induction and exploitation of subgoal automata for
  reinforcement learning\BBCQ\
\newblock {\Bem CoRR}, {\Bem abs/2009.03855}.

\bibitem[\protect\BCAY{Furelos-Blanco, Law, Russo, Broda,\ \BBA\
  Jonsson}{Furelos-Blanco et~al.}{2020b}]{furelos2020inductionAAAI}
Furelos-Blanco, D., Law, M., Russo, A., Broda, K., \BBA\ Jonsson, A.
  \BBOP2020b\BBCP.
\newblock \BBOQ Induction of subgoal automata for reinforcement learning.\BBCQ\
\newblock In {\Bem \longshort{\Proceedings \ofthe 34th AAAI \Conference on
  \AI{}}{AAAI}}, \BPGS\ 3890--3897.

\bibitem[\protect\BCAY{Gaon\ \BBA\ Brafman}{Gaon\ \BBA\
  Brafman}{2020}]{gaon2019reinforcement}
Gaon, M.\BBACOMMA\  \BBA\ Brafman, R. \BBOP2020\BBCP.
\newblock \BBOQ Reinforcement learning with non-{M}arkovian rewards\BBCQ\
\newblock In {\Bem \longshort{\Proceedings \ofthe 34th AAAI \Conference on
  \AI{}}{AAAI}}, \BPGS\ 3980--3987.

\bibitem[\protect\BCAY{Ghasemi, Bulgur,\ \BBA\ Topcu}{Ghasemi
  et~al.}{2020}]{ghasemi2020task}
Ghasemi, M., Bulgur, E.~A., \BBA\ Topcu, U. \BBOP2020\BBCP.
\newblock \BBOQ Task-oriented active perception and planning in environments
  with partially known semantics\BBCQ\
\newblock In {\Bem \longshort{\Proceedings \ofthe 37th \International
  \Conference on Machine Learning}{ICML}}.

\bibitem[\protect\BCAY{Hadfield{-}Menell, Milli, Abbeel, Russell,\ \BBA\
  Dragan}{Hadfield{-}Menell et~al.}{2017}]{hadfieldmenell2017ird}
Hadfield{-}Menell, D., Milli, S., Abbeel, P., Russell, S.~J., \BBA\ Dragan,
  A.~D. \BBOP2017\BBCP.
\newblock \BBOQ Inverse reward design\BBCQ\
\newblock In {\Bem \longshort{\Proceedings \ofthe 30th \Conference on Advances
  in Neural Information Processing Systems}{NIPS}}, \BPGS\ 6765--6774.

\bibitem[\protect\BCAY{Hammond, Abate, Gutierrez,\ \BBA\ Wooldridge}{Hammond
  et~al.}{2021}]{hammond2021multi}
Hammond, L., Abate, A., Gutierrez, J., \BBA\ Wooldridge, M. \BBOP2021\BBCP.
\newblock \BBOQ Multi-agent reinforcement learning with temporal logic
  specifications\BBCQ\
\newblock In {\Bem \longshort{\Proceedings \ofthe 20th \International
  \Conference on Autonomous Agents and Multiagent Systems}{AAMAS}}, \BPGS\
  583--592.

\bibitem[\protect\BCAY{Hasanbeig, Abate,\ \BBA\ Kroening}{Hasanbeig
  et~al.}{2018}]{hasanbeig2018logically}
Hasanbeig, M., Abate, A., \BBA\ Kroening, D. \BBOP2018\BBCP.
\newblock \BBOQ Logically-constrained reinforcement learning\BBCQ\
\newblock {\Bem CoRR}, {\Bem abs/1801.08099}.

\bibitem[\protect\BCAY{Hasanbeig, Abate,\ \BBA\ Kroening}{Hasanbeig
  et~al.}{2019a}]{Hasanbeig2019Certified}
Hasanbeig, M., Abate, A., \BBA\ Kroening, D. \BBOP2019a\BBCP.
\newblock \BBOQ Certified reinforcement learning with logic guidance\BBCQ\
\newblock {\Bem CoRR}, {\Bem abs/1902.00778}.

\bibitem[\protect\BCAY{Hasanbeig, Abate,\ \BBA\ Kroening}{Hasanbeig
  et~al.}{2019b}]{HasanbeigAK19Logically}
Hasanbeig, M., Abate, A., \BBA\ Kroening, D. \BBOP2019b\BBCP.
\newblock \BBOQ Logically-constrained neural fitted {Q}-iteration\BBCQ\
\newblock In {\Bem \longshort{\Proceedings \ofthe 18th \International
  \Conference on Autonomous Agents and Multiagent Systems}{AAMAS}}, \BPGS\
  2012--2014.

\bibitem[\protect\BCAY{Hasanbeig, Abate,\ \BBA\ Kroening}{Hasanbeig
  et~al.}{2020}]{HasanbeigAK20Cautious}
Hasanbeig, M., Abate, A., \BBA\ Kroening, D. \BBOP2020\BBCP.
\newblock \BBOQ Cautious reinforcement learning with logical constraints\BBCQ\
\newblock In {\Bem \longshort{\Proceedings \ofthe 19th \International
  \Conference on Autonomous Agents and Multiagent Systems}{AAMAS}}, \BPGS\
  483--491.

\bibitem[\protect\BCAY{Hasanbeig, Jeppu, Abate, Melham,\ \BBA\
  Kroening}{Hasanbeig et~al.}{2021}]{hasanbeig2019deepsynth}
Hasanbeig, M., Jeppu, N.~Y., Abate, A., Melham, T., \BBA\ Kroening, D.
  \BBOP2021\BBCP.
\newblock \BBOQ {DeepSynth}: Automata synthesis for automatic task segmentation
  in deep reinforcement learning\BBCQ\
\newblock In {\Bem \longshort{\Proceedings \ofthe 35th AAAI \Conference on
  \AI{}}{AAAI}}, \BPGS\ 7647--7656.

\bibitem[\protect\BCAY{Hasanbeig, Kantaros, Abate, Kroening, Pappas,\ \BBA\
  Lee}{Hasanbeig et~al.}{2019a}]{Hasanbeig2019Reinforcement}
Hasanbeig, M., Kantaros, Y., Abate, A., Kroening, D., Pappas, G.~J., \BBA\ Lee,
  I. \BBOP2019a\BBCP.
\newblock \BBOQ Reinforcement learning for temporal logic control synthesis
  with probabilistic satisfaction guarantees\BBCQ\
\newblock In {\Bem \longshort{\Proceedings \ofthe 58th IEEE \Conference on on
  Decision and Control}{CDC}}, \BPGS\ 5338--5343.

\bibitem[\protect\BCAY{Hasanbeig, Kroening,\ \BBA\ Abate}{Hasanbeig
  et~al.}{2019b}]{HasanbeigKA19Towards}
Hasanbeig, M., Kroening, D., \BBA\ Abate, A. \BBOP2019b\BBCP.
\newblock \BBOQ Towards verifiable and safe model-free reinforcement
  learning\BBCQ\
\newblock In {\Bem Proceedings of the 1st Workshop on Artificial Intelligence
  and Formal Verification, Logic, Automata, and Synthesis (OVERLAY)}, \BPGS\
  1--9.

\bibitem[\protect\BCAY{Hasanbeig, Kroening,\ \BBA\ Abate}{Hasanbeig
  et~al.}{2020}]{HasanbeigKA20Deep}
Hasanbeig, M., Kroening, D., \BBA\ Abate, A. \BBOP2020\BBCP.
\newblock \BBOQ Deep reinforcement learning with temporal logics\BBCQ\
\newblock In {\Bem \longshort{\Proceedings \ofthe 18th \International
  \Conference on Formal Modeling and Analysis of Timed Systems}{FORMATS}},
  \BPGS\ 1--22.

\bibitem[\protect\BCAY{Hesse, Plappert, Radford, Schulman, Sidor,\ \BBA\
  Wu}{Hesse et~al.}{2017}]{baselines}
Hesse, C., Plappert, M., Radford, A., Schulman, J., Sidor, S., \BBA\ Wu, Y.
  \BBOP2017\BBCP.
\newblock \BBOQ Open{AI} baselines\BBCQ\
\newblock \url{https://github.com/openai/baselines}.

\bibitem[\protect\BCAY{Hopcroft\ \BBA\ Ullman}{Hopcroft\ \BBA\
  Ullman}{1979}]{hopcroftautomatatheory}
Hopcroft, J.~E.\BBACOMMA\  \BBA\ Ullman, J.~D. \BBOP1979\BBCP.
\newblock {\Bem Introduction to Automata Theory, Languages and Computation}.
\newblock Addison-Wesley.

\bibitem[\protect\BCAY{Illanes, Yan, {Toro Icarte},\ \BBA\ McIlraith}{Illanes
  et~al.}{2019}]{illanesYTM2019symbolic}
Illanes, L., Yan, X., {Toro Icarte}, R., \BBA\ McIlraith, S.~A. \BBOP2019\BBCP.
\newblock \BBOQ Symbolic planning and model-free reinforcement learning:
  Training taskable agents\BBCQ\
\newblock In {\Bem \longshort{\Proceedings \ofthe 4th Multi-disciplinary
  \Conference on Reinforcement Learning and Decision}{RLDM}}, \BPGS\ 191--195.

\bibitem[\protect\BCAY{Illanes, Yan, {Toro Icarte},\ \BBA\ McIlraith}{Illanes
  et~al.}{2020}]{IllanesYIM20}
Illanes, L., Yan, X., {Toro Icarte}, R., \BBA\ McIlraith, S.~A. \BBOP2020\BBCP.
\newblock \BBOQ Symbolic plans as high-level instructions for reinforcement
  learning\BBCQ\
\newblock In {\Bem \longshort{\Proceedings \ofthe 30th \International
  \Conference on Automated Planning and \Scheduling}{ICAPS}}, \BPGS\ 540--550.

\bibitem[\protect\BCAY{Jiang, Bharadwaj, Wu, Shah, Topcu,\ \BBA\ Stone}{Jiang
  et~al.}{2021}]{jiang2020temporal}
Jiang, Y., Bharadwaj, S., Wu, B., Shah, R., Topcu, U., \BBA\ Stone, P.
  \BBOP2021\BBCP.
\newblock \BBOQ Temporal-logic-based reward shaping for continuing
  reinforcement learning tasks\BBCQ\
\newblock In {\Bem \longshort{\Proceedings \ofthe 35th AAAI \Conference on
  \AI{}}{AAAI}}, \BPGS\ 7995--8003.

\bibitem[\protect\BCAY{Jothimurugan, Alur,\ \BBA\ Bastani}{Jothimurugan
  et~al.}{2019}]{jothimurugan2019composable}
Jothimurugan, K., Alur, R., \BBA\ Bastani, O. \BBOP2019\BBCP.
\newblock \BBOQ A composable specification language for reinforcement learning
  tasks\BBCQ\
\newblock In {\Bem \longshort{\Proceedings \ofthe 32nd \Conference on Advances
  in Neural Information Processing Systems}{NeurIPS}}, \BPGS\ 13041--13051.

\bibitem[\protect\BCAY{Karpathy}{Karpathy}{2015}]{karpathy2015waterworld}
Karpathy, A. \BBOP2015\BBCP.
\newblock \BBOQ {REINFORCE}js: {W}ater{W}orld demo\BBCQ\
\newblock
  \\\url{http://cs.stanford.edu/people/karpathy/reinforcejs/waterworld.html}.

\bibitem[\protect\BCAY{Knox\ \BBA\ Stone}{Knox\ \BBA\
  Stone}{2008}]{knox2008tamer}
Knox, W.~B.\BBACOMMA\  \BBA\ Stone, P. \BBOP2008\BBCP.
\newblock \BBOQ Tamer: {T}raining an agent manually via evaluative
  reinforcement\BBCQ\
\newblock In {\Bem \longshort{\Proceedings \ofthe 7th IEEE \International
  \Conference on Development and Learning}{ICDL}}, \BPGS\ 292--297.

\bibitem[\protect\BCAY{Koroglu\ \BBA\ Sen}{Koroglu\ \BBA\
  Sen}{2019}]{koroglu2019reinforcement}
Koroglu, Y.\BBACOMMA\  \BBA\ Sen, A. \BBOP2019\BBCP.
\newblock \BBOQ Reinforcement learning-driven test generation for {A}ndroid
  {GUI} applications using formal specifications\BBCQ\
\newblock {\Bem CoRR}, {\Bem abs/1911.05403}.

\bibitem[\protect\BCAY{Kulkarni, Narasimhan, Saeedi,\ \BBA\ Tenenbaum}{Kulkarni
  et~al.}{2016}]{kulkarni2016hierarchical}
Kulkarni, T.~D., Narasimhan, K., Saeedi, A., \BBA\ Tenenbaum, J.
  \BBOP2016\BBCP.
\newblock \BBOQ Hierarchical deep reinforcement learning: {I}ntegrating
  temporal abstraction and intrinsic motivation\BBCQ\
\newblock In {\Bem \longshort{\Proceedings \ofthe 29th \Conference on Advances
  in Neural Information Processing Systems}{NIPS}}, \BPGS\ 3675--3683.

\bibitem[\protect\BCAY{Lacerda, Parker,\ \BBA\ Hawes}{Lacerda
  et~al.}{2014}]{lacerda2014optimal}
Lacerda, B., Parker, D., \BBA\ Hawes, N. \BBOP2014\BBCP.
\newblock \BBOQ Optimal and dynamic planning for {M}arkov decision processes
  with co-safe {LTL} specifications\BBCQ\
\newblock In {\Bem \longshort{\Proceedings \ofthe 2014 IEEE/RSJ \International
  \Conference on Intelligent Robots and Systems}{IROS}}, \BPGS\ 1511--1516.

\bibitem[\protect\BCAY{Lacerda, Parker,\ \BBA\ Hawes}{Lacerda
  et~al.}{2015}]{lacerda2015optimal}
Lacerda, B., Parker, D., \BBA\ Hawes, N. \BBOP2015\BBCP.
\newblock \BBOQ Optimal policy generation for partially satisfiable co-safe
  {LTL} specifications\BBCQ\
\newblock In {\Bem \longshort{\Proceedings \ofthe 24th \International Joint
  \Conference on \AI{}}{IJCAI}}, \BPGS\ 1587--1593.

\bibitem[\protect\BCAY{Leon, Shanahan,\ \BBA\ Belardinelli}{Leon
  et~al.}{2020}]{leon2020systematic}
Leon, B.~G., Shanahan, M., \BBA\ Belardinelli, F. \BBOP2020\BBCP.
\newblock \BBOQ Systematic generalisation through task temporal logic and deep
  reinforcement learning\BBCQ\
\newblock {\Bem CoRR}, {\Bem abs/2006.08767}.

\bibitem[\protect\BCAY{Li}{Li}{2020}]{li2020formal}
Li, X. \BBOP2020\BBCP.
\newblock {\Bem A formal methods approach to interpretability, safety and
  composability for reinforcement learning}.
\newblock Ph.D.\ thesis, Boston University.

\bibitem[\protect\BCAY{Li\ \BBA\ Belta}{Li\ \BBA\ Belta}{2019}]{li2019temporal}
Li, X.\BBACOMMA\  \BBA\ Belta, C. \BBOP2019\BBCP.
\newblock \BBOQ Temporal logic guided safe reinforcement learning using control
  barrier functions\BBCQ\
\newblock {\Bem CoRR}, {\Bem abs/1903.09885}.

\bibitem[\protect\BCAY{Li, Ma,\ \BBA\ Belta}{Li et~al.}{2018}]{LiMB18}
Li, X., Ma, Y., \BBA\ Belta, C. \BBOP2018\BBCP.
\newblock \BBOQ A policy search method for temporal logic specified
  reinforcement learning tasks\BBCQ\
\newblock In {\Bem \longshort{\Proceedings \ofthe 2018 Annual American Control
  \Conference}{ACC}}, \BPGS\ 240--245.

\bibitem[\protect\BCAY{Li, Serlin, Yang,\ \BBA\ Belta}{Li
  et~al.}{2019}]{li2019formal}
Li, X., Serlin, Z., Yang, G., \BBA\ Belta, C. \BBOP2019\BBCP.
\newblock \BBOQ A formal methods approach to interpretable reinforcement
  learning for robotic planning\BBCQ\
\newblock {\Bem Science Robotics}, {\Bem 4\/}(37).

\bibitem[\protect\BCAY{Li, Vasile,\ \BBA\ Belta}{Li
  et~al.}{2017}]{li2016reinforcement}
Li, X., Vasile, C.~I., \BBA\ Belta, C. \BBOP2017\BBCP.
\newblock \BBOQ Reinforcement learning with temporal logic rewards\BBCQ\
\newblock In {\Bem \longshort{\Proceedings \ofthe 2017 IEEE/RSJ \International
  \Conference on Intelligent Robots and Systems}{IROS}}, \BPGS\ 3834--3839.

\bibitem[\protect\BCAY{Lillicrap, Hunt, Pritzel, Heess, Erez, Tassa, Silver,\
  \BBA\ Wierstra}{Lillicrap et~al.}{2016}]{lillicrap2015continuous}
Lillicrap, T.~P., Hunt, J.~J., Pritzel, A., Heess, N., Erez, T., Tassa, Y.,
  Silver, D., \BBA\ Wierstra, D. \BBOP2016\BBCP.
\newblock \BBOQ Continuous control with deep reinforcement learning\BBCQ\
\newblock In Bengio, Y.\BBACOMMA\  \BBA\ LeCun, Y.\BEDS, {\Bem
  \longshort{\Proceedings \ofthe 4th \International \Conference on Learning
  Representations}{ICLR}}.

\bibitem[\protect\BCAY{Littman, Topcu, Fu, Isbell, Wen,\ \BBA\
  MacGlashan}{Littman et~al.}{2017}]{littman2017environment}
Littman, M.~L., Topcu, U., Fu, J., Isbell, C., Wen, M., \BBA\ MacGlashan, J.
  \BBOP2017\BBCP.
\newblock \BBOQ Environment-independent task specifications via {GLTL}\BBCQ\
\newblock {\Bem CoRR}, {\Bem abs/1704.04341}.

\bibitem[\protect\BCAY{Luo\ \BBA\ Zavlanos}{Luo\ \BBA\
  Zavlanos}{2021}]{luo2021temporal}
Luo, X.\BBACOMMA\  \BBA\ Zavlanos, M.~M. \BBOP2021\BBCP.
\newblock \BBOQ Temporal logic task allocation in heterogeneous multi-robot
  systems\BBCQ\
\newblock {\Bem CoRR}, {\Bem abs/2101.05694}.

\bibitem[\protect\BCAY{MacGlashan, Ho, Loftin, Peng, Wang, Roberts, Taylor,\
  \BBA\ Littman}{MacGlashan et~al.}{2017}]{macglashan2017interactive}
MacGlashan, J., Ho, M.~K., Loftin, R.~T., Peng, B., Wang, G., Roberts, D.~L.,
  Taylor, M.~E., \BBA\ Littman, M.~L. \BBOP2017\BBCP.
\newblock \BBOQ Interactive learning from policy-dependent human feedback\BBCQ\
\newblock In {\Bem \longshort{\Proceedings \ofthe 34th \International
  \Conference on Machine Learning}{ICML}}, \BPGS\ 2285--2294.

\bibitem[\protect\BCAY{Mann, Mannor,\ \BBA\ Precup}{Mann
  et~al.}{2015}]{mann2015approximate}
Mann, T.~A., Mannor, S., \BBA\ Precup, D. \BBOP2015\BBCP.
\newblock \BBOQ Approximate value iteration with temporally extended
  actions\BBCQ\
\newblock {\Bem Journal of Artificial Intelligence Research}, {\Bem 53},
  375--438.

\bibitem[\protect\BCAY{Middleton, Klassen, Baier,\ \BBA\ McIlraith}{Middleton
  et~al.}{2020}]{middleton2020icaps}
Middleton, J., Klassen, T.~Q., Baier, J.~A., \BBA\ McIlraith, S.~A.
  \BBOP2020\BBCP.
\newblock \BBOQ {FL-AT}: A formal language–automaton transmogrifier\BBCQ\
\newblock System demonstration at \textit{The 30th \International \Conference
  on Automated Planning and \Scheduling (ICAPS)}.

\bibitem[\protect\BCAY{Mnih, Kavukcuoglu, Silver, Rusu, Veness, Bellemare,
  Graves, Riedmiller, Fidjeland, Ostrovski, et~al.}{Mnih
  et~al.}{2015}]{mnih2015human}
Mnih, V., Kavukcuoglu, K., Silver, D., Rusu, A.~A., Veness, J., Bellemare,
  M.~G., Graves, A., Riedmiller, M., Fidjeland, A.~K., Ostrovski, G., et~al.
  \BBOP2015\BBCP.
\newblock \BBOQ Human-level control through deep reinforcement learning\BBCQ\
\newblock {\Bem Nature}, {\Bem 518\/}(7540), 529--533.

\bibitem[\protect\BCAY{Neary, Xu, Wu,\ \BBA\ Topcu}{Neary
  et~al.}{2021}]{neary2020reward}
Neary, C., Xu, Z., Wu, B., \BBA\ Topcu, U. \BBOP2021\BBCP.
\newblock \BBOQ Reward machines for cooperative multi-agent reinforcement
  learning\BBCQ\
\newblock In {\Bem \longshort{\Proceedings \ofthe 20th \International
  \Conference on Autonomous Agents and Multiagent Systems}{AAMAS}}, \BPGS\
  934--942.

\bibitem[\protect\BCAY{Ng, Harada,\ \BBA\ Russell}{Ng
  et~al.}{1999}]{Ng1999shaping}
Ng, A.~Y., Harada, D., \BBA\ Russell, S.~J. \BBOP1999\BBCP.
\newblock \BBOQ Policy invariance under reward transformations: {T}heory and
  application to reward shaping\BBCQ\
\newblock In {\Bem \longshort{\Proceedings \ofthe 16th \International
  \Conference on Machine Learning}{ICML}}, \BPGS\ 278--287.

\bibitem[\protect\BCAY{Ng\ \BBA\ Russell}{Ng\ \BBA\ Russell}{2000}]{ng2000irl}
Ng, A.~Y.\BBACOMMA\  \BBA\ Russell, S.~J. \BBOP2000\BBCP.
\newblock \BBOQ Algorithms for inverse reinforcement learning\BBCQ\
\newblock In {\Bem \longshort{\Proceedings \ofthe 17th \International
  \Conference on Machine Learning}{ICML}}, \BPGS\ 663--670.

\bibitem[\protect\BCAY{Parr\ \BBA\ Russell}{Parr\ \BBA\
  Russell}{1998}]{parr1998reinforcement}
Parr, R.\BBACOMMA\  \BBA\ Russell, S.~J. \BBOP1998\BBCP.
\newblock \BBOQ Reinforcement learning with hierarchies of machines\BBCQ\
\newblock In {\Bem \longshort{\Proceedings \ofthe 11th \Conference on Advances
  in Neural Information Processing Systems}{NIPS}}, \BPGS\ 1043--1049.

\bibitem[\protect\BCAY{Pitis, Creager,\ \BBA\ Garg}{Pitis
  et~al.}{2020}]{pitis2020counterfactual}
Pitis, S., Creager, E., \BBA\ Garg, A. \BBOP2020\BBCP.
\newblock \BBOQ Counterfactual data augmentation using locally factored
  dynamics\BBCQ\
\newblock {\Bem \longshort{\Proceedings \ofthe 33rd \Conference on Advances in
  Neural Information Processing Systems}{NeurIPS}}, {\Bem 33}.

\bibitem[\protect\BCAY{Post\ \BBA\ Ye}{Post\ \BBA\ Ye}{2015}]{post2015simplex}
Post, I.\BBACOMMA\  \BBA\ Ye, Y. \BBOP2015\BBCP.
\newblock \BBOQ The simplex method is strongly polynomial for deterministic
  {M}arkov decision processes\BBCQ\
\newblock {\Bem Mathematics of Operations Research}, {\Bem 40\/}(4), 859--868.

\bibitem[\protect\BCAY{Quint, Xu, Dogan, Hakguder, Scott,\ \BBA\ Dwyer}{Quint
  et~al.}{2019}]{quint2019formal}
Quint, E., Xu, D., Dogan, H., Hakguder, Z., Scott, S., \BBA\ Dwyer, M.
  \BBOP2019\BBCP.
\newblock \BBOQ Formal language constraints for {M}arkov decision
  processes\BBCQ\
\newblock {\Bem CoRR}, {\Bem abs/1910.01074}.

\bibitem[\protect\BCAY{Rens\ \BBA\ Raskin}{Rens\ \BBA\
  Raskin}{2020}]{rens2020learning}
Rens, G.\BBACOMMA\  \BBA\ Raskin, J.-F. \BBOP2020\BBCP.
\newblock \BBOQ Learning non-{M}arkovian reward models in {MDP}s\BBCQ\
\newblock {\Bem CoRR}, {\Bem abs/2001.09293}.

\bibitem[\protect\BCAY{Ringstrom\ \BBA\ Schrater}{Ringstrom\ \BBA\
  Schrater}{2019}]{ringstrom2019constraint}
Ringstrom, T.~J.\BBACOMMA\  \BBA\ Schrater, P.~R. \BBOP2019\BBCP.
\newblock \BBOQ Constraint satisfaction propagation: non-stationary policy
  synthesis for temporal logic planning\BBCQ\
\newblock {\Bem CoRR}, {\Bem abs/1901.10405}.

\bibitem[\protect\BCAY{Shah, Li,\ \BBA\ Shah}{Shah
  et~al.}{2020}]{shah2020planning}
Shah, A., Li, S., \BBA\ Shah, J. \BBOP2020\BBCP.
\newblock \BBOQ Planning with uncertain specifications ({PUnS})\BBCQ\
\newblock {\Bem IEEE Robotics and Automation Letters}, {\Bem 5\/}(2),
  3414--3421.

\bibitem[\protect\BCAY{Shah\ \BBA\ Shah}{Shah\ \BBA\
  Shah}{2020}]{shah2020interactive}
Shah, A.\BBACOMMA\  \BBA\ Shah, J. \BBOP2020\BBCP.
\newblock \BBOQ Interactive robot training for non-{M}arkov tasks\BBCQ\
\newblock {\Bem CoRR}, {\Bem abs/2003.02232}.

\bibitem[\protect\BCAY{Sidor}{Sidor}{2016}]{sidor2016reinforcement}
Sidor, S. \BBOP2016\BBCP.
\newblock {\Bem Reinforcement learning with natural language signals}.
\newblock Ph.D.\ thesis, Massachusetts Institute of Technology.

\bibitem[\protect\BCAY{Singh}{Singh}{1992a}]{singh1992reinforcement}
Singh, S. \BBOP1992a\BBCP.
\newblock \BBOQ Reinforcement learning with a hierarchy of abstract
  models\BBCQ\
\newblock In {\Bem \longshort{\Proceedings \ofthe 10th National \Conference on
  \AI{}}{AAAI}}, \BPGS\ 202--207.

\bibitem[\protect\BCAY{Singh}{Singh}{1992b}]{singh1992transfer}
Singh, S. \BBOP1992b\BBCP.
\newblock \BBOQ Transfer of learning by composing solutions of elemental
  sequential tasks\BBCQ\
\newblock {\Bem Machine Learning}, {\Bem 8\/}(3-4), 323--339.

\bibitem[\protect\BCAY{Sutton\ \BBA\ Barto}{Sutton\ \BBA\
  Barto}{1998}]{sutton1998reinforcement}
Sutton, R.~S.\BBACOMMA\  \BBA\ Barto, A.~G. \BBOP1998\BBCP.
\newblock {\Bem Reinforcement learning - an introduction}.
\newblock Adaptive computation and machine learning. {MIT} Press.

\bibitem[\protect\BCAY{Sutton, Precup,\ \BBA\ Singh}{Sutton
  et~al.}{1999}]{sutton1999between}
Sutton, R.~S., Precup, D., \BBA\ Singh, S. \BBOP1999\BBCP.
\newblock \BBOQ Between {MDP}s and semi-{MDP}s: {A} framework for temporal
  abstraction in reinforcement learning\BBCQ\
\newblock {\Bem Artificial intelligence}, {\Bem 112\/}(1-2), 181--211.

\bibitem[\protect\BCAY{Thomaz, Hoffman,\ \BBA\ Breazeal}{Thomaz
  et~al.}{2006}]{thomaz2006reinforcement2}
Thomaz, A.~L., Hoffman, G., \BBA\ Breazeal, C. \BBOP2006\BBCP.
\newblock \BBOQ Reinforcement learning with human teachers: {U}nderstanding how
  people want to teach robots\BBCQ\
\newblock In {\Bem \longshort{\Proceedings \ofthe 15th IEEE \International
  Symposium on Robot and Human Interactive Communication}{ROMAN}}, \BPGS\
  352--357.

\bibitem[\protect\BCAY{{Toro Icarte}, Klassen, Valenzano,\ \BBA\
  McIlraith}{{Toro Icarte} et~al.}{2017}]{toro2017rldm}
{Toro Icarte}, R., Klassen, T.~Q., Valenzano, R., \BBA\ McIlraith, S.~A.
  \BBOP2017\BBCP.
\newblock \BBOQ Using advice in model-based reinforcement learning\BBCQ\
\newblock In {\Bem \longshort{\Proceedings \ofthe 3rd Multi-disciplinary
  \Conference on Reinforcement Learning and Decision}{RLDM}}, \BPGS\ 199--203.

\bibitem[\protect\BCAY{{Toro Icarte}, Klassen, Valenzano,\ \BBA\
  McIlraith}{{Toro Icarte} et~al.}{2018a}]{toroicarteKVM2018advice}
{Toro Icarte}, R., Klassen, T.~Q., Valenzano, R., \BBA\ McIlraith, S.~A.
  \BBOP2018a\BBCP.
\newblock \BBOQ Advice-based exploration in model-based reinforcement
  learning\BBCQ\
\newblock In {\Bem \longshort{\Proceedings \ofthe 31st Canadian \Conference on
  Artificial Intelligence}{Canadian AI}}, \BPGS\ 72--83.

\bibitem[\protect\BCAY{{Toro Icarte}, Klassen, Valenzano,\ \BBA\
  McIlraith}{{Toro Icarte} et~al.}{2018b}]{aamas2018lpopl}
{Toro Icarte}, R., Klassen, T.~Q., Valenzano, R., \BBA\ McIlraith, S.~A.
  \BBOP2018b\BBCP.
\newblock \BBOQ Teaching multiple tasks to an {RL} agent using {LTL}\BBCQ\
\newblock In {\Bem \longshort{\Proceedings \ofthe 17th \International
  \Conference on Autonomous Agents and Multiagent Systems}{AAMAS}}.
\newblock 452--461.

\bibitem[\protect\BCAY{{Toro Icarte}, Klassen, Valenzano,\ \BBA\
  McIlraith}{{Toro Icarte} et~al.}{2018c}]{icml2018rms}
{Toro Icarte}, R., Klassen, T.~Q., Valenzano, R., \BBA\ McIlraith, S.~A.
  \BBOP2018c\BBCP.
\newblock \BBOQ Using reward machines for high-level task specification and
  decomposition in reinforcement learning\BBCQ\
\newblock In {\Bem \longshort{\Proceedings \ofthe 35th \International
  \Conference on Machine Learning}{ICML}}, \BPGS\ 2112--2121.

\bibitem[\protect\BCAY{Toro~Icarte, Waldie, Klassen, Valenzano, Castro,\ \BBA\
  McIlraith}{Toro~Icarte et~al.}{2021}]{icarte2021learning}
Toro~Icarte, R., Waldie, E., Klassen, T.~Q., Valenzano, R., Castro, M.~P.,
  \BBA\ McIlraith, S.~A. \BBOP2021\BBCP.
\newblock \BBOQ Learning reward machines: A study in partially observable
  reinforcement learning\BBCQ\
\newblock {\Bem CoRR}, {\Bem abs/2112.09477}.

\bibitem[\protect\BCAY{Toro~Icarte, Waldie, Klassen, Valenzano, Castro,\ \BBA\
  McIlraith}{Toro~Icarte et~al.}{2019a}]{tor-etal-neurips19}
Toro~Icarte, R., Waldie, E., Klassen, T.~Q., Valenzano, R., Castro, M.~P.,
  \BBA\ McIlraith, S.~A. \BBOP2019a\BBCP.
\newblock \BBOQ Learning reward machines for partially observable reinforcement
  learning\BBCQ\
\newblock In {\Bem \longshort{\Proceedings \ofthe 32nd \Conference on Advances
  in Neural Information Processing Systems}{NeurIPS}}, \BPGS\ 15497--15508.

\bibitem[\protect\BCAY{Toro~Icarte, Waldie, Klassen, Valenzano, Castro,\ \BBA\
  McIlraith}{Toro~Icarte et~al.}{2019b}]{toro2019rldm}
Toro~Icarte, R., Waldie, E., Klassen, T.~Q., Valenzano, R., Castro, M.~P.,
  \BBA\ McIlraith, S.~A. \BBOP2019b\BBCP.
\newblock \BBOQ Searching for {M}arkovian subproblems to address partially
  observable reinforcement learning\BBCQ\
\newblock In {\Bem \longshort{\Proceedings \ofthe 4th Multi-disciplinary
  \Conference on Reinforcement Learning and Decision}{RLDM}}, \BPGS\ 22--26.

\bibitem[\protect\BCAY{Vaezipoor, Li, Toro~Icarte,\ \BBA\ McIlraith}{Vaezipoor
  et~al.}{2021}]{vaezipoor2021ltl2action}
Vaezipoor, P., Li, A., Toro~Icarte, R., \BBA\ McIlraith, S. \BBOP2021\BBCP.
\newblock \BBOQ {LTL2Action}: Generalizing {LTL} instructions for multi-task
  {RL}\BBCQ\
\newblock In {\Bem \longshort{\Proceedings \ofthe 38th \International
  \Conference on Machine Learning}{ICML}}, \BPGS\ 10497--10508.

\bibitem[\protect\BCAY{Van~Hasselt, Guez,\ \BBA\ Silver}{Van~Hasselt
  et~al.}{2016}]{van2016deep}
Van~Hasselt, H., Guez, A., \BBA\ Silver, D. \BBOP2016\BBCP.
\newblock \BBOQ Deep reinforcement learning with {D}ouble {Q}-learning\BBCQ\
\newblock In {\Bem \longshort{\Proceedings \ofthe 30th AAAI \Conference on
  \AI{}}{AAAI}}, \BPGS\ 2094--2100.

\bibitem[\protect\BCAY{Velasquez, Beckus, Dohmen, Trivedi, Topper,\ \BBA\
  Atia}{Velasquez et~al.}{2021}]{velasquez2021learning}
Velasquez, A., Beckus, A., Dohmen, T., Trivedi, A., Topper, N., \BBA\ Atia, G.
  \BBOP2021\BBCP.
\newblock \BBOQ Learning probabilistic reward machines from non-{M}arkovian
  stochastic reward processes\BBCQ\
\newblock {\Bem CoRR}, {\Bem abs/2107.04633}.

\bibitem[\protect\BCAY{Watkins\ \BBA\ Dayan}{Watkins\ \BBA\
  Dayan}{1992}]{watkins1992q}
Watkins, C. J. C.~H.\BBACOMMA\  \BBA\ Dayan, P. \BBOP1992\BBCP.
\newblock \BBOQ Q-learning\BBCQ\
\newblock {\Bem Machine learning}, {\Bem 8\/}(3-4), 279--292.

\bibitem[\protect\BCAY{Xu, Gavran, Ahmad, Majumdar, Neider, Topcu,\ \BBA\
  Wu}{Xu et~al.}{2020a}]{xu2020joint}
Xu, Z., Gavran, I., Ahmad, Y., Majumdar, R., Neider, D., Topcu, U., \BBA\ Wu,
  B. \BBOP2020a\BBCP.
\newblock \BBOQ Joint inference of reward machines and policies for
  reinforcement learning\BBCQ\
\newblock In {\Bem \longshort{\Proceedings \ofthe 30th \International
  \Conference on Automated Planning and \Scheduling}{ICAPS}},
  \lowercase{\BVOL}~30, \BPGS\ 590--598.

\bibitem[\protect\BCAY{Xu, Wu, Neider,\ \BBA\ Topcu}{Xu
  et~al.}{2020b}]{xu2020active}
Xu, Z., Wu, B., Neider, D., \BBA\ Topcu, U. \BBOP2020b\BBCP.
\newblock \BBOQ Active finite reward automaton inference and reinforcement
  learning using queries and counterexamples\BBCQ\
\newblock {\Bem CoRR}, {\Bem abs/2006.15714}.

\bibitem[\protect\BCAY{Yuan, Hasanbeig, Abate,\ \BBA\ Kroening}{Yuan
  et~al.}{2019}]{Yuan2019Modular}
Yuan, L.~Z., Hasanbeig, M., Abate, A., \BBA\ Kroening, D. \BBOP2019\BBCP.
\newblock \BBOQ Modular deep reinforcement learning with temporal logic
  specifications\BBCQ\
\newblock {\Bem CoRR}, {\Bem abs/1909.11591}.

\bibitem[\protect\BCAY{Zheng, Yu, Chen, Hao,\ \BBA\ Zhuo}{Zheng
  et~al.}{2021}]{zheng2021lifelong}
Zheng, X., Yu, C., Chen, C., Hao, J., \BBA\ Zhuo, H.~H. \BBOP2021\BBCP.
\newblock \BBOQ Lifelong reinforcement learning with temporal logic formulas
  and reward machines\BBCQ\
\newblock {\Bem CoRR}, {\Bem abs/2111.09475}.

\bibitem[\protect\BCAY{Ziebart, Maas, Bagnell,\ \BBA\ Dey}{Ziebart
  et~al.}{2008}]{ziebart2008maxent}
Ziebart, B.~D., Maas, A.~L., Bagnell, J.~A., \BBA\ Dey, A.~K. \BBOP2008\BBCP.
\newblock \BBOQ Maximum entropy inverse reinforcement learning\BBCQ\
\newblock In {\Bem \longshort{\Proceedings \ofthe 23rd AAAI \Conference on
  \AI{}}{AAAI}}, \BPGS\ 1433--1438.

\end{thebibliography}

\end{document}